\newtheorem{theorem}{Theorem}[section]
\newtheorem{lemma}[theorem]{Lemma}
\newtheorem{corollary}[theorem]{Corollary}
\newtheorem{assumption}[theorem]{Assumption}
\newtheorem{remark}[theorem]{Remark}
\title{Max-affine regression via first-order methods}
\author{Seonho Kim and Kiryung Lee}
\begin{document}

\maketitle

% REQUIRED
\begin{abstract}
We consider regression of a max-affine model that produces a piecewise linear model by combining affine models via the max function. 
The max-affine model ubiquitously arises in applications in signal processing and statistics including multiclass classification, auction problems, and convex regression. 
It also generalizes phase retrieval and learning rectifier linear unit activation functions. 
We present a non-asymptotic convergence analysis of gradient descent (GD) and mini-batch stochastic gradient descent (SGD) for max-affine regression when the model is observed at random locations following the sub-Gaussianity and an anti-concentration with additive sub-Gaussian noise. 
Under these assumptions, a suitably initialized GD and SGD converge linearly to a neighborhood of the ground truth specified by the corresponding error bound. 
We provide numerical results that corroborate the theoretical finding. 
{Importantly, SGD not only converges faster in run time with fewer observations than alternating minimization and GD in the noiseless scenario but also outperforms them in low-sample scenarios with noise.}

% due to its low per-iteration cost, SGD converges faster than alternating minimization and GD in run time. Also, SGD outperforms the alternating minimization and GD in noise scenarios.

% We consider the regression problem on the max-affine model, which combines $k$ affine models via the max function.
% The max-affine model has been adopted to obtain a tractable approximation of convex regression. It also generalizes the forward model in phase retrieval and rectifier linear unit activation function. 
% As one considers a least squares estimator to max-affine regression, due to the nonlinearity in the model, the resulting optimization has been shown NP-hard.
% We present a non-asymptotic convergence analysis of gradient descent for max-affine regression when the inputs follow the sub-Gaussianity with anti-concentration and the additive noise is modeled as sub-Gaussian. 
% In this scenario, if it is suitably initialized from a certain neighborhood of the ground truth, gradient descent converges linearly. 
% Moreover, we provide numerical results for gradient descent from the known spectral initialization, which corroborates with the theoretical results. 
\end{abstract}

% REQUIRED

\section{Introduction}
\label{sec:intro}

The \emph{max-affine} model combines $k$ affine models in the form of 
\begin{equation}
\label{eq:def_max_affine_model}
y = \max_{j \in [k]} \left(\langle \mb x, \mb \theta_j^\star \rangle+b_j^\star\right)
\end{equation} 
to produce a piecewise-linear mutivariate functions, where $\mb x$ and $y$ respectively denote the covariate and the response, and $[k]$ denotes the set $\{1,\dots,k\}$.
%The max-affine model in \eqref{eq:def_max_affine_model} produces a piecewise linear multivariate function by combining $k$ linear models. 
The max-affine model frequently arises in applications of statistics, machine learning, economics, and signal processing. 
For example, the max-affine model has been adopted for multiclass classification problems \cite{crammer2001algorithmic,daniely2012multiclass} and simple auction problems \cite{morgenstern2016learning,rubinstein2018simple}. 

We consider a regression of the max-affine model in \eqref{eq:def_max_affine_model} via least squares 
\begin{equation}
\label{eq:loss_def_max0}
\min_{\{\mb \theta_j,b_j\}_{j=1}^k}\frac{1}{2n}\sum_{i=1}^{n}\left(y_i- \max_{j \in [k]} (\langle\mb x_i,\mb \theta_j\rangle+b_j)\right)^2
\end{equation}
from statistical observations $\{(\mb x_i, y_i)\}_{i=1}^{n}$ potentially corrupted with noise. 
% Specifically, in the multiclass SVM \cite{crammer2001algorithmic,daniely2012multiclass}, the hypothesis has form of 
% \[
% y=\argmax_{j\in[k]}\left(\langle \mb x, \mb \theta_j^\star \rangle+b_j^\star\right).
% \]
% Furthermore, in simple auction problems \cite{morgenstern2016learning,rubinstein2018simple}, suppose there are $d$ items and a single buyer has a valuation function $v(S)$ for a set of $d$ items. For a set $S\subset[p]$, we denote $P(S)$ as a power set of $S$. Then, the valuation function is defined as 
% \[
% v(S)=\max_{T\in P(S)}\langle\mb x,\mb \theta_T\rangle,
% \] where $\mb \theta_T=[\bbone_{\{1\in T\}},\ldots,\bbone_{\{p\in T\}}]\in\{0,1\}^p$ given a set $T$.
%While QP-type optimizations for  \eqref{eq:max_affine_convexreg} are convex and therefore solvable via convex optimization packages, \eqref{eq:loss_def_max0} is a non-convex and therefore is tricky to obtain a minimizer. 
A suite of numerical methods has been proposed to solve the nonconvex optimization in  \eqref{eq:loss_def_max0} (e.g., \cite{magnani2009convex,toriello2012fitting,hannah2013multivariate,balazs2016convex}). 
The \emph{least-squares partition algorithm} \cite{magnani2009convex} iteratively refines the parameter estimate by alternating between the partition and the least-squares steps when the number of affine models $k$ is known a priori. 
The partitioning step classifies the inputs $\mb x_1, \dots, \mb x_n$ with respect to the maximizing affine models given estimated model parameters. 
The least-squares step updates the parameters for each affine model by using the corresponding observations. 
Later variations of the alternating minimization algorithm used an adaptive search for unknown $k$ \cite{hannah2013multivariate,balazs2016convex}. 
The consistency of these estimators has been derived. 
In more recent papers, Ghosh et al. \cite{ghosh2019max,ghosh2020max,ghosh2021max} established finite-sample analysis of the \emph{alternating minimization} (AM) estimator \cite{magnani2009convex} for the special case when the observations are generated from a ground-truth model. 
One can interpret their analysis through the lens of the popular \emph{teacher-student framework} \cite{mace1998statistical}. 
This framework has been widely adopted in statistical mechanics \cite{mace1998statistical,engel2001statistical} and machine learning \cite{zhong2017recovery,goldt2019dynamics,zhang2019learning,hu2020sharp}. 
It provides a theoretical understanding of how a specific model is trained and generalized through a ground-truth generative model \cite{hu2020sharp}. 
In this framework, a max-affine model (student) is trained by data generated from a ground-truth max-affine model (teacher) from $k$ fixed affine models. 
By using the provided data, the student model recovers parameters that produce the ground-truth model via AM. 
Since the max affine model is invariant under the permutation of the component affine models, the minimizer to \eqref{eq:loss_def_max0} is determined only up to the corresponding equivalence class. 
Ghosh et al. \cite{ghosh2021max} established a finite-sample analysis of AM under the standard Gaussian covariate assumption with independent stochastic noise. 
They showed that a suitably initialized alternating minimization converges linearly to a consistent estimate of the ground-truth parameters along with a non-asymptotic error bound. 
Moreover, they proposed and analyzed a spectral method that provides the desired initialization. 
They also further extended the theory to a generalized scenario with relaxed assumptions on the covariate model \cite{ghosh2019max,ghosh2020max}. 
%{\color{red} Hence, it is natural to estimate the ground-truth parameters even up to the inherent equivalent class. }
%{\color{red}In \cite{ghosh2021max}, they presented the local linear convergence theory of AM. The local convergence refers to the convergence of AM starting from a neighborhood of any instance of the ground truth class. To guarantee the local linear convergence of AM, they also proposed a spectral initialization method that provides an initial estimate within the neighborhood of an instance of the ground truth class.} As a result, they showed the linear convergence of AM with spectral initialization under the standard Gaussian covariate assumption in the presence of sub-Gaussian noise.
%Moreover, they extended the local convergence theory of AM algorithm to a more general sub-Gaussian covariate model with the anti-concentration property \cite{ghosh2019max,ghosh2020max}. 
%Their analysis outlines the sufficient number of samples required to accurately recover the parameters starting from a suitably initialized parameter. %Furthermore, this generalized the existing theories of phase retrieval and ReLU regression with respect to AM \cite{waldspurger2018phase,zhang2020phase} by \eqref{eq:phaseretrievalexample} and \eqref{eq:reluexample}.

In this paper, we present analogous theoretical and numerical results on max-affine regression by first-order methods including \emph{gradient descent} (GD) and \emph{stochastic gradient descent} (SGD).
The first-order methods have been widely used to solve various nonlinear least squares problems in machine learning \cite{goodfellow2016deep,finn2017model,sutton2018reinforcement,kairouz2021advances}. 
%, max-affine regression by these methods has not been thoroughly investigated yet. 
We observe that GD and SGD also perform competitively on max-affine regression compared to AM. 
In particular, SGD converges significantly faster (in run time) than AM in a noise-free scenario. 
\Cref{fig:linearconvergence} compares AM, GD, and a mini-batch SGD on random $50$ trials of max-affine regression where the ground-truth parameter vectors ${\{\mb \beta_j^{\star}\}}_{j=1}^k$ are selected randomly from the unit sphere. 
We plot the median of relative errors versus the average run time where the relative error is calculated as
\begin{equation*}
\label{eq:def_relative_error}
\min_{\pi\in\mathrm{Perm}([k])}\log_{10}\left({\sum_{j=1}^{k}\|\hat{\mb \beta}_{\pi(j)} - \mb \beta_j^\star\|_2^2/\sum_{j=1}^{k}\|\mb \beta_{j}^\star\|_2^2}\right)
\end{equation*} 
with $\mathrm{Perm}([k])$ and $\{\hat{\mb \beta}_j\}_{j=1}^k$ denoting the set of all possible permutations over $[k]$ and the 
estimated parameters, respectively.
%While the theoretical analysis of AM and GD showed that both algorithms converge at least linearly to the ground-truth under comparable sample-complexity conditions, empirically, GD converges slower than AM in run time. 
%On the other hand, a mini-batch SGD converges much faster than AM in this experiment. 
Our main result provides a theoretical analysis of SGD that explains this empirical observation. 

\begin{figure}[th]
\centering
\includegraphics[scale=0.35]{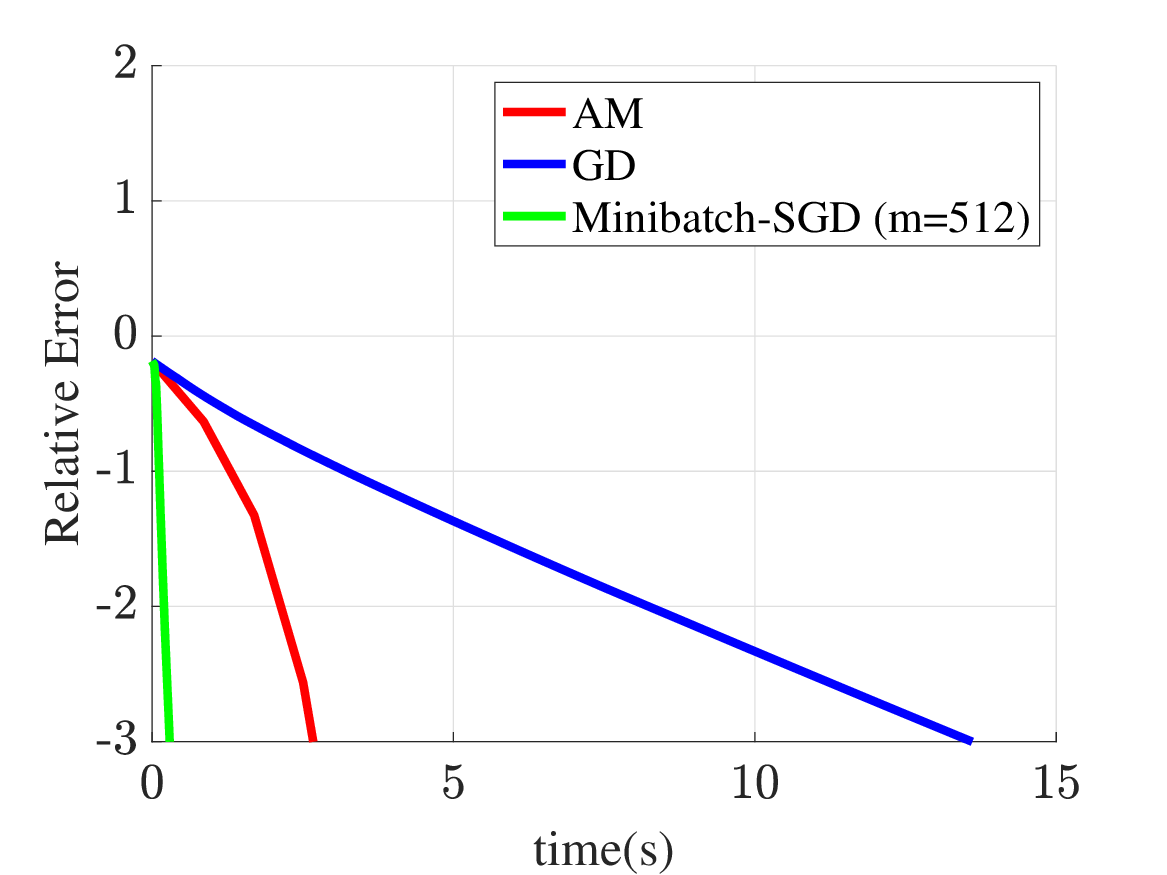}%{initGauss.eps}
\caption{Convergence of estimators for noise-free max-affine regression ($k=3$, $d=500$, and $n=8,000$).
%Convergence of mini-batch stochastic gradient descent in the noiseless case ($m=128,k=3,p=150$, and $n=3,000$).
}
\label{fig:linearconvergence}
% \vspace{-0.1in}
\end{figure}

\subsection{Main results}

We derive convergence analyses of GD and mini-batch SGD under the same covariate and noise assumptions in the previous work on AM by Ghosh et al. \cite{ghosh2019max}. 
They assumed that covariates $\mb x_1, \dots, \mb x_n$ are independent copies of a random vector $\mb x$ that satisfies the sub-Gaussianity and anti-concentration defined below. 

\begin{assumption}[Sub-Gaussianity]
\label{main_assumption}
The covariate distribution satisfies \[
\norm{\langle \mb v, \mb x \rangle}_{\psi_2} \leq \eta, 
\quad \forall \mb v \in \mathbb{S}^{d-1},
\]
where $\|\cdot\|_{\psi_2}$ and $\mathbb{S}^{d-1}$ denote   the sub-Gaussian norm (i.e., see \cite[Equation~2.13]{vershynin2018high}) and the unit sphere in $\ell_2^d$, respectively. 
\end{assumption}

\begin{assumption}[Anti-concentration]
\label{main_assumption2}
The covariate distribution satisfies 
\[
\sup_{w \in \mathbb{R}, \mb v \in \mathbb{S}^{d-1}} \P( (\langle \mb v, \mb x \rangle + w)^2 \leq \epsilon) \leq (\gamma \epsilon)^\zeta, \quad \forall \epsilon>0. 
\]
\end{assumption}

\noindent The class of covariate distributions by Assumptions~\ref{main_assumption} and \ref{main_assumption2} generalizes the standard independent and identically distributed Gaussian distribution. 
For example, the uniform and beta distributions satisfy Assumptions~\ref{main_assumption} and \ref{main_assumption2}. 
Therefore, the theoretical result under this relaxed covariate model will apply to a wider range of applications. 
They also assumed that observations are corrupted with independent additive $\sigma$-sub-Gaussian noise. 

This paper establishes the first theoretical analysis of 
%the first-order methods 
GD and mini-batch SGD 
for max-affine regression. 
%under these assumptions. 
%First, we establish 
The following pseudo-theorem demonstrates that GD shows a local linear convergence under the above assumptions. 
%First, we present an informal version of the main result for GD.

\begin{theorem}[Informal]
%\noindent\textbf{Theorem~$1$} \textit{(local convergence)}
%\textit{
Let $\mb \beta^\star\in\mathbb{R}^{k(d+1)}$ denote the column vector that collects all ground-truth parameters $(\mb \theta^\star_j, b^\star_j)_{j\in [k]}$. 
Given $\tilde{O}(C_{\mb \beta^\star} kd(k^3\vee \sigma^2))$ observations, a suitably initialized GD for max-affine regression converges linearly to an estimate of $\bm{\beta}^\star$ with $\ell_2$-error scaling as $\tilde{O}(\sigma k^2\sqrt{d/n})$, where $C_{\mb \beta^\star}$ is a constant that implicitly depends on $k$ through $\mb \beta^\star$ but is independent of $d$.
%}
\end{theorem}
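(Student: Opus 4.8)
The plan is to prove \emph{local} linear convergence by verifying a one-point regularity condition (restricted strong convexity together with restricted smoothness) for the empirical loss $L(\mb\beta)$ of \eqref{eq:loss_def_max0} on a neighborhood $\mc B = \{\mb\beta : \norm{\mb\beta-\mb\beta^\star}_2\le r\}$ of the ground truth, and then converting it into a contraction for the gradient-descent iterates; the hypothesis ``suitably initialized'' is taken to place $\mb\beta^0$ inside $\mc B$, exactly as a spectral initialization does for AM. Writing $f(\mb x;\mb\beta)=\max_{j}(\langle\mb x,\mb\theta_j\rangle+b_j)$ and letting $S_j(\mb\beta)$ be the set of indices $i$ for which the $j$th affine piece attains the maximum at $\mb x_i$, the partial gradient in the $j$th block is
\[
\nabla_{(\mb\theta_j,b_j)}L(\mb\beta) = \frac{1}{n}\sum_{i\in S_j(\mb\beta)}\big(f(\mb x_i;\mb\beta)-y_i\big)\,\big(\mb x_i,\,1\big),
\]
so the gradient is piecewise linear in $\mb\beta$ with breakpoints on the data-dependent partition boundaries. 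I would decompose $y_i=f(\mb x_i;\mb\beta^\star)+\xi_i$ and split each residual into a \emph{signal} part $f(\mb x_i;\mb\beta)-f(\mb x_i;\mb\beta^\star)$ and a \emph{noise} part $-\xi_i$; the signal part drives the strong-convexity lower bound, while the noise part produces the irreducible statistical floor.

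For the signal part I would first work at the \emph{population} level and establish
\[
\big\langle \nabla L(\mb\beta)-\nabla L(\mb\beta^\star),\,\mb\beta-\mb\beta^\star\big\rangle \;\ge\; \mu\,\norm{\mb\beta-\mb\beta^\star}_2^2
\]
for all $\mb\beta\in\mc B$, with $\mu$ depending on the geometry of $\mb\beta^\star$ through $C_{\mb\beta^\star}$ (essentially the minimal probability mass on which each affine piece is strictly dominant). Here Assumption~\ref{main_assumption2} is the crucial tool: when $\mb\beta$ moves away from $\mb\beta^\star$, the partitions $\{S_j(\mb\beta)\}$ and $\{S_j(\mb\beta^\star)\}$ disagree only on a ``boundary'' region of covariates for which two affine values lie within $O(\norm{\mb\beta-\mb\beta^\star}_2)$ of each other, and anti-concentration shows this region has probability $O((\gamma\norm{\mb\beta-\mb\beta^\star}_2)^\zeta)$. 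Thus the partition is stable up to a controllably small-measure set, and on the stable set the quadratic form reduces to a sum of $k$ restricted second-moment matrices $\E[(\mb x,1)(\mb x,1)^{\T}\bbone_{R_j}]$ (with $R_j$ the region on which piece $j$ attains the max) whose smallest eigenvalues are bounded below via Assumptions~\ref{main_assumption} and~\ref{main_assumption2}.

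Next I would pass from population to empirical quantities by a uniform concentration argument. For each \emph{fixed} $\mb\beta$, sub-Gaussianity (Assumption~\ref{main_assumption}) gives concentration of the per-block gradient around its expectation via Bernstein-type bounds on the quadratic forms in $\mb x_i$; a covering-net argument over $\mc B$, combined with the Lipschitz control of the partition boundaries from anti-concentration, upgrades this to a bound uniform over $\mc B$. This is where the sample-complexity requirement $n=\tilde O(C_{\mb\beta^\star}kd(k^3\vee\sigma^2))$ emerges: the $kd$ factor is the ambient dimension of $\mb\beta$, the $k^3$ factor guarantees that \emph{every} block retains $\Omega(n/k)$ active samples so its empirical second-moment matrix inverts stably, and the $\sigma^2$ factor ensures the noise-induced fluctuation is dominated. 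Bounding the noise part $\tfrac1n\sum_{i\in S_j}\xi_i(\mb x_i,1)$ via sub-exponential (product) tail bounds yields a per-block fluctuation of order $\tilde O(\sigma\sqrt{d/n})$; after dividing by the strong-convexity modulus $\mu$ (which scales inversely with a polynomial in $k$) and summing over blocks, this produces the stated statistical error $\epsilon_{\mathrm{stat}}=\tilde O(\sigma k^2\sqrt{d/n})$. Combining restricted strong convexity, a matching restricted-smoothness bound $\norm{\nabla L(\mb\beta)-\nabla L(\mb\beta^\star)}_2\le L_{\mathrm{sm}}\norm{\mb\beta-\mb\beta^\star}_2$, and the noise bound in the expansion of $\norm{\mb\beta^{t+1}-\mb\beta^\star}_2^2$ under the GD update gives $\norm{\mb\beta^{t+1}-\mb\beta^\star}_2\le\rho\,\norm{\mb\beta^t-\mb\beta^\star}_2+c\,\epsilon_{\mathrm{stat}}$ with $\rho<1$ for a suitable step size, and iterating yields linear convergence to an $O(\epsilon_{\mathrm{stat}})$ neighborhood.

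The main obstacle is the analysis of the boundary/mismatch region where the estimated and true partitions disagree. Because the gradient of the max is discontinuous there, any naive bound loses control, and the whole argument rests on showing (i) that this region carries only $O((\gamma\norm{\mb\beta-\mb\beta^\star}_2)^\zeta)$ probability mass, so its contribution to the strong-convexity defect is a lower-order perturbation, and (ii) that no affine piece can ``collapse'' along the GD trajectory, i.e., each $S_j(\mb\beta)$ stays active with $\Omega(n/k)$ samples throughout $\mc B$. Keeping both controls uniform over $\mc B$, and tracking how the geometry constant $C_{\mb\beta^\star}$ (the separation of the affine pieces) simultaneously enters the radius $r$, the rate $\rho$, and the sample complexity, is the delicate part; everything else reduces to standard sub-Gaussian concentration.
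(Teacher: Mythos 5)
Your proposal is correct in outline and shares the paper's proof architecture almost exactly: the clean/noise (signal/noise) residual decomposition, a one-point regularity condition consisting of a per-block curvature lower bound and gradient-norm upper bound holding uniformly over a neighborhood of $\mb \beta^\star$ (the paper's \Cref{lem:lwb_gradient} and condition \eqref{eq:regulcond_maxlinear}), control of the partition-mismatch region via Assumption~\ref{main_assumption2} (\Cref{lem:bnd_vol,lem:bnd_partial}), a guarantee that every affine piece keeps a constant fraction of active samples (\Cref{lem:lwb_number_ball}), and finally a contraction combined with an induction that the iterates never leave the neighborhood. Where you genuinely diverge is the concentration machinery used to make everything uniform over the neighborhood. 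You propose a population-to-empirical passage via covering nets plus ``Lipschitz control of the partition boundaries''; the paper never uses nets. Instead it observes that every event appearing in the analysis is membership in a polytope cut out by at most $2k$ half-spaces, and invokes VC/growth-function bounds (\Cref{lem:uniform,lem:bnd_growth_C}, \Cref{cor:concent_emp_prob}) together with the small-ball method (\Cref{lem:lwb_trunc}) and the Tan--Vershynin worst-case eigenvalue bound (\Cref{lem:upperbound}). This distinction matters for two reasons. First, your net argument has a subtle circularity to resolve: because the gradient is discontinuous in $\mb \beta$, closeness of parameters only bounds the \emph{population} measure of the region where the partitions disagree, and controlling the \emph{empirical} measure of that region uniformly requires yet another uniform law --- exactly what the net was supposed to deliver. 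This is the route of Ghosh et al.\ and it can be made to work, but it is delicate and lossy; the paper's VC argument absorbs the discontinuity in one step and is what produces the sharper bound (tighter by a factor $\alpha^{\zeta^{-1}}$ in \Cref{lem:lwb_trunc} versus the net-based analogue), which in turn drives the improved error bound claimed in the theorem. Second, your noise bookkeeping slightly understates the per-block fluctuation: uniformity over the data-dependent partitions costs a union bound over a class of VC dimension $O(k(d+1))$, so the per-block noise gradient is $\tilde{O}(\sigma\sqrt{kd/n})$ (as in \eqref{eq:res:lem:upbnd_noisegrad}) rather than $\tilde{O}(\sigma\sqrt{d/n})$; your final rate $\tilde{O}(\sigma k^2\sqrt{d/n})$ still matches the paper's because the deficit is compensated by your cruder accounting of the curvature modulus, but the intermediate constants would not survive a careful derivation as written.
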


The error bound by this theorem improves upon the best-known result on max-affine regression achieved by AM \cite[Theorem~2]{ghosh2019max}. 
The error bound for AM is larger by a factor that grows at least as $k^{-1+2\zeta^{-1}}$. 
We also present an analogous analysis for SGD. 
A specification for the noise-free observation scenario is stated as follows. 

\begin{theorem}[Informal]
A suitably initialized mini-batch SGD for max-affine regression with $\tilde{O}(C_{\bm{\beta}^\star} k^9 d)$ noise-free observations converges linearly to the ground truth $\bm{\beta}^\star$ for any batch size. 
\end{theorem}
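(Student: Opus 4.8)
The plan is to prove a one-step contraction of the squared error in conditional expectation and then iterate. Let $\bm{\beta}^t$ denote the iterate, $g_t$ the mini-batch gradient evaluated at $\bm{\beta}^t$, and $\mu$ the step size. Expanding the update through the elementary identity
\[
\norm{\bm{\beta}^{t+1}-\bm{\beta}^\star}_2^2
=\norm{\bm{\beta}^{t}-\bm{\beta}^\star}_2^2
-2\mu\,\langle g_t,\bm{\beta}^t-\bm{\beta}^\star\rangle
+\mu^2\norm{g_t}_2^2
\]
and taking the conditional expectation given $\bm{\beta}^t$, I reduce the whole argument to two estimates: a lower bound on the correlation $\langle \E[g_t\mid\bm{\beta}^t],\,\bm{\beta}^t-\bm{\beta}^\star\rangle$ of the form $\alpha\norm{\bm{\beta}^t-\bm{\beta}^\star}_2^2$ (a restricted, or one-point, strong convexity), and an upper bound on the second moment $\E[\norm{g_t}_2^2\mid\bm{\beta}^t]\le L\norm{\bm{\beta}^t-\bm{\beta}^\star}_2^2$. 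The crucial feature of the noise-free regime is that at $\bm{\beta}^\star$ every sample is fit exactly, so its per-sample gradient vanishes; this interpolation property is precisely what lets the second moment be controlled by $\norm{\bm{\beta}^t-\bm{\beta}^\star}_2^2$ rather than by a nonzero variance floor, and hence what permits linear convergence to the exact ground truth for any batch size.

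First I would carry out the geometry at the population level. Writing the augmented covariate $\tilde{\bm{x}}=(\bm{x},1)$, the per-sample gradient is supported on the student's active index $\hat{\jmath}(\bm{x})=\argmax_{j}\langle\tilde{\bm{x}},\bm{\beta}_j\rangle$ and equals $-(y-\langle\tilde{\bm{x}},\bm{\beta}_{\hat{\jmath}}\rangle)\,\tilde{\bm{x}}$ in that block. Decomposing the expectation over the regions where the student's active index agrees with the teacher's ($\hat{\jmath}=j^\star$) and where it disagrees, the agreement regions contribute a positive-definite quadratic form after invoking sub-Gaussianity (Assumption~\ref{main_assumption}) to lower bound the second moment of $\tilde{\bm{x}}$ restricted to each teacher cell, which yields the desired $\alpha\norm{\bm{\beta}^t-\bm{\beta}^\star}_2^2$ term. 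The disagreement regions lie in a thin neighborhood of the teacher's partition boundaries, and here the anti-concentration in Assumption~\ref{main_assumption2} is essential: it bounds the probability that $\langle\tilde{\bm{x}},\bm{\beta}_i^\star-\bm{\beta}_j^\star\rangle$ is small, so the measure of the mismatched region is $O(\norm{\bm{\beta}^t-\bm{\beta}^\star}_2)$ and its contribution is absorbed into the main term once $\bm{\beta}^t$ is close enough to $\bm{\beta}^\star$.

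Next I would transfer these population estimates to the mini-batch gradient. Since each block of the gradient is a sub-Gaussian-weighted sum and the active partition is piecewise constant in $\bm{\beta}$, I would establish uniform concentration over a Euclidean net of the local neighborhood via a union bound together with sub-Gaussian maximal inequalities; matching the net resolution to the anti-concentration exponent $\zeta$ is what produces the stated sample complexity $\tilde{O}(C_{\bm{\beta}^\star}k^9 d)$, with the $d$ coming from the ambient dimension of each affine block, the polynomial factor in $k$ from the $\binom{k}{2}$ pairwise boundaries and the repeated union bounds across the $k$ blocks, and $C_{\bm{\beta}^\star}$ absorbing the separation of the ground-truth affine pieces. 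Combining the correlation lower bound with the second-moment upper bound then gives
\[
\E\!\left[\norm{\bm{\beta}^{t+1}-\bm{\beta}^\star}_2^2\ \middle|\ \bm{\beta}^t\right]
\le\bigl(1-2\mu\alpha+\mu^2 L\bigr)\,\norm{\bm{\beta}^t-\bm{\beta}^\star}_2^2,
\]
and choosing $\mu<2\alpha/L$ makes the contraction factor strictly below one, so iterating within the basin of attraction established by the suitable initialization yields linear convergence.

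The main obstacle I anticipate is the discontinuity of the gradient across the partition boundaries induced by the $\argmax$. Unlike a smooth least-squares landscape, here the active block can switch abruptly, so both the strong-convexity lower bound and the second-moment upper bound must be argued on the event that the student and teacher partitions agree, with the complementary mismatched event handled entirely through anti-concentration. Quantifying the mismatched measure sharply in terms of $\norm{\bm{\beta}^t-\bm{\beta}^\star}_2$ and the pairwise separations of the $\bm{\beta}_j^\star$, and then ensuring that the resulting error stays lower-order across the whole local basin and uniformly over the net, is where the bulk of the technical effort lies and where the precise power of $k$ in the sample complexity is determined.
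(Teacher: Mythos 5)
Your proposal is correct and is essentially the paper's own argument: the paper likewise expands the squared error in conditional expectation, lower-bounds the correlation term by a one-point strong-convexity estimate (inequality \eqref{eq:res:lem:lwb_gradient} of Lemma~\ref{lem:lwb_gradient}), upper-bounds the mini-batch second moment by a multiple of $\|\mb \beta^t-\mb \beta^\star\|_2^2$ (Lemma~\ref{lem:upbnd_normgrad_minibatch}), exploiting exactly your interpolation observation that every per-sample gradient vanishes at $\mb \beta^\star$ when $\sigma=0$ (cf.\ the paper's discussion of \cite[Theorem~2.1]{needell2014stochastic}), and it proves both estimates by splitting over the agreement cells $\mathcal{C}_j\cap\mathcal{C}_j^\star$ and the mismatch cells $\mathcal{C}_j\cap\mathcal{C}_{j'}^\star$, the latter controlled by anti-concentration (Lemmas~\ref{lem:lwb_number_ball} and \ref{lem:bnd_partial}). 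The one place you diverge is the uniform-over-$\mb \beta$ concentration: you propose an $\epsilon$-net with resolution tuned to the exponent $\zeta$, which is the route of Ghosh et al., whereas the paper deliberately replaces nets by VC theory (Corollary~\ref{cor:concent_emp_prob}) and the small-ball method (Lemma~\ref{lem:lwb_trunc}); since the quantities being controlled are indexed by polytopes of bounded VC dimension, this sidesteps the discontinuity of the partition indicators in $\mb \beta$ (which a net argument must patch via anti-concentration near the cell boundaries, and which is where errors crept into the prior work, see \Cref{sec:correctness}) and yields a bound sharper by a factor $\alpha^{\zeta^{-1}}$. One quantitative point you leave implicit: your second-moment constant $L$ necessarily scales like $1\vee\frac{d+\log(n/\delta)}{m}$, because a single-sample gradient has squared norm of order $(d+\log(n/\delta))\|\mb \beta^t-\mb \beta^\star\|_2^2$; consequently the admissible step size and the contraction rate degrade with $d/m$, exactly as in the paper's batch-size-adaptive step size and the rate in \eqref{eq:errorbound_minibatch}. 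This does not affect the claim, since linear convergence ``for any batch size'' only requires a contraction factor strictly below one, but it is the reason the rate, not just the error floor, depends on $m$.
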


The per-iteration cost of a mini-batch SGD with batch size $m$ is $O(kmd)$, which is significantly lower than those for GD $O(knd)$ and of AM $O(knd^2)$.
This implies the faster convergence of SGD in run time shown in \Cref{fig:linearconvergence}.
%{\color{red}The computation cost of SGD requires $O(kmd)$ per each iteration. Therefore, compared with the per-iteration costs of GD $O(knd)$ and of AM $O(knd^2)$, the reduced cost of mini-batch SGD $O(kmd)$ implies fast convergence in run time as seen in \Cref{fig:linearconvergence}.}
%Moreover, we also present numerical results demonstrating that 
We also observe that SGD empirically recovers the ground-truth parameters from fewer observations (see \Cref{fig:k3,fig:p50}).

\subsection{Related Work}

% Since the max-affine model has been adopted to solve convex regression, we also compare our results to the known results on over-parametrization approaches to convex regression. 
% \cite{balazs2015near} proposed a quadratic program formulation obtained by overparameterization of \eqref{eq:loss_def_max0} with $k=n$. 
% They were able to remove the nonconvexity due to the max function in \eqref{eq:loss_def_max0} by introducing $n$ linear models. 
% They also regularized the cost function by the maximum norm of parameters for linear models given by $\max_{j \in [n]} \|\mb \theta_j\|_2^2$, which induces a solution with a small number of distinct linear models. 
% \cite{balazs2022adaptively} proposed an improved quadratic program approach that incorporates a pre-processing step given by adaptive clustering. 
% \cite{siahkamari2022faster} also proposed a modification of the over-parametrized quadratic program with a different regularizer that promotes a certain notion of sparsity. 
% They also developed a fast ADMM-type algorithm that performs faster on large-scale instances. 
% This line of work provided near-optimal sample complexity results that match the minimax bound for their relaxed model. 
% However, the number of samples grows exponentially, particularly in the ambient dimension $d$, which is known as the curse of dimensionality. 
% On the contrary, the sample complexity on gradient descent and alternating minimization grows linearly in $d$ and proportionally to a polynomial in $k$. 

\noindent\textbf{Relation to phase retrieval and ReLU regression: } 
The max-affine model includes well-known models in signal processing and machine learning as special cases. 
The instance of \eqref{eq:def_max_affine_model} for $k=2$ with $b_1^\star=b_2^\star=0$ and $\mb \theta_1^\star=-\mb \theta_2^\star=\mb \theta^\star$ reduces to $y=|\langle\mb x,\mb \theta^\star\rangle|$, which corresponds to a measurement model in phase retrieval. 
Similarly, the rectified linear unit (ReLU) $y=\max(\langle\mb x,\mb \theta^\star\rangle,0)$ is written in the form of \eqref{eq:def_max_affine_model} for $k=2$ with $\mb \theta_1^\star=\mb 0$ and $\mb \theta_2^\star=\mb \theta^\star$. 
%These imply that phase retrieval and ReLU regression are special cases of max-affine regression.
A series of studies in \cite{zhang2017nonconvex,soltanolkotabi2017learning,tan2019phase,tan2019online,wang2018phase,kalan2019fitting,yehudai2020learning,vardi2021learning} has developed a statistical analysis of GD and SGD for phase retrieval and ReLU regression. It has been shown that for the noiseless case, GD and SGD converge linearly to a near-optimal estimate of the ground-truth parameters when the number of observations grows linearly with the ambient dimension $d$. In the context of bounded noise, GD converges to the ground truth within a radius determined by the noise level \cite{zhang2017nonconvex,wang2018phase}. However, it remained an open question whether GD is consistent under stochastic noise assumptions. Additionally, SGD in the presence of noise has not been thoroughly investigated yet. 
The main results of this paper address these questions on phase retrieval as a special case of max-affine regression.\\

\noindent\textbf{Relation to convex regression: } 
The max-affine model has also been adopted in parametric approaches to convex regression \cite{magnani2009convex,hannah2013multivariate,han2016multivariate,balazs2015near,balazs2016convex,balazs2022adaptively,siahkamari2019learning,siahkamari2020learning,siahkamari2022faster}. 
%Since the max-affine model can approximate a shape of any convex function to arbitrary accuracy, the model has been widely adopted to fit data generated from a convex function \cite{magnani2009convex,hannah2013multivariate,han2016multivariate,balazs2015near,balazs2016convex,balazs2022adaptively,siahkamari2019learning,siahkamari2020learning,siahkamari2022faster}. 
Let $f_\star:\mathbb{R}^d\rightarrow{\mathbb{R}}$ be an arbitrary convex function. The observations are given by  $\{(\mb x_i, y_i)\}_{i=1}^{n}$ where $y_i=f_\star(\mb x_i)$ for all $i$ in $[n]$. The nonparametric convex regression problem aims to estimate $f_\star$ by solving
\begin{equation}
\label{eq:convex_regression}
\min_{f\in \mathcal{F}_{\mathrm{cvx}}}\sum_{i=1}^{n}(y_i-f(\mb x_i))^2,
\end{equation}
where $\mathcal{F}_{\mathrm{cvx}}$ denotes the set of convex functions. Since $f$ exists in the space of continuous real-valued functions on $\mathbb{R}^p$, the optimization problem in \eqref{eq:convex_regression} is infinite-dimensional. 
A line of research \cite{boyd2004convex,balazs2015near,siahkamari2020learning} investigated the interpolation approach with a max-affine model in the form of 
\begin{equation}
\label{eq:max_affine_convexreg}
\hat{f}(\mb x)=\max_{i\in[n]}\left(y_i+\mb g_i^\T(\mb x-\mb x_i)\right). 
\end{equation} 
It provides a perfect interpolation of data $\{(\mb x_i,y_i)\}_{i=1}^{n}$ with zero training error. 
For example, the interpolation is achieved by choosing $\mb g_i\in \partial f_\star(\mb x_i)$ for all $i \in [n]$. 
It has been show that the least squares estimator provides near-optimal generalization bounds relative to a matching minimax bound
\cite{lim2012consistency,guntuboyina2012covering,balazs2016convex,han2016multivariate,kur2019optimality}. 
However, the minimax bound for the parametric model in \eqref{eq:max_affine_convexreg} decays slowly due to the curse of dimensionality for a set of max affine with $n$ segments.
The least squares for the model in \eqref{eq:max_affine_convexreg} is formulated as a quadratic program (QP) \cite[Section~6.5.5]{boyd2004convex}. 
However, off-the-shelf interior-point methods do not scale to large instances of this QP due to the high computational cost $O(d^4n^5)$ \cite{magnani2009convex,hannah2013multivariate}. 
% To address the issues, several types of regularizers and ADMM optimization methods have been proposed in \cite{balazs2015near,balazs2016convex,siahkamari2020learning,balazs2022adaptively,siahkamari2019learning,siahkamari2022faster}. However, the works cannot solve the issue fundamentally as the complexity of the estimate still grows with the number of samples. 

The $k$-max-affine model in \eqref{eq:def_max_affine_model} is considered as an alternative compact parametrization to approximate convex regression. 
%A compact max-linear model consisting of a small number of linear models was also considered for parametric approaches to convex regression. 
The worst-case error in approximating $d$-variate Lipschtiz convex functions on a bounded domain by a $k$-max-affine model decays as $O(k^{-2/d})$ \cite[Lemma~5.2]{balazs2016convex}. 
However, data in practical applications such as aircraft wing design, wage prediction, and pricing stock options are often well approximated by the $k$-max-affine model with small $k$ (e.g., \cite[Section~6]{hannah2013multivariate}, \cite[Section~7]{balazs2016convex}). 
%For example, \cite[Lemma~5.2]{balazs2016convex} showed that the max-affine model with $k$ segments could approximate a Lipschtiz convex function with approximation error $O(k^{-2/d})$ on a bounded domain at the worst case. 
%Although the worst-case describes that the max-affine model might be inefficient for fitting some convex functions, several studies in \cite[Section~6]{hannah2013multivariate} and \cite[Section~7]{balazs2016convex} have shown that the max-affine representation with a few $k$ can be a useful approximation for a few types of data in applications such as aircraft wing design, wage prediction, and pricing stock options.
%For these reasons, it is interesting to consider the max-affine estimator by regularizing the number of segments $k<n$. By restricting the set $\mathcal{F}_{\mathrm{cvx}}$ into a set of max-affine function with $k$ segments, the fitting problem
Unlike the interpolation approach to convex regression, if the compact model fits data in applications, the estimation error decays much faster in $n$. \\ 
%so that an $\epsilon$-accurate estimation is achieved by $n =\tilde{O}(k^4d/\epsilon)$.}\\

\noindent\textbf{Max-linear regression in the presence of deterministic noise: } 
A special instance of \eqref{eq:def_max_affine_model} with $b_j^\star=0$ for $j\in[k]$ is called the max-linear model. 
A convex optimization method to max-linear regression obtained with an initial estimate has been studied under the standard Gaussian covariate assumption and deterministic noise \cite{kim2021max}.  %By convexifying $\ell_1$ loss function with a regularization, 
They empirically showed that the convex estimator outperforms the existing methods in the presence of outliers.

\subsection{Organizations and Notations}

The rest of the paper is organized as follows: 
\Cref{sec:performance} formulates the least squares estimator for max-affine regression, describes the GD algorithm and presents the convergence analysis of GD. 
\Cref{sec:stochastic} describes a mini-batch SGD for max-affine regression and provides its convergence analysis. 
\Cref{sec:numerical} presents numerical results to compare the empirical performance of GD, SGD, and AM for max-affine regression. 
Finally, \Cref{sec:discussion} summarizes the contributions and discusses future directions.

Boldface lowercase letters denote column vectors, and boldface capital letters denote matrices. 
The concatenation of two column vectors $\mb a$ and $\mb b$ is denoted by $[\mb a;\ \mb b]$. 
The subvector of $\mb a \in \mathbb{R}^{d+1}$ with the first $d$ entries will be denoted by $(\mb a)_{1:d}$. 
Various norms are used throughout the paper. 
We use $\|\cdot\|$, $\|\cdot\|_{\mathrm{F}}$, $\|\cdot\|_{2}$, and $\|\cdot\|_{\psi_2}$ to denote the spectral norm, Frobenius norm, Euclidean norm, and sub-Gaussian norm respectively. 
Moreover, ${B_2^d}$ and $\mathbb{S}^{d-1}$ will denote the $d$-dimensional unit ball and unit sphere with respect to the Euclidean norm. 
For two scalars $q$ and $d$, we write $q\lesssim p$ if there exists an absolute constant $C>0$ such that $q\leq Cp$. 
We use $C,C_1,C_2,\ldots$ and $c,c_1,c_2,\dots$ to denote absolute constants that may vary from line to line. 
We adopt the big-$O$ notation so that $q \lesssim p$ is alternatively written as $q = O(p)$. 
With a tilde on top of $O$, we ignore logarithmic factors. 
For brevity, the shorthand notation $[n]$ denotes the set $\{1,\ldots,n\}$ for $n\in\mathbb{N}$. 
Moreover, $a \vee b$ and $a \wedge b$ will denote $\max(a,b)$ and $\min(a,b)$ for $a,b \in \mathbb{R}$.

\section{Convergence analysis of gradient descent}
\label{sec:performance}

We first formulate the least squares estimator for max-affine regression and derive the gradient descent algorithm. 
For brevity, let $\mb \xi:=[\mb x ;\ 1] \in \mathbb{R}^{d+1}$ and $\mb \beta_j:=[\mb \theta_j ;\ b_j]\in\mathbb{R}^{d+1}$. 
Then the model in \eqref{eq:def_max_affine_model} is rewritten as
\[
y = \max_{j \in [k]} \langle \mb \xi, \mb \beta_j^\star \rangle + \mathrm{noise}.
\]
The least squares estimator minimizes the quadratic loss function given by
\begin{equation}
\label{eq:loss_def_max}
\ell(\mb \beta):=\frac{1}{2n}\sum_{i=1}^{n}\left(y_i-\max_{j\in[k]}\langle\mb \xi_i,\mb \beta_j\rangle\right)^2,
\end{equation}
where $\mb \beta=[\mb \beta_1 ;\ \dots ;\ \mb \beta_k] \in\mathbb{R}^{k(d+1)}$. 

The gradient descent algorithm iteratively updates the estimate by
\[
\mb \beta^{t+1}=\mb \beta^t-\mu \nabla_{\mb \beta}\ell(\mb \beta^t),
\] 
where $\mu>0$ denotes a step size. 
A sub-gradient of the cost function in \eqref{eq:loss_def_max} with respect to the $j$th block $\mb \beta_j$ is written as
\begin{equation}
\label{eq:gradient}
\begin{aligned}
\nabla_{\mb \beta_j} \ell(\mb \beta) 
= \frac{1}{n}\sum_{i=1}^{n} \bbone_{\{\mb x_i\in \mathcal{C}_j\}} \left(\max_{j\in[k]}\langle\mb \xi_i,\mb \beta_j\rangle-y_i\right) \mb \xi_i,
\end{aligned}
\end{equation} 
where $\mathcal{C}_1, \dots, \mathcal{C}_k$ are defined by $\mb \beta$ as 
\begin{equation}
\label{eq:def_calCj}
\begin{aligned}
\mathcal{C}_j := \{ \mb w \in \mbb{R}^{d} ~:~ \langle [\mb w ;\ 1], \mb \beta_{j} - \mb \beta_{l} \rangle > 0, ~ \forall l < j, ~ \langle [\mb w ;\ 1], \mb \beta_{j} - \mb \beta_{l} \rangle \geq 0, ~ \forall l > j \}.
\end{aligned}
\end{equation}
The set $\mathcal{C}_j$ contains all inputs maximizing the $j$th linear model.\footnote{In case of a tie when multiple linear models attain the maximum for a given sample, we assign the sample to the smallest maximizing index. Since the event of duplicate maximizing indices will happen with probability $0$ for any absolutely continuous probability measure on $\mb{x}_i$s, the choice of a tie-break rule does not affect the analysis.\label{footnote1}}
Note that each $\mathcal{C}_j$ is determined by $k-1$ half spaces given by the pairwise difference of the $j$th linear model and the others. 
Then the gradient $\nabla_{\mb \beta}\ell(\mb \beta)$ is obtained by concatenating $\{\nabla_{\mb \beta_j}\ell(\mb \beta)\}_{j=1}^k$ by
\[
\nabla_{\mb \beta}\ell(\mb \beta)=\sum_{j=1}^{k}\mb e_j\otimes\nabla_{\mb \beta_j}\ell(\mb \beta),
\] 
where $\mb e_j \in \mathbb{R}^k$ denotes the $j$th column of the $k$-by-$k$ identity matrix $\mb I_k$ for $j \in [k]$.
Moreover, $\ell(\mb \beta)$ is differentiable except on a set of measure zero, with a slight abuse of terminology, $\nabla_{\mb \beta}\ell(\mb \beta)$ is referred to as the ``gradient''.

Next, we present a convergence analysis of the gradient descent estimator. 
The analysis depends on a set of geometric parameters of the ground-truth model. 
The first parameter $\pi_{\min}$ describes the minimum portion of observations corresponding to the linear model which achieved the maximum least frequently. 
It is formally defined as a lower bound on the probability measure on the smallest partition set, i.e. 
\begin{equation}
\label{eq:def_pimin_pimax}
\min_{j\in[k]}\P(\mb x\in \mathcal{C}_j^\star) \geq \pi_{\min}, 
\end{equation}
where $\mathcal{C}_1^\star, \dots, \mathcal{C}_k^\star$ are polytopes determined by 
%$\mb \beta^\star$ as in \eqref{eq:def_calCj} with $\mb \beta_j$ substituted by $\mb \beta_j^\star$, i.e. 
\begin{equation}
\label{eq:def_calCjstar}
\begin{aligned}
\mathcal{C}_j^\star := \{ \mb w \in \mbb{R}^{d} ~:~ \langle [\mb w ;\ 1], \mb \beta_{j}^\star - \mb \beta_{l}^\star \rangle > 0, ~ \forall l < j, ~ \langle [\mb w ;\ 1], \mb \beta_{j}^\star - \mb \beta_{l}^\star \rangle \geq 0, ~ \forall l > j \}.
\end{aligned}
\end{equation}
The next parameter $\kappa$ quantifies the separation between all pairs of distinct linear models in \eqref{eq:def_max_affine_model} so that the pairwise distance on two distinct linear models satisfy
\label{ass:kappa}
\begin{equation}
\label{eq:defkappa}
\min_{j'\neq j}\|(\mb \beta_j^\star)_{1:d} - (\mb \beta_{j'}^\star)_{1:d} \|_2 
\geq \kappa.
\end{equation}

Our main result in the following theorem presents a local linear convergence of the gradient descent estimator uniformly over all $\mb \beta^\star$ satisfying \eqref{eq:def_calCjstar} and \eqref{eq:defkappa}. 

\begin{theorem}
\label{thm:main_noise}
Let $\delta\in(0,1/e),$ $y_i=\max_{j\in[k]}\langle\mb \xi_i,\mb 
\beta_j^\star\rangle+z_i$ for $i \in [n]$ with $\mb \xi_i=[\mb x_i ;\ 1]$, and $\{z_i\}_{i=1}^n$ being additive $\sigma$-sub-Gaussian noise independent from everything else. 
Suppose that Assumptions~\ref{main_assumption} and \ref{main_assumption2} hold.\footnote{To simplify the presentation, we assume that the parameters $\eta$, $\zeta$, $\gamma$ in Assumptions~\ref{main_assumption} and \ref{main_assumption2} are fixed numerical constants in the statement and proof of Theorem~\ref{thm:main_noise}. 
Therefore, any constant determined only by $\eta$, $\zeta$, $\gamma$ will be treated as a numerical constant.}
Then there exist absolute constants $C,C',R>0,$ and $\nu \in(0,1)$, for which the following statement holds with probability at least $1-\delta$: 
If the initial estimate $\mb \beta^0$ belongs to a neighborhood of $\mb \beta^\star$ given by 
\begin{equation}
\label{eq:defnbr}
\begin{aligned}
\mathcal{N}(\mb \beta^\star):= \Bigg\{ \mb \beta \in \mbb{R}^{k(d+1)} \,:\, \max_{j \in [k]} \|\mb \beta_j - \mb \beta_j^\star\|_2 \leq \kappa\rho \Bigg\}
\end{aligned}
\end{equation}
with 
\begin{equation}
\label{eq:choice_rho1}
\rho:= \frac{R\pi_{\min}^{\zeta^{-1}(1+\zeta^{-1})}}{4k^{\zeta^{-1}}} \cdot \log^{-1/2}\left(\frac{k^{\zeta^{-1}}}{R\pi_{\min}^{\zeta^{-1}(1+\zeta^{-1})}}\right) \wedge \frac{1}{4},
\end{equation} then for all $\mb \beta^\star$ satisfying \eqref{eq:def_pimin_pimax} and \eqref{eq:defkappa}, the sequence $\left(\mb \beta^t\right)_{t\in\mathbb{N}}$ by the gradient descent method with a constant step size satisfies 
\begin{equation}
\label{eq:errorbound_noise}
\begin{aligned}
\left\|\mb \beta^t-\mb \beta^\star\right\|_2&\leq \nu^t\left\|\mb \beta^0-\mb \beta^\star\right\|_2
+ C'\sigma k\frac{ \sqrt{k\left(kd\log(n/d)+\log(k/\delta)\right)}}{\sqrt{n}}, \quad\forall t\in\mathbb{N},
\end{aligned} 
\end{equation} 
provided that 
\begin{equation}
\label{eq:samplcomp_noise}
\begin{aligned}
n \geq C \pi_{\min}^{-2(1+\zeta^{-1})} \cdot \left( k^{1.5}\pi_{\min}^{-(1+\zeta^{-1})} \vee \frac{\sigma}{\kappa \rho} \right)^2 \cdot \left( kd\log(n/d)+\log(k/\delta) \right).
\end{aligned}
\end{equation}
\end{theorem}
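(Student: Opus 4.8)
The plan is to establish a contraction for the error $\mb h^t := \mb \beta^t - \mb \beta^\star$ by expanding the one-step recursion
\[
\|\mb h^{t+1}\|_2^2 = \|\mb h^t\|_2^2 - 2\mu\langle \nabla_{\mb \beta}\ell(\mb \beta^t),\, \mb h^t\rangle + \mu^2\|\nabla_{\mb \beta}\ell(\mb \beta^t)\|_2^2
\]
and controlling the inner-product term from below by a \emph{restricted one-point strong convexity} bound and the norm term from above by a \emph{restricted smoothness} bound, both valid uniformly over the neighborhood $\mathcal{N}(\mb \beta^\star)$. Writing $y_i = \max_l\langle\mb \xi_i,\mb \beta_l^\star\rangle + z_i$, I first split the residual $\max_l\langle\mb \xi_i,\mb \beta_l\rangle - y_i$ into a \emph{signal} part $\max_l\langle\mb \xi_i,\mb \beta_l\rangle - \max_l\langle\mb \xi_i,\mb \beta_l^\star\rangle$ and a \emph{noise} part $-z_i$, so that $\nabla_{\mb \beta}\ell$ decomposes into a noiseless gradient plus a noise-driven term; the latter produces the additive statistical floor in \eqref{eq:errorbound_noise}.

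The heart of the argument is a lower bound of the form $\langle\E\,\nabla_0\ell(\mb\beta),\mb h\rangle \gtrsim \alpha\|\mb h\|_2^2$ on the expected noiseless gradient $\nabla_0\ell$, with $\alpha$ scaling polynomially in $\pi_{\min}$. For samples whose maximizing index under $\mb\beta$ agrees with that under $\mb\beta^\star$, the signal residual equals $\langle\mb\xi_i,\mb h_j\rangle$ on the $j$th block, contributing the nonnegative quadratic $\langle\mb\xi_i,\mb h_j\rangle^2$ restricted to $\mathcal{C}_j^\star$; I would lower-bound $\E[\langle\mb\xi,\mb h_j\rangle^2\bbone_{\{\mb x\in\mathcal{C}_j^\star\}}]$ using the anti-concentration Assumption~\ref{main_assumption2} (which prevents the restricted second moment from degenerating) together with $\pi_{\min}$ (which guarantees every block carries nonnegligible mass). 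Samples whose maximizing indices \emph{disagree} form a ``mismatch'' region contained in a thin slab around the boundaries $\{\langle[\mb w;\ 1],\mb\beta_j^\star-\mb\beta_l^\star\rangle\approx 0\}$; since the neighborhood radius $\kappa\rho$ is small relative to the separation $\kappa$, anti-concentration bounds the measure of this slab, making its contribution a lower-order perturbation. The choice of $\rho$ in \eqref{eq:choice_rho1} is precisely what renders this perturbation negligible, and a matching upper bound $\|\E\,\nabla_0\ell(\mb\beta)\|_2\lesssim\|\mb h\|_2$ follows from Assumption~\ref{main_assumption}.

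I would then transfer these population bounds to the empirical gradient uniformly over $\mathcal{N}(\mb\beta^\star)$. \textbf{The main obstacle is precisely this uniform control}, because the indicators $\bbone_{\{\mb x_i\in\mathcal{C}_j(\mb\beta)\}}$ jump as $\mb\beta$ varies, so the summands are not Lipschitz in $\mb\beta$. The plan is to treat the partition indicators as a function class of intersections of $k-1$ halfspaces (VC dimension $O(kd)$) and combine a uniform-entropy bound for the misclassification events with sub-Gaussian concentration of the linear statistics $\langle\mb\xi_i,\cdot\rangle$, again controlling the symmetric differences $\mathcal{C}_j(\mb\beta)\,\triangle\,\mathcal{C}_j^\star$ through anti-concentration; this is the source of the $kd\log(n/d)+\log(k/\delta)$ factor. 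The noise term $\tfrac1n\sum_i\bbone_{\{\mb x_i\in\mathcal{C}_j(\mb\beta)\}}z_i\mb\xi_i$ is handled by a separate uniform concentration bound exploiting the independence and $\sigma$-sub-Gaussianity of the $z_i$, which yields the $\sigma k\sqrt{k(kd\log(n/d)+\log(k/\delta))/n}$ floor.

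Finally, I would assemble the pieces: under the sample-size condition \eqref{eq:samplcomp_noise} the empirical deviations are dominated by a fraction of the curvature $\alpha$, so a sufficiently small constant step size $\mu$ gives $\langle\nabla_{\mb\beta}\ell(\mb\beta^t),\mb h^t\rangle\geq \tfrac{\alpha}{2}\|\mb h^t\|_2^2 - \tau\|\mb h^t\|_2$ and $\|\nabla_{\mb\beta}\ell(\mb\beta^t)\|_2\lesssim\|\mb h^t\|_2 + \tau$ with $\tau$ the noise floor, which combine to the recursion $\|\mb h^{t+1}\|_2\leq\nu\|\mb h^t\|_2 + C'\tau$ for $\nu=\sqrt{1-\Theta(\mu\alpha)}\in(0,1)$; iterating yields \eqref{eq:errorbound_noise}. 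The one subtlety is closing the induction, i.e.\ verifying that every iterate stays in $\mathcal{N}(\mb\beta^\star)$ so the local bounds apply at each step; this holds because the contraction together with a floor smaller than $\kappa\rho$ (guaranteed by \eqref{eq:samplcomp_noise}) keeps $\|\mb h^t\|_2\leq\kappa\rho$ for all $t$ once $\|\mb h^0\|_2\leq\kappa\rho$.
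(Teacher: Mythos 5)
Your proposal follows essentially the same route as the paper's proof: the clean/noise gradient decomposition, a restricted one-point strong convexity bound together with a gradient-norm upper bound holding uniformly over $\mathcal{N}(\mb \beta^\star)$ (obtained via VC theory for the halfspace-intersection partition sets, anti-concentration to control the mismatch regions whose size is dictated by the choice of $\rho$, and a separate sub-Gaussian concentration bound for the noise gradient giving the statistical floor), assembled by an induction that keeps every iterate inside $\mathcal{N}(\mb \beta^\star)$ and an iterated contraction. The only cosmetic difference is that you pass through population bounds and then transfer to empirical ones, whereas the paper controls the empirical quantities directly via the small-ball method and worst-case partial-sum eigenvalue bounds; this does not change the substance of the argument.
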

\begin{proof}
See \Cref{sec:thmproof}.
\end{proof}

\Cref{thm:main_noise} demonstrates that the GD estimator with a constant step size converges linearly to a neighborhood of the ground-truth parameter of radius $\tilde{O}\left(\sigma^2k^4d/n\right)$.
The number of sufficient observations to invoke this convergence result scales linearly in $d$ and is proportional to a polynomial in $\pi_{\min}^{-1}$ and $k$. This result implies the consistency of the gradient descent estimator. To compare \Cref{thm:main_noise} to the analogous result for AM under the same covariate and noise models \cite[Theorem~1]{ghosh2020max}, we have the following remarks in order. 

\begin{itemize}
    \item First, the final estimation error by \eqref{eq:errorbound_noise} with $t \to \infty$ is smaller than that by \cite[Theorem~1]{ghosh2020max} by being independent of $\pi_{\min}^{-1}$, which grows at least proportional to $k$. A larger estimation error bound in their result is due to the analysis of the least squares update, wherein the smallest singular value of the design matrix of each linear model is utilized. These quantities do not appear in the analysis of the gradient descent update. 

    \item Second, the convergence parameter $\nu$ in \eqref{eq:errorbound_noise} is smaller than $3/4$ for AM\footnote{As shown in the proof in \Cref{sec:thmproof}, the parameter $\nu$ is given as $\nu=(1-\mu\lambda)$ by \eqref{eq:recursion_decrease}. The quantity $\mu\lambda$ is determined by \eqref{eq:muchoose} and \eqref{eq:lambdachoose} as a function of $\pi_{\min}$, $\pi_{\max}$, and $\zeta$ so that it decreases in $k$ and $\pi_{\min}^{-1}$.}, which might result in a slower convergence of GD in iteration count. The convergence speed issue becomes significant for large $k$ and $\pi_{\min}^{-1}$. For example, in the illustration by \Cref{fig:linearconvergence}, GD shows a slower convergence in run time despite the lower per-iteration cost $O(knd)$, which is lower than that of AM $O(knd^2)$ by a factor of $d$. However, as discussed in \Cref{sec:stochastic}, the slow convergence of GD can be improved by modifying the algorithm into a (mini-batch) SGD. 

    \item Third, the sample complexity results by \Cref{thm:main_noise} and \cite[Theorem~1]{ghosh2020max} are qualitatively comparable. There were mistakes in the proof of \cite[Theorem~1]{ghosh2020max}. We think that their result could be corrected with an increased order of dependence in their sample complexity on $k$ and $\pi_{\min}$ (see \Cref{sec:correctness} for a detailed discussion). 

    \item Lastly, regarding the proof technique, we adapt and improve the strategy by Ghosh et al. \cite{ghosh2019max,ghosh2020max}. Note that the subgradient of the loss function in \eqref{eq:gradient} involves clustering of covariates with respect to maximizing linear models such as \eqref{eq:def_calCj}, which also arises in alternating minimization. Due to this similarity, key quantities in the analysis have been estimated in \cite{ghosh2019max,ghosh2020max}. We provide sharpened estimates via different techniques. For example, \Cref{lem:lwb_trunc} provides a tighter bound than \cite[Lemma~7]{ghosh2019max} by a factor of $\alpha^{\zeta^{-1}}$ for a scalar $\alpha\in(0,1)$.

\end{itemize}

Theorem~\ref{thm:main_noise} also provides an auxiliary result.  
As a direct consequence of Theorem~\ref{thm:main_noise}, we obtain an upper bound on the prediction error, which is defined by
\[
\mathcal{E}(\hat{\mb \beta}) := 
\E\left(\max_{j\in[k]}\langle\mb \xi,\hat{\mb \beta}_j\rangle-\max_{j\in[k]}\langle\mb \xi,\mb \beta_j^\star\rangle\right)^2,
\]
where $\hat{\mb \beta} = [\hat{\mb \beta}_1 ;\ \dots ;\ \hat{\mb \beta}_k]$ denotes the estimated parameter vector by GD. 
Since the quadratic cost function in \eqref{eq:loss_def_max0} is $1$-Lipschitz with respect to the $\ell_{2}$ norm, it follows that the prediction error $\mathcal{E}(\hat{\mb \beta})$ is also bounded by $\tilde{O}(\sigma^2k^3d/n)$ as in \eqref{eq:errorbound_noise} with $t \to \infty$. 

A limitation of Theorem~\ref{thm:main_noise} is that its local convergence analysis requires an initialization within a specific neighborhood of the ground-truth parameter. 
To obtain the desired initial estimate, one may use spectral initialization by \cite[Algorithm~2, 3]{ghosh2021max}, which consists of dimensionality reduction followed by a grid search. 
They provided a performance guarantee of a spectral initialization scheme under the standard Gaussian covariate assumption \cite[Theorems~2 and 3]{ghosh2021max}. 
Therefore, the reduction of Theorem~\ref{thm:main_noise} to the Gaussian covariate case combined with \cite[Theorems~2 and 3]{ghosh2021max} provides a global convergence analysis of GD, which is comparable to that for alternating minimization \cite{ghosh2021max}. 
Even in this case, the number of sufficient samples for the success of spectral initialization overwhelms that for the subsequent gradient descent step. 
Since multiple steps of their analysis critically depend on the Gaussianity, it remains an open question whether the result on the spectral initialization generalizes to the setting by Assumptions~\ref{main_assumption} and \ref{main_assumption2}.

\section{Convergence analysis of mini-batch SGD}
\label{sec:stochastic}

SGD is an optimization method that updates parameters using a single or a small batch of randomly selected data point(s) instead of the entire dataset. 
SGD converges faster in run time than GD due to its significantly lower per-iteration cost. 
In particular, when applied to max-affine regression, SGD empirically outperforms GD and AM in both sample complexity and convergence speed (see \Cref{fig:linearconvergence,fig:k3,fig:p50}). 
In this section, we present an accompanying theoretical convergence analysis of mini-batch SGD for max-affine regression. 
The update rule of a mini-batch SGD with batch size $m$ for max-affine regression is described as follows. 
For each iteration index $t \in \mathbb{N}$, let $I_t$ be a multiset of $m$ randomly selected indices with replacement so that the entries of $I_t$ are independent copies of a uniform random variable in $[n]$. 
A mini-batch SGD iteratively updates the estimate by 
\[
\mb \beta^{t+1}=\mb \beta^t -\mu \frac{1}{m}\sum_{i\in I_t}\nabla_{\mb \beta} \ell_{i}(\mb \beta^t),
\] 
where 
\[
\ell_{i}(\mb \beta):=\frac{1}{2}\left(y_i-\max_{j\in[k]}\langle\mb \xi_{i},\mb \beta_j\rangle\right)^2, \quad i \in [n].
\] 
Then the following theorem presents a local linear convergence of SGD.

\begin{theorem}
\label{thm:main_mini_batch}
Under the hypothesis of \Cref{thm:main_noise}, there exist absolute constants $C,C'>0$ and $c,\nu\in(0,1)$, for which the following statement holds with probability at least $1-\delta$: 
For all $\mb \beta^\star$ satisfying \eqref{eq:def_calCjstar} and \eqref{eq:defkappa}, if the initial estimate $\mb \beta^0$ belongs to $\mathcal{N}(\mb \beta^\star)$ defined in \eqref{eq:defnbr}, $n$ satisfies \eqref{eq:samplcomp_noise}, and $m$ satisfies
\begin{equation}
\label{eq:minibatchsize}
\begin{aligned}
m & \geq C \cdot \left(\frac{\sigma}{\kappa\rho}\right)^2\cdot\left(d+\log(k/\delta)\right),
\end{aligned}
\end{equation}
then the sequence $\left(\mb \beta^t\right)_{t\in\mathbb{N}}$ by the mini-batch SGD with batch size $m$ and step size $\mu=c\left(1\vee{m}/{\left(d+\log(n/\delta)\right)}\right)$ satisfies 
\begin{equation}
\label{eq:errorbound_minibatch}
\begin{aligned}
\E_{I_t}\left\|\mb \beta^t-\mb \beta^\star\right\|_2&\leq \left(1-\left(1\wedge\frac{m}{d+\log(n/\delta)}\right)c\nu\right)^t\left\|\mb \beta^0-\mb \beta^\star\right\|_2\\
&+C'\sigma k\sqrt{\left(\frac{d+\log(n/\delta)}{m}\vee\frac{kd\log(n/d)+\log(1/\delta)}{n}\right)},\quad\forall t\in\mathbb{N}.
\end{aligned}
\end{equation}
\end{theorem}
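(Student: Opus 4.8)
The plan is to exploit that the mini-batch gradient is an \emph{unbiased} estimate of the full gradient and to recycle the restricted-strong-convexity (RSC) and smoothness estimates already established for \Cref{thm:main_noise}, supplementing them with a single new ingredient: a uniform second-moment bound for the per-sample gradient. Throughout I condition on the high-probability event $\mc G$ (of probability at least $1-\delta$) on which all deterministic estimates underlying \Cref{thm:main_noise} hold uniformly over the neighborhood $\mc N(\mb \beta^\star)$; the only remaining randomness is then the sampling of the multisets $(I_t)_{t\ge 0}$. Writing $g_t := \frac1m\sum_{i\in I_t}\nabla_{\mb\beta}\ell_i(\mb\beta^t)$ and $\Delta^t := \mb\beta^t-\mb\beta^\star$, the fact that the entries of $I_t$ are i.i.d.\ uniform on $[n]$ gives $\E_{I_t}[g_t\mid\mb\beta^t]=\nabla_{\mb\beta}\ell(\mb\beta^t)$, so expanding the squared update yields the conditional identity
\[
\E_{I_t}\norm{\Delta^{t+1}}_2^2 = \norm{\Delta^t}_2^2 - 2\mu\,\inp{\nabla_{\mb\beta}\ell(\mb\beta^t),\Delta^t} + \mu^2\,\E_{I_t}\norm{g_t}_2^2 .
\]

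The middle term is exactly the inner product bounded in the analysis of \Cref{thm:main_noise}: on $\mc G$ and for $\mb\beta^t\in\mc N(\mb\beta^\star)$ it is at least $\lambda\norm{\Delta^t}_2^2$ minus a noise cross-term of order $\sigma\sqrt{(kd\log(n/d)+\log(k/\delta))/n}\,\norm{\Delta^t}_2$, which after balancing reproduces the $\frac{kd\log(n/d)+\log(1/\delta)}{n}$ term in the floor of \eqref{eq:errorbound_minibatch}. The genuinely new piece is the last term, for which I use the with-replacement variance decomposition
\[
\E_{I_t}\norm{g_t}_2^2 = \Big(1-\tfrac1m\Big)\norm{\nabla_{\mb\beta}\ell(\mb\beta^t)}_2^2 + \frac1m\cdot\frac1n\sum_{i=1}^n\norm{\nabla_{\mb\beta}\ell_i(\mb\beta^t)}_2^2 .
\]
The first summand is controlled by the smoothness bound $\norm{\nabla_{\mb\beta}\ell(\mb\beta)}_2^2\lesssim\norm{\Delta}_2^2+\text{GD floor}$ from the GD analysis. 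For the second, each sample lies in exactly one cell, so $\norm{\nabla_{\mb\beta}\ell_i(\mb\beta)}_2^2=(\max_j\inp{\mb\xi_i,\mb\beta_j}-y_i)^2\norm{\mb\xi_i}_2^2$; splitting the residual into the signal part (bounded by $\max_j|\inp{\mb\xi_i,\mb\beta_j-\mb\beta_j^\star}|$ via Lipschitzness of the max) and the noise part $z_i$, I would establish, uniformly over $\mb\beta\in\mc N(\mb\beta^\star)$ on $\mc G$,
\[
\frac1n\sum_{i=1}^n\norm{\nabla_{\mb\beta}\ell_i(\mb\beta)}_2^2 \;\lesssim\; \big(d+\log(n/\delta)\big)\Big(\norm{\mb\beta-\mb\beta^\star}_2^2+\sigma^2\Big),
\]
the factor $d+\log(n/\delta)$ arising from $\norm{\mb\xi_i}_2^2\approx d$. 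Dividing by $m$ produces the $\tfrac{d+\log(n/\delta)}{m}$ scaling of both the effective smoothness constant and the variance floor $\sigma^2(d+\log(n/\delta))/m$.

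Combining the three pieces gives the conditional recursion
\[
\E_{I_t}\norm{\Delta^{t+1}}_2^2 \le \Big(1-2\mu\lambda+\mu^2\big(L+\tfrac{d+\log(n/\delta)}{m}\big)\Big)\norm{\Delta^t}_2^2 + \mu^2\cdot\text{floor}.
\]
Choosing the step size so that the quadratic-in-$\mu$ term absorbs at most half of the linear drift, i.e.\ $\mu\asymp\big(1\wedge\tfrac{m}{d+\log(n/\delta)}\big)$ up to the RSC constant, leaves a per-step contraction factor $1-\big(1\wedge\tfrac{m}{d+\log(n/\delta)}\big)c\nu$ matching \eqref{eq:errorbound_minibatch}. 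Taking total expectation over $(I_0,\dots,I_{t-1})$, iterating the geometric recursion, summing the floor as a geometric series, and finally applying $\E\norm{\Delta^t}_2\le(\E\norm{\Delta^t}_2^2)^{1/2}$ together with $\sqrt{x+y}\le\sqrt x+\sqrt y$ (the resulting factor $1/2$ in the rate exponent being absorbed into $\nu$) yields \eqref{eq:errorbound_minibatch}.

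The main obstacle is twofold, and the batch-size condition \eqref{eq:minibatchsize} is precisely what resolves it. First, the uniform-over-$\mc N$ control of $\frac1n\sum_i\norm{\nabla_{\mb\beta}\ell_i(\mb\beta)}_2^2$ must cope with the \emph{discontinuous} partition indicators $\bbone_{\{\mb x_i\in\mc C_j\}}$; as in the GD analysis this forces a truncation-and-covering argument rather than a naive sub-exponential concentration, and because the quantity is a per-sample (not averaged) gradient norm its tails are heavier, so the sharpened estimates behind \Cref{thm:main_noise} must be re-derived at this scale. Second, the RSC lower bound holds only while $\mb\beta^t\in\mc N(\mb\beta^\star)$, yet a single stochastic step can in principle leave the neighborhood. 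I would handle this with a stopping-time argument: condition \eqref{eq:minibatchsize} forces the variance floor $\sigma^2(d+\log(k/\delta))/m$ below $(\kappa\rho)^2$, so the expected-distance recursion keeps the trajectory inside $\mc N(\mb\beta^\star)$ up to the first exit time, which then occurs only on an event of probability at most $\delta$, preserving the statement on $\mc G$.
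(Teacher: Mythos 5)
Your proposal is correct and follows essentially the same route as the paper's proof: conditioning on the high-probability events behind Theorem~\ref{thm:main_noise}, unbiasedness of the mini-batch gradient, the with-replacement variance decomposition (your identity is exactly the paper's \eqref{eq:decompose_norm_signle_gradient}, taken from \cite[Equation~7]{ma2017power}), a per-sample second-moment bound whose $d+\log(n/\delta)$ factor comes from $\max_{i\in[n]}\|\mb \xi_i\mb \xi_i^\T\|$ as in the paper's event \eqref{eq:lem:expect0}, the step-size scaling $\mu \asymp 1\wedge m/(d+\log(n/\delta))$ of \eqref{eq:muchoose_minibatch}, and the same geometric iteration finished off with Jensen's inequality. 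The only divergence is how iterates are kept inside $\mathcal{N}(\mb \beta^\star)$: you sketch a stopping-time argument, whereas the paper runs its induction ``in expectation'' (Step~1 of Appendix~\ref{sec:proof:minibatch}); both treatments are comparably informal on this point, so the overall approach matches.
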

\begin{proof}
See \Cref{sec:proof:minibatch}.
\end{proof}

\Cref{thm:main_mini_batch} establishes linear convergence of mini-batch SGD in expectation to the ground-truth parameters within error $\tilde{O}(\sigma^2k^2\left({d}/{m}\vee{kd}/{n}\right))$.
The local linear convergence applies uniformly over all $\mb \beta^\star$ satisfying \eqref{eq:def_calCjstar} and \eqref{eq:defkappa}. 
In general, the convergence rate of SGD is much slower even with strong convexity \cite{nemirovski2009robust,bottou2018optimization,harvey2019tight}. 
%{\color{red} Generally, SGD does not achieve linear convergence for the general optimization  Therefore, the behavior of SGD in this result is different from existing analyses of SGD. }
However, in a special case where the cost function is in the form of $\sum_{i=1}^{n}\ell_i(\mb \beta)$, smooth, and strongly convex, if $\bm{\beta}^\star$ is the minimizer of all summands $\{\ell_i(\mb \beta)\}_{i=1}^n$, then SGD converges linearly to $\bm{\beta}^\star$ \cite[Theorem~2.1]{needell2014stochastic}. 
The convergence analysis in \Cref{thm:main_mini_batch} can be considered along with this result. 
The cost function in \eqref{eq:loss_def_max} in the noiseless case satisfies the desired properties locally near the ground truth, whence establishes the local linear convergence of SGD.

\Cref{thm:main_mini_batch} also explains how the batch size $m$ affects the final estimation error by \eqref{eq:errorbound_minibatch} with $t\to \infty$. 
Let $n$ and $m$ satisfy \eqref{eq:samplcomp_noise} and \eqref{eq:minibatchsize} so that \Cref{thm:main_mini_batch} is invoked. 
Under this condition, one can still choose $m$ and $n$ so that $m\lesssim n/k$. 
Then the $\tilde{O}(\sigma^2k^{2}d/m)$ term determined by the batch size $m$ dominates the final estimation error. 
In this regime, the SGD estimator is not consistent since the estimation error $\tilde{O}(\sigma^2k^{2}d/m)$ does not vanish with increasing $n$. 
This result implies the trade-off between the convergence speed and the final estimation error determined by the batch size. 

Furthermore, since the condition on $m$ in \eqref{eq:minibatchsize} becomes trivial when $\sigma = 0$, we obtain a stronger result in the noiseless case given by the following corollary. 

\begin{corollary}
\label{cor:minibatch_highprob}
Let $\delta,\delta'\in(0,1),$ and $\epsilon>0$ fixed. 
Suppose that the hypothesis of \Cref{thm:main_mini_batch} holds. 
If $t\geq\left(\log(1/\epsilon)+\log(1/\delta)\right)\left(1\vee\frac{d+\log(n/\delta)}{m}\right)1/\nu,$  then
\[
\left\|\mb \beta^t-\mb \beta^\star\right\|_2\leq \epsilon\|\mb \beta^0-\mb \beta^\star\|_2
\] 
holds with probability at least $1-\delta-\delta'.$ 
\end{corollary}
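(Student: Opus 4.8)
The plan is to convert the in-expectation linear-convergence bound of \Cref{thm:main_mini_batch} into a high-probability statement by a single application of Markov's inequality, followed by a union bound to fold in the randomness of the data. First I would specialize \Cref{thm:main_mini_batch} to the noiseless regime $\sigma=0$: the batch-size condition \eqref{eq:minibatchsize} becomes vacuous, and the additive error term in \eqref{eq:errorbound_minibatch} vanishes, leaving the pure geometric bound
\[
\E_{I_t}\left\|\mb \beta^t-\mb \beta^\star\right\|_2 \leq r^t \left\|\mb \beta^0-\mb \beta^\star\right\|_2, \qquad r:=1-\left(1\wedge\frac{m}{d+\log(n/\delta)}\right)c\nu,
\]
which holds on a data-event $\mathcal{G}$ of probability at least $1-\delta$ (the event supplied by \Cref{thm:main_mini_batch}, depending only on the covariates and noise, hence independent of the batch draws $I_1,\dots,I_t$).

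Next, working on $\mathcal{G}$ and treating the batch randomness as the only remaining source of uncertainty, I would apply Markov's inequality to the nonnegative random variable $\|\mb \beta^t-\mb \beta^\star\|_2$, writing $F:=\{\|\mb \beta^t-\mb \beta^\star\|_2 > \epsilon\|\mb \beta^0-\mb \beta^\star\|_2\}$ for the failure event:
\[
\P\!\left(F \,\middle|\, \mathrm{data}\right) \leq \frac{\E_{I_t}\left\|\mb \beta^t-\mb \beta^\star\right\|_2}{\epsilon\left\|\mb \beta^0-\mb \beta^\star\right\|_2} \leq \frac{r^t}{\epsilon} \qquad \text{on } \mathcal{G}.
\]
It then suffices to choose $t$ so that $r^t\leq\epsilon\delta'$, i.e. $t\log(1/r)\geq\log(1/\epsilon)+\log(1/\delta')$. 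Using the elementary inequality $-\log(1-x)\geq x$ for $x\in(0,1)$ with $x=\left(1\wedge\frac{m}{d+\log(n/\delta)}\right)c\nu$, one obtains $\log(1/r)\geq\left(1\wedge\frac{m}{d+\log(n/\delta)}\right)c\nu$; since $1/\left(1\wedge\frac{m}{d+\log(n/\delta)}\right)=1\vee\frac{d+\log(n/\delta)}{m}$, the prescribed iteration count in the corollary statement forces $r^t\leq\epsilon\delta'$, and hence the conditional failure probability above is at most $\delta'$ (this matches the stated threshold after absorbing the absolute step-size constant $c$).

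Finally I would remove the conditioning on $\mathcal{G}$ by a union bound over the two independent sources of randomness:
\[
\P(F) \leq \P(\mathcal{G}^c) + \P(F\cap\mathcal{G}) = \P(\mathcal{G}^c) + \E\!\left[\bbone_{\mathcal{G}}\,\P(F\mid \mathrm{data})\right] \leq \delta+\delta',
\]
which is exactly the claimed confidence $1-\delta-\delta'$. I do not anticipate a genuine obstacle: the argument is a routine Markov-plus-union-bound reduction. The only point requiring care is the bookkeeping of the two randomness sources — keeping the data-event $\mathcal{G}$ and the batch draws $I_1,\dots,I_t$ separate, so that Markov is applied solely over the batch randomness on which the conditional expectation in \eqref{eq:errorbound_minibatch} is taken — together with the mild algebra that turns the contraction factor $r$ into the stated lower bound on $t$ via $-\log(1-x)\geq x$.
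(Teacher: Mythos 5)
Your proposal is correct and takes essentially the same route as the paper's own proof: invoke the in-expectation bound \eqref{eq:errorbound_minibatch} of \Cref{thm:main_mini_batch} (with the additive term vanishing in the noiseless setting), apply Markov's inequality over the batch randomness, and use the assumption on $t$ together with a union bound over the data event to reach confidence $1-\delta-\delta'$. Your extra bookkeeping—the explicit separation of the two randomness sources and the $-\log(1-x)\geq x$ algebra converting the iteration threshold into $r^t\leq\epsilon\delta'$—only spells out steps the paper leaves implicit.
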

\begin{proof} 
By \Cref{thm:main_mini_batch}, \eqref{eq:errorbound_minibatch} holds with probability at least $1-\delta$. 
By applying Markov's inequality, we have 
\[
\P\left(\left\|\mb \beta^t-\mb \beta^\star\right\|_2\geq\epsilon\|\mb \beta^0-\mb \beta^\star\|_2\right)\leq\frac{\E_{I_t}\|\mb \beta^t-\mb \beta^\star\|_2}{\epsilon\|\mb \beta^0-\mb \beta^\star\|_2}\leq\frac{\left(1-\left(1\wedge\frac{m}{d+\log(n/\delta)}\right)\nu\right)^t}{\epsilon}\leq\delta',
\] where the second and third inequalities hold by \eqref{eq:errorbound_minibatch} and assumption on $t$ respectively.
\end{proof}

\Cref{cor:minibatch_highprob} presents the convergence of SGD with high probability, which is stronger than the convergence in expectation. 
Furthermore, there is no requirement on the batch size in invoking \Cref{cor:minibatch_highprob}. 
This result is analogous to the recent theoretical analysis of phase retrieval by randomized Kaczmarz \cite{tan2019phase} and SGD \cite{tan2019online}.

\section{Numerical results}
\label{sec:numerical}
We study the empirical performance of GD and mini-batch SGD for max-affine regression. 
The performance of these first-order methods is compared to AM \cite{ghosh2021max}. 
All of these algorithms start from the spectral initialization by Ghosh et al. \cite{ghosh2021max}. 
We use a constant step size $0.5$ for GD. 
The step size for SGD is set to $\frac{1\wedge (m/d)}{2}$ adaptive to the batch size. 
Since the spectral initialization operates under the Gaussian covariate model, covariates $\mb x_1,\ldots,\mb x_n$ are generated as independent copies of a random vector following $\mathrm{Normal}(\mb 0,\mb I_d)$.

First, we observe the performance of the three estimators for the exact parameter recovery in the noiseless case. 
%the empirical phase transition of exact recovery in the noiseless case through Monte Carlo simulations. 
In this experiment, the ground-truth parameters $\mb \theta_1^\star, \dots, \mb \theta_k^\star$ are generated as $k$ random pairwise orthogonal vectors with $k < d$, and the offset terms are set to $0$, i.e., $b_j^\star=0$ for all $j \in [k]$. 
By the construction, the probability assigned to the maximizer set of each linear model will be approximately $\frac{1}{k}$. 
In other words, the parameters $\pi_{\max}$ and $\pi_{\min}$ of the ground truth concentrate around $\frac{1}{k}$ where $\pi_{\min}$ is defined in \eqref{eq:def_pimin_pimax} and $\pi_{\max}:=\max_{j\in[k]}\P(\mb x\in\mathcal{C}_j^\star)$. 
Furthermore, due to the orthogonality, the pairwise distance satisfies $\|\mb \theta_j^\star - \mb \theta_{j'}^\star\|_2 = \sqrt{2}$ for all $j \neq j' \in [k]$. 
Consequently, the sample complexity results for GD and SGD by \Cref{thm:main_noise} and \Cref{thm:main_mini_batch} simplify to an easy-to-interpret expression $\tilde{O}(k^{16}d)$ that involves only $k$ and $d$. 
The sample complexity result on AM \cite{ghosh2019max} simplifies similarly.

\begin{figure}[H]
    \centering
    \hfill    
    \begin{minipage}{0.32\textwidth}
        \centering
        \includegraphics[scale=0.2]{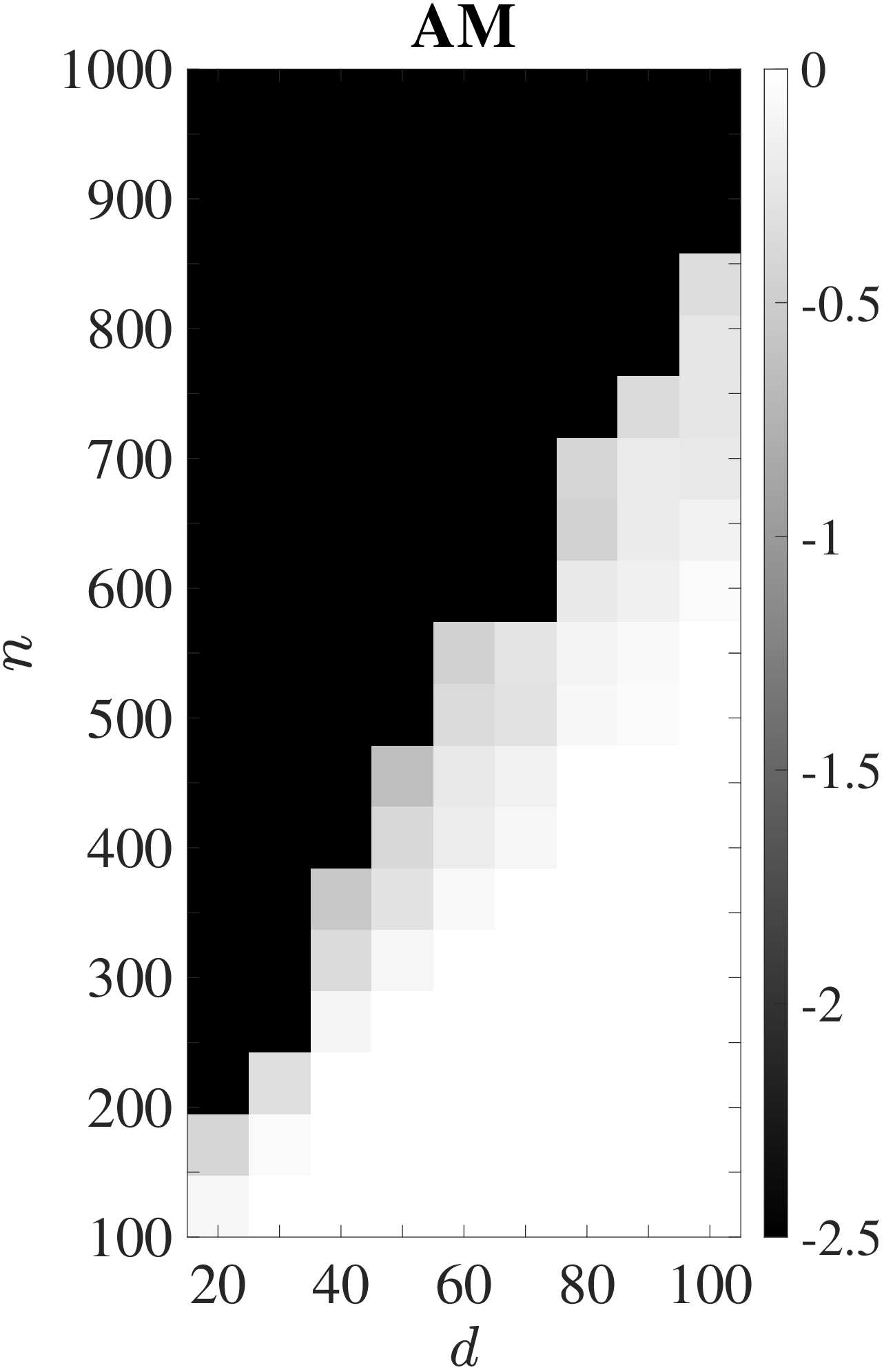}
    \end{minipage}
    \hfill
    \begin{minipage}{0.32\textwidth}
        \centering
        \includegraphics[scale=0.2]{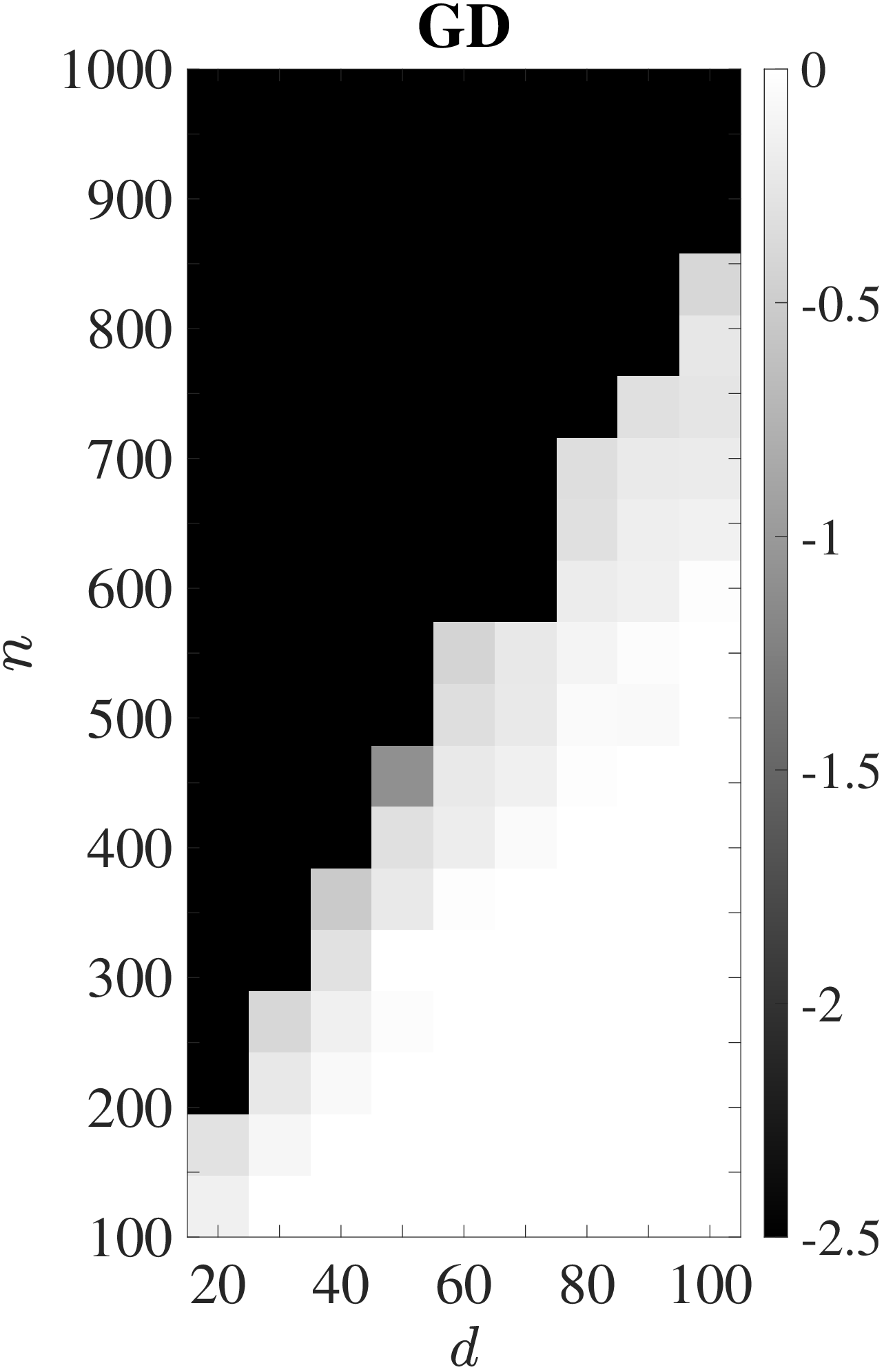}
    \end{minipage}
    \hfill
    \begin{minipage}{0.32\textwidth}
        \centering
        \includegraphics[scale=0.2]{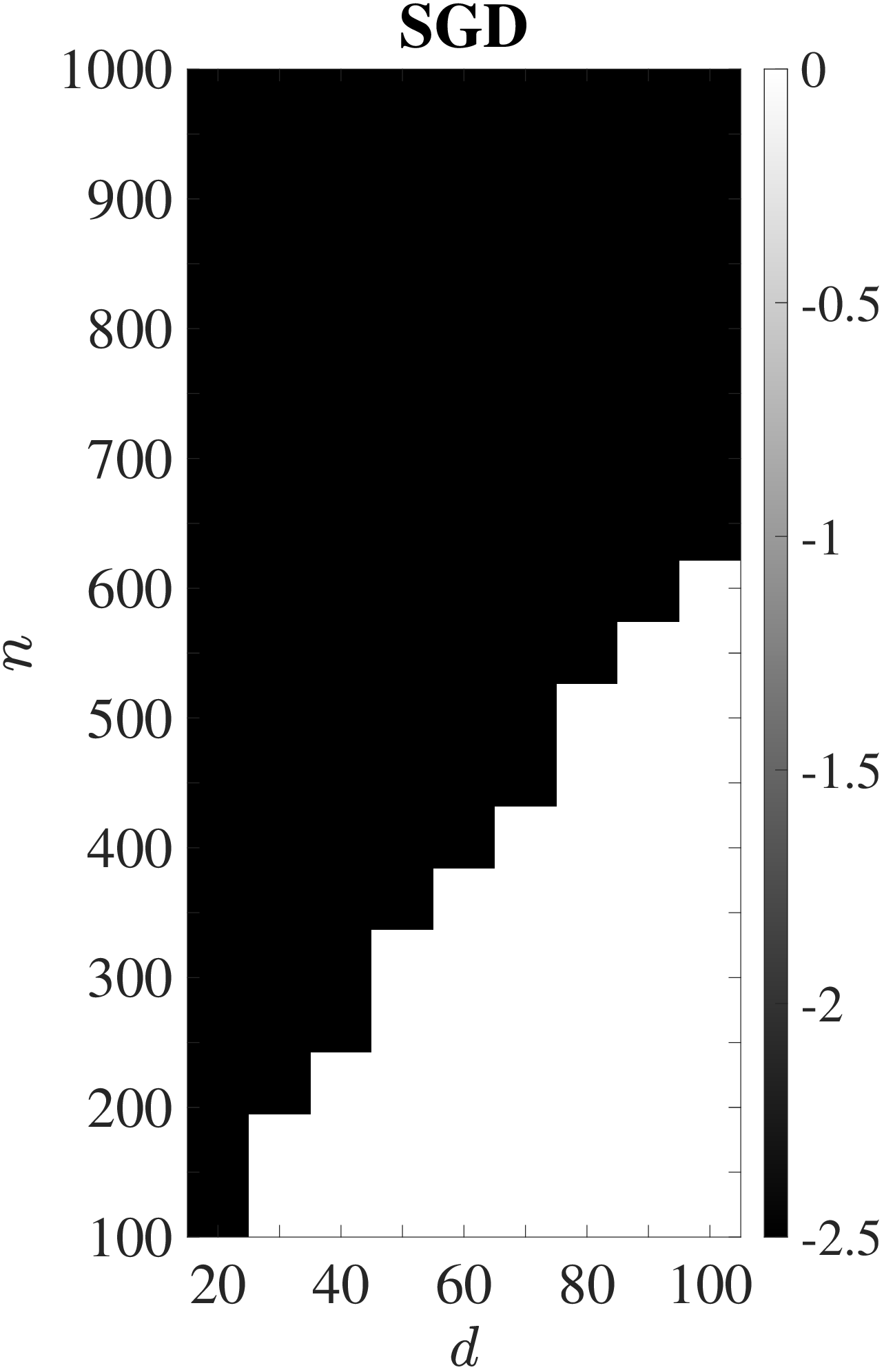}
    \end{minipage}
    \hfill
    \vspace{2em} \\
    \hfill    
    \begin{minipage}{0.32\textwidth}
        \centering
        \includegraphics[scale=0.2]{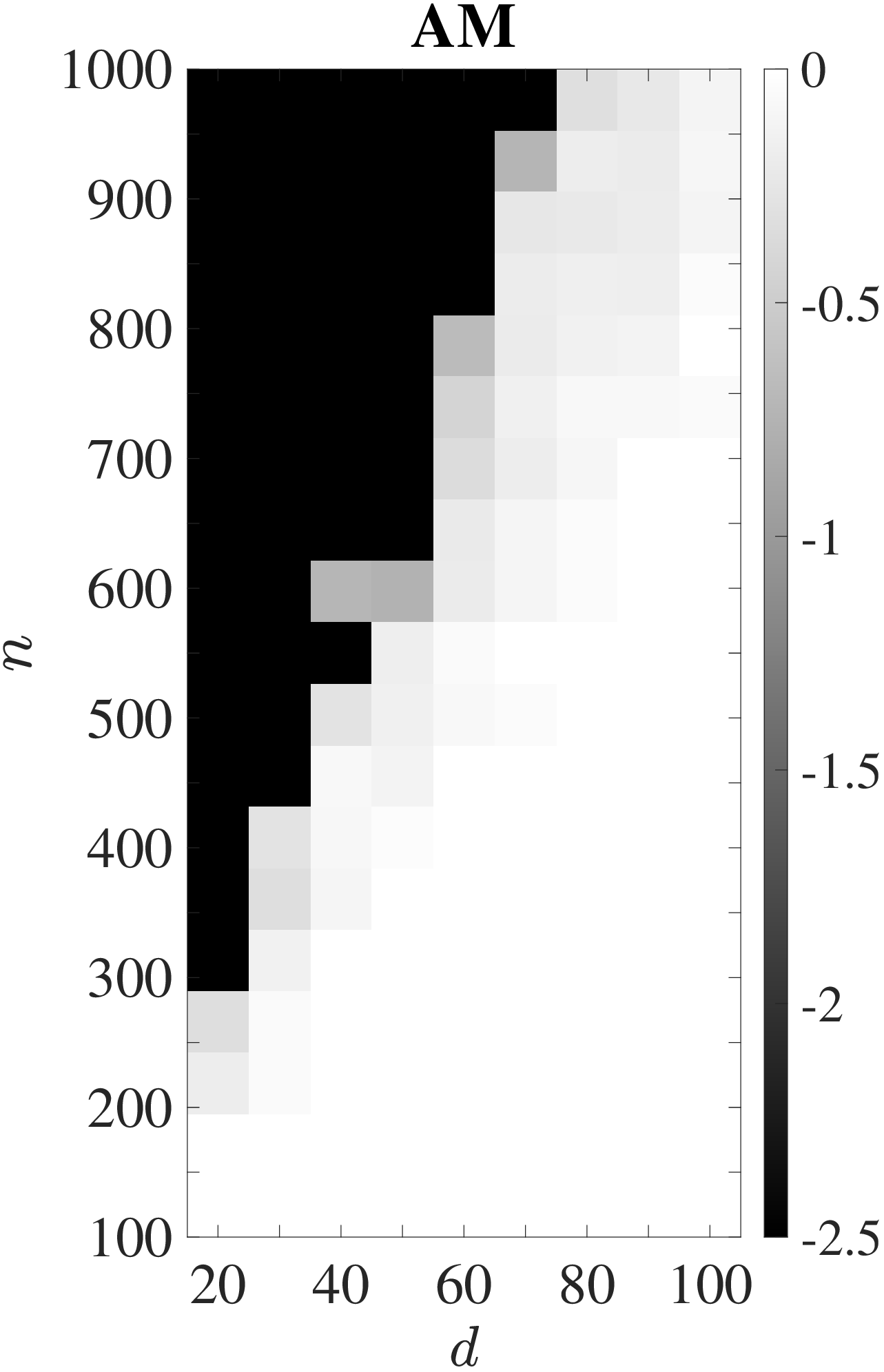}
    \end{minipage}
    \hfill
    \begin{minipage}{0.32\textwidth}
        \centering
        \includegraphics[scale=0.2]{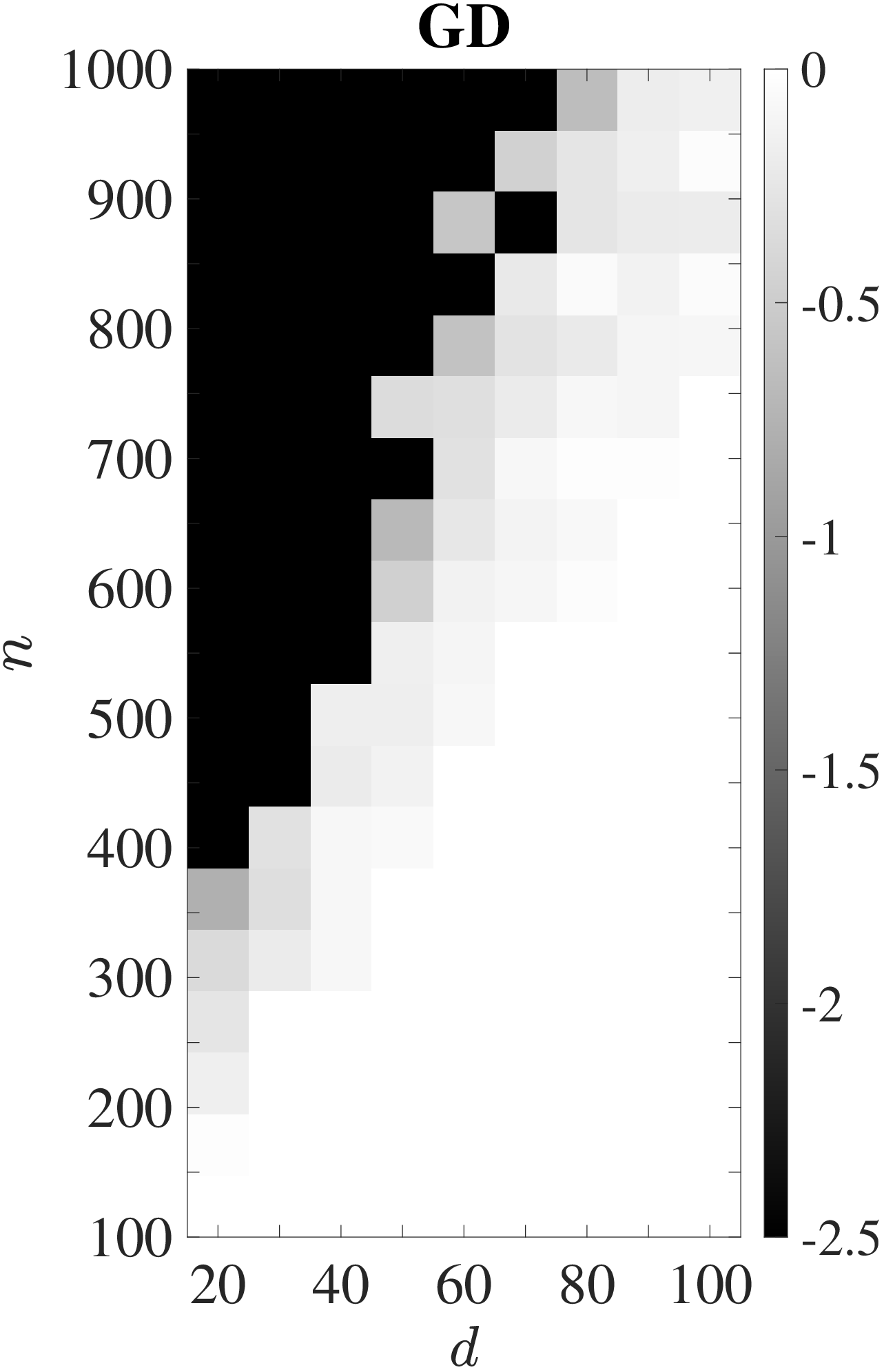}
    \end{minipage}
    \hfill
    \begin{minipage}{0.32\textwidth}
        \centering
        \includegraphics[scale=0.2]{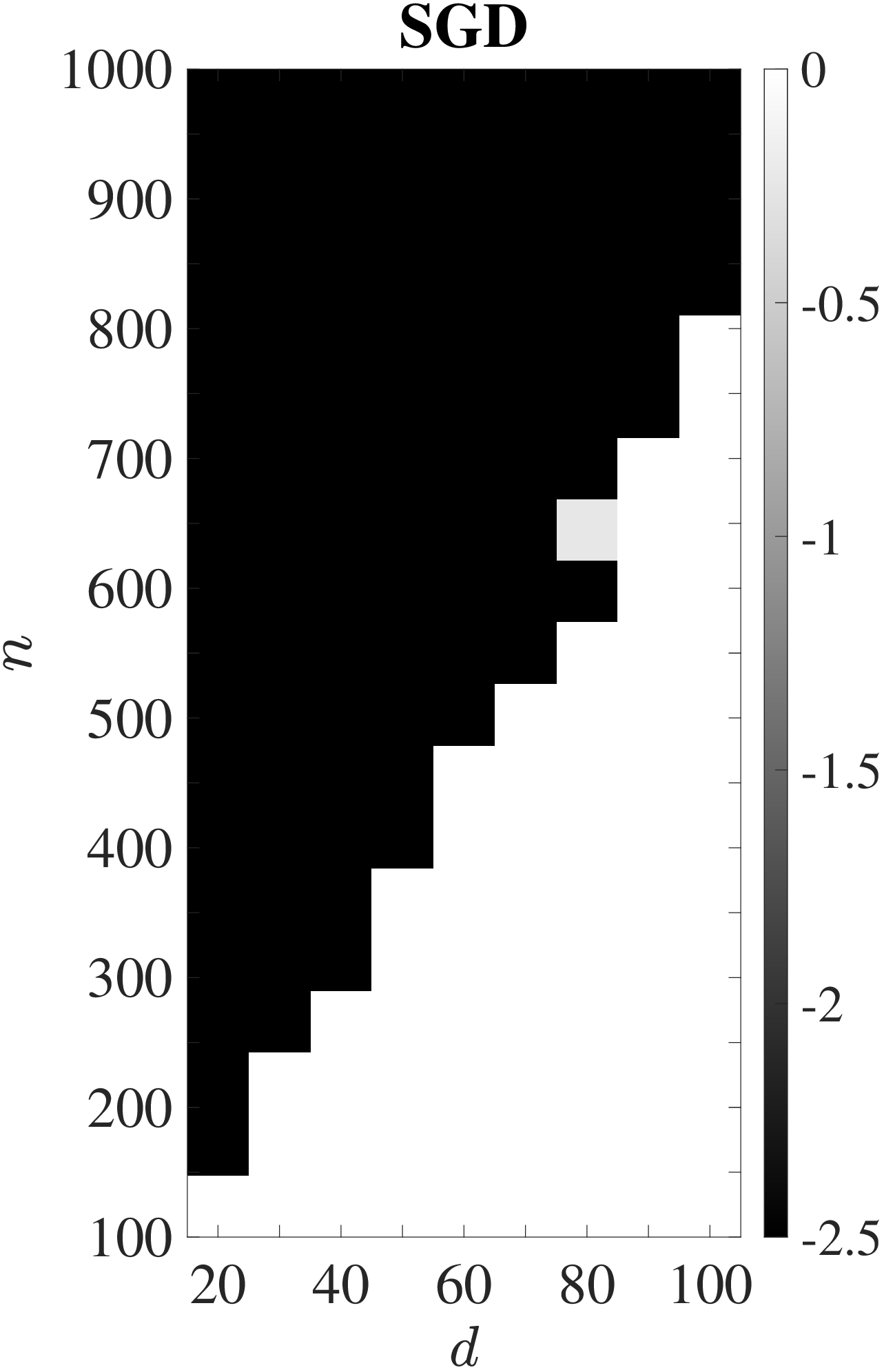}
    \end{minipage}
    \hfill    
    \caption{Phase transition of estimation error per the number of observations $n$ and the ambient dimension $d$ in the noiseless case (The number of linear models $k$ and the batch size $m$ are set to $3$ and $64$, respectively). {  The first row and the second row respectively show the median and the $90$th percentile of estimation errors in $50$ trials.}}
    \label{fig:k3}
\end{figure}

% \begin{figure}[th]
% \centering
% \includegraphics[scale=0.25]{}%{initGauss.eps}
% \vspace{-0.5in}
% \caption{Phase transition of estimation error per the number of observations $n$ and the ambient dimension $d$ in the noiseless case (The number of linear models $k$ and the batch size $m$ are set to $3$ and $64$, respectively).}
% \label{fig:k3}
% % \vspace{-0.1in}
% \end{figure}

\begin{figure}[H]
    \centering
    \hfill
    \begin{minipage}{0.32\textwidth}
        \centering
        \includegraphics[scale=0.2]{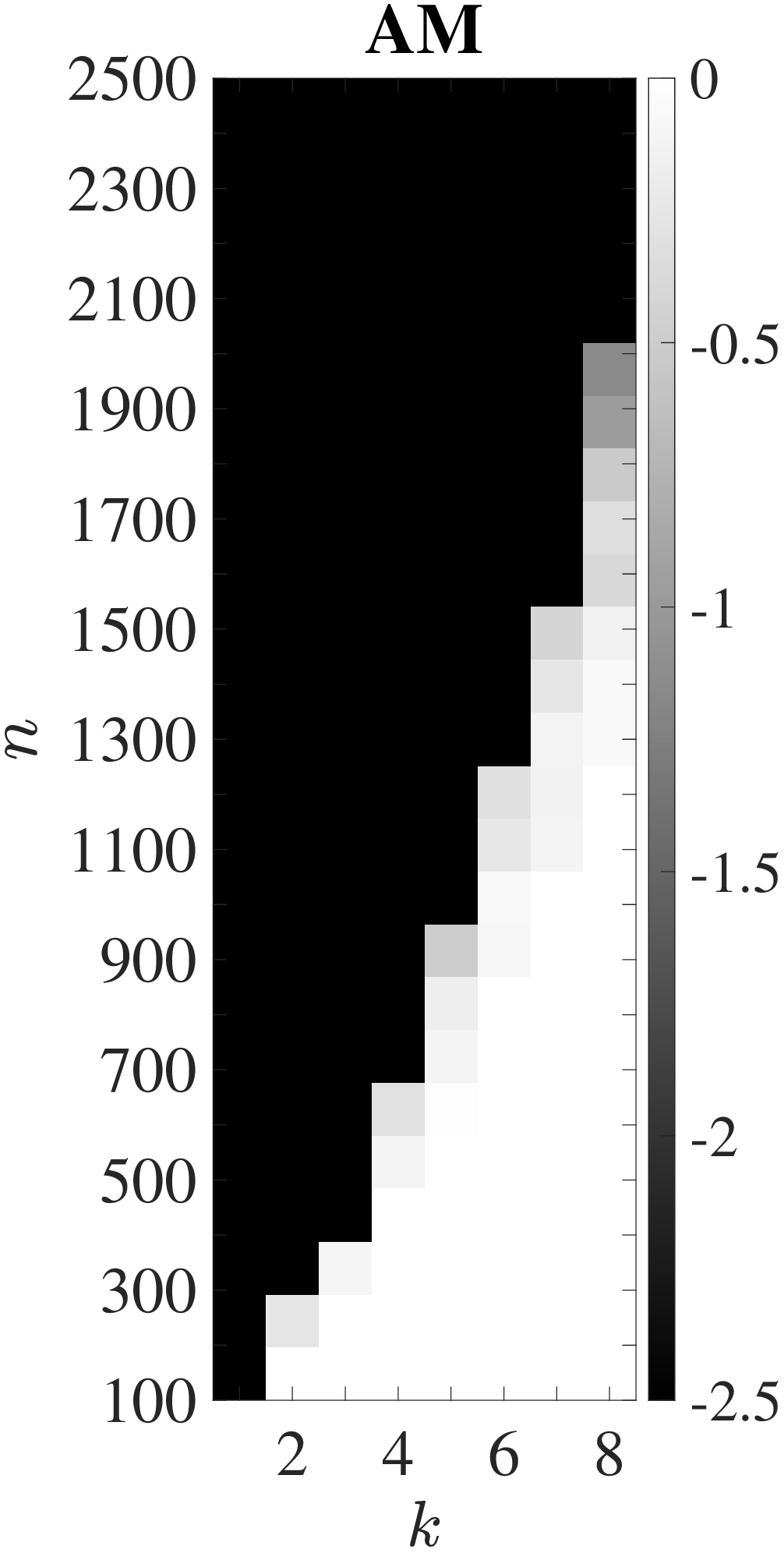}
    \end{minipage}
    \hfill
    \begin{minipage}{0.32\textwidth}
        \centering
        \includegraphics[scale=0.2]{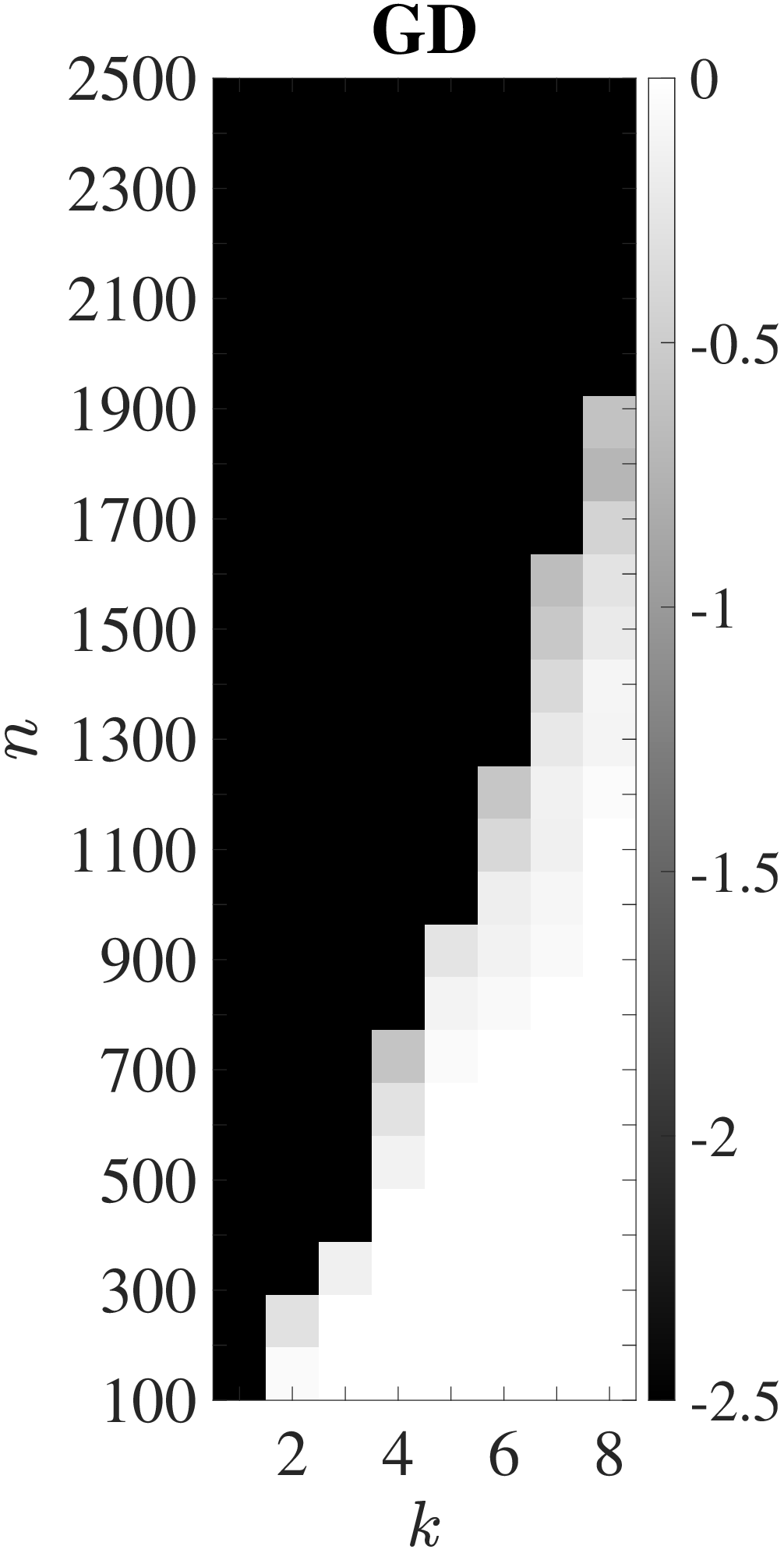}
    \end{minipage}
    \hfill
    \begin{minipage}{0.32\textwidth}
        \centering
        \includegraphics[scale=0.2]{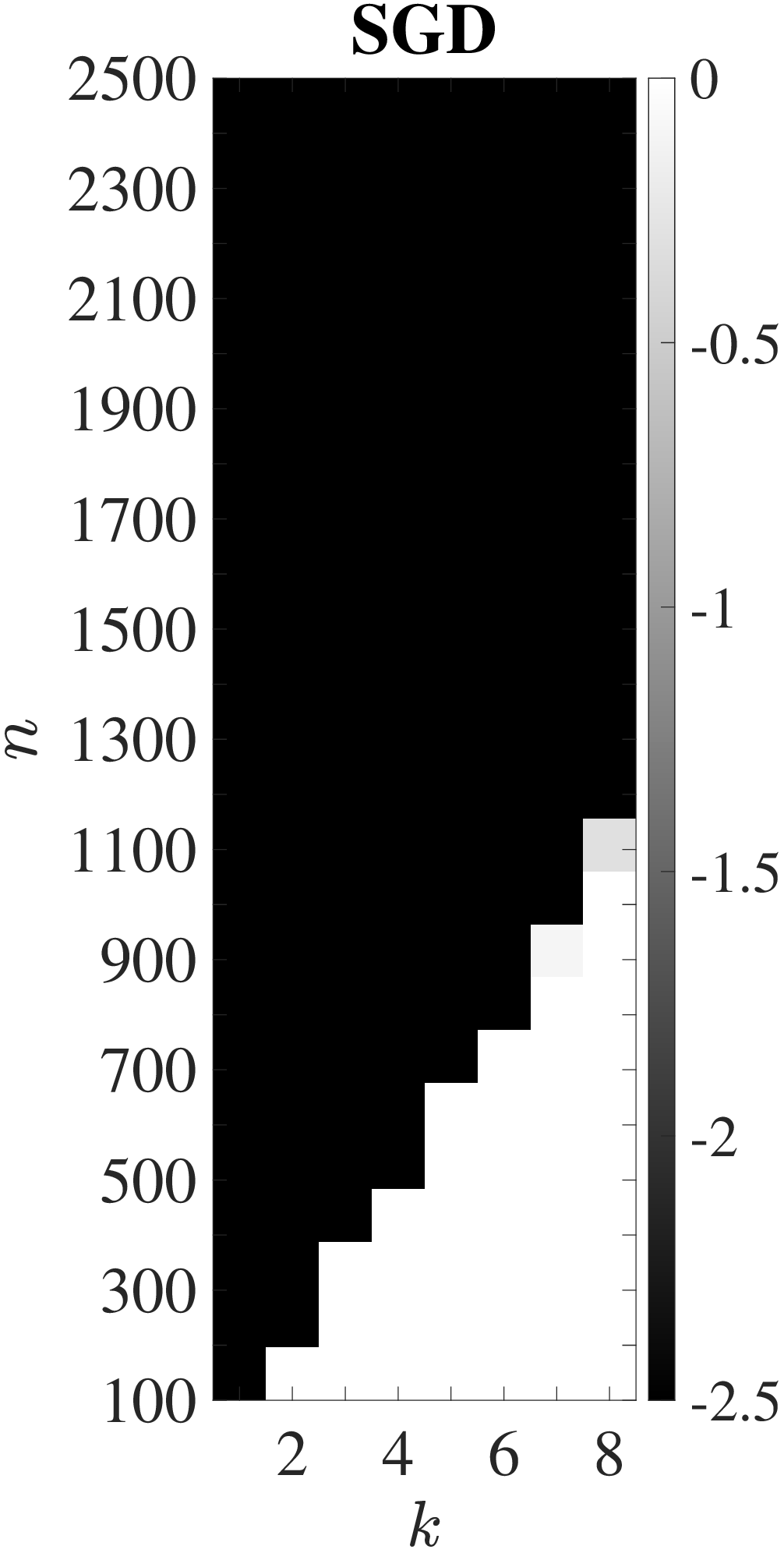}
    \end{minipage}
    \hfill \\
    \vspace{2em}
    \hfill
    \begin{minipage}{0.32\textwidth}
        \centering
        \includegraphics[scale=0.2]{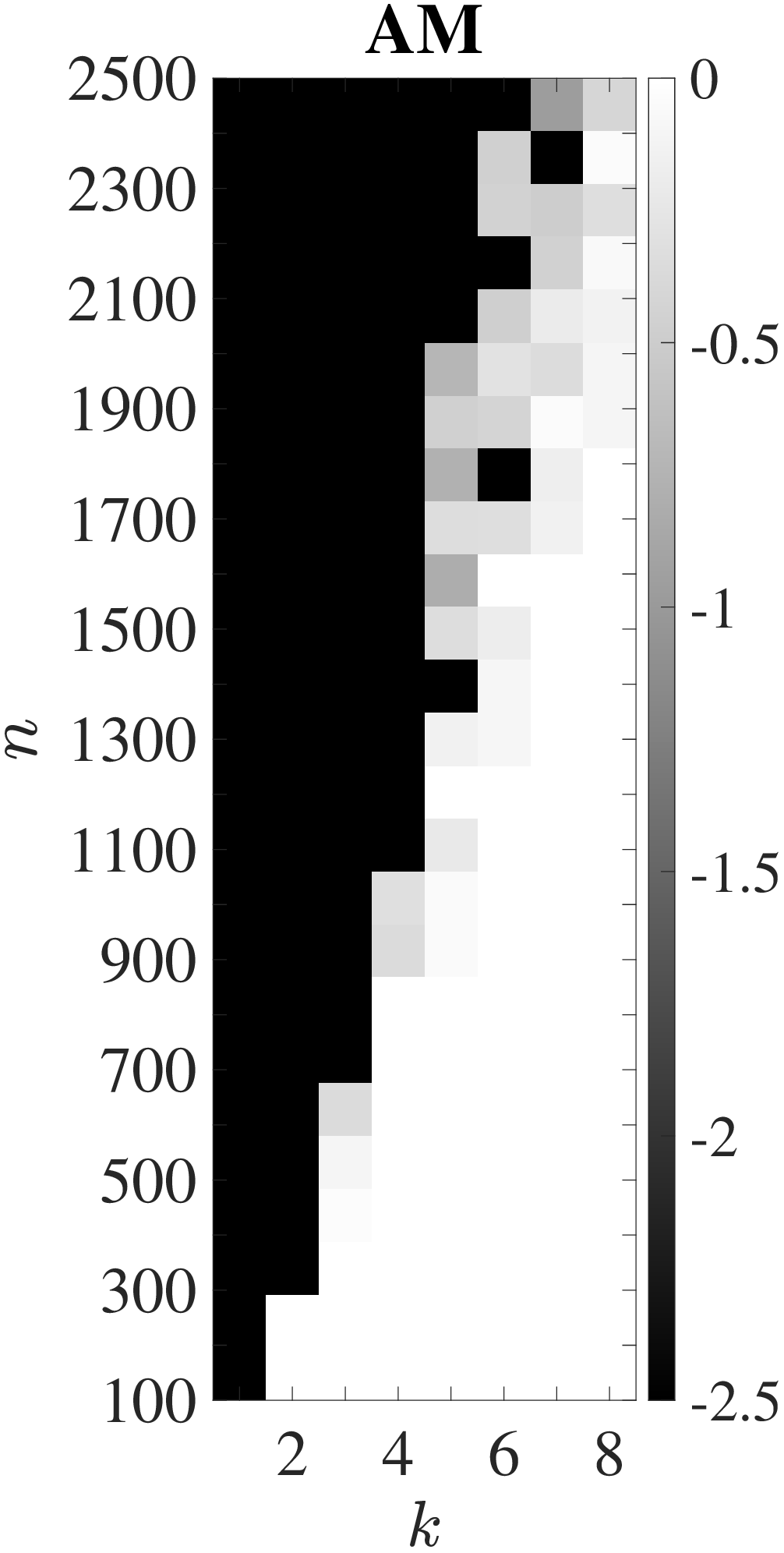}
    \end{minipage}
    \hfill
    \begin{minipage}{0.32\textwidth}
        \centering
        \includegraphics[scale=0.2]{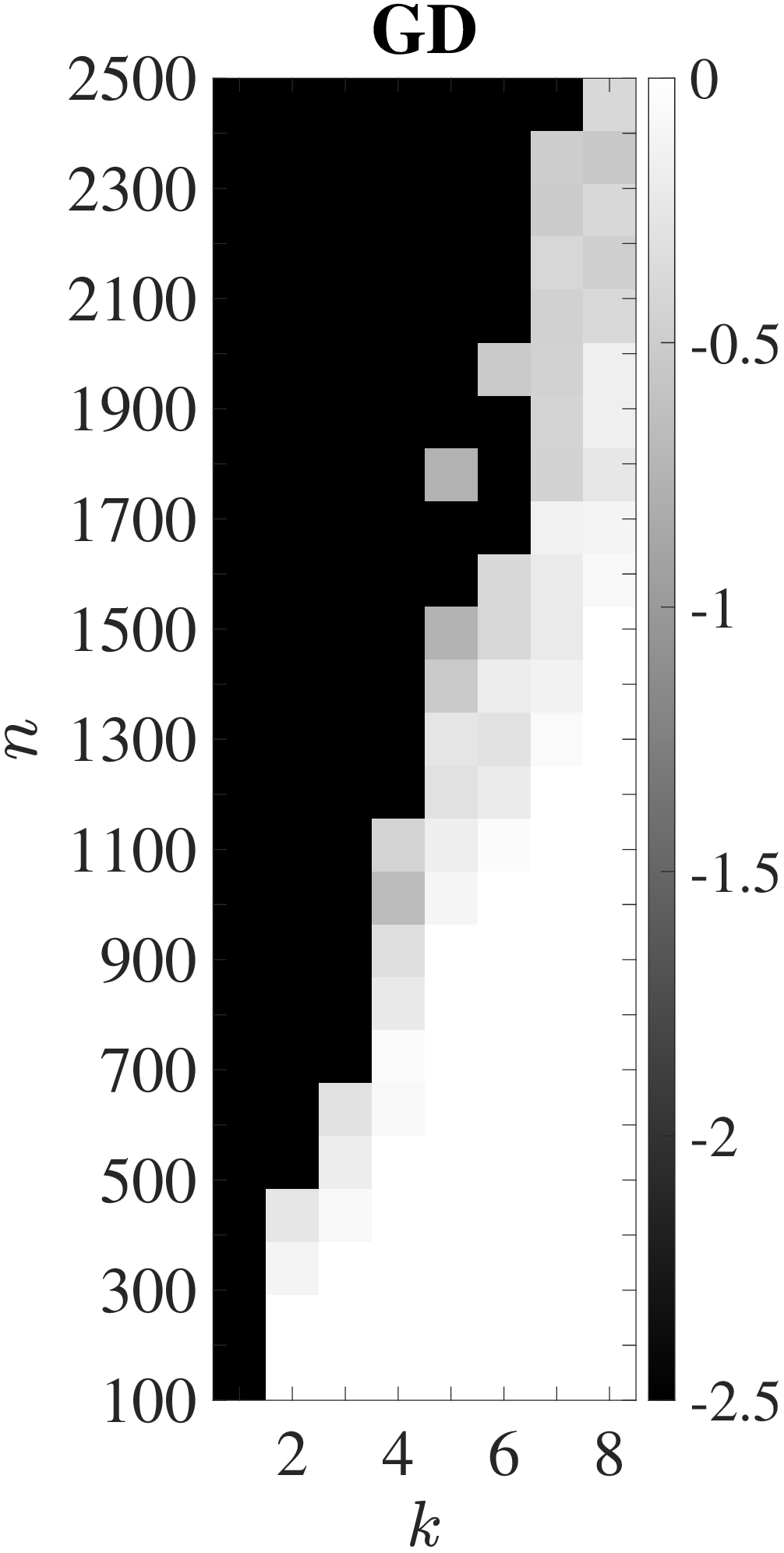}
    \end{minipage}
    \hfill
    \begin{minipage}{0.32\textwidth}
        \centering
        \includegraphics[scale=0.2]{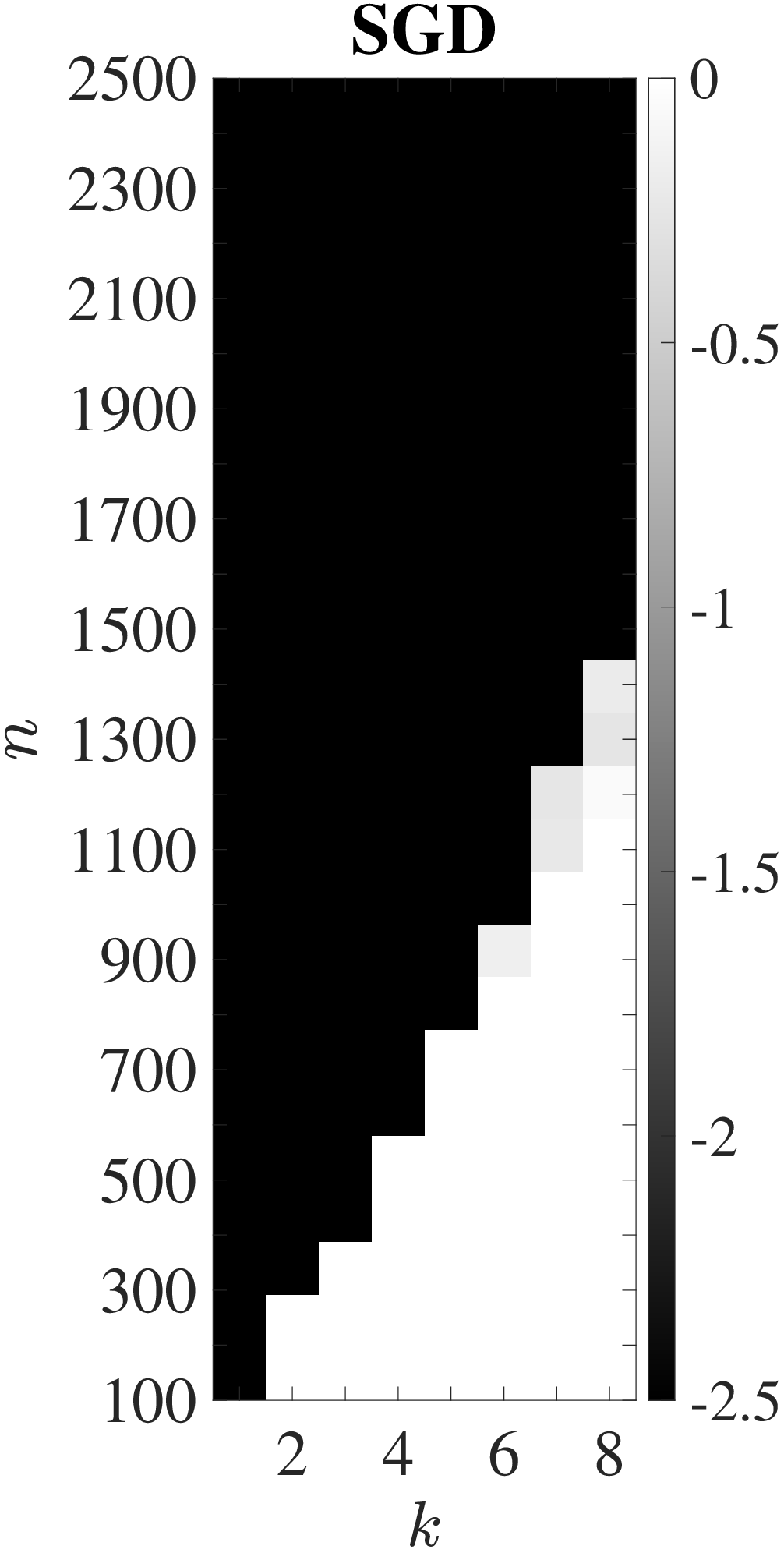}
    \end{minipage}
    \hfill
    \caption{Phase transition of estimation error per number of observations $n$ and number of linear models $k$ in the noiseless case (The ambient dimension $d$ and mini-batch size $m$ are set to $50$ and $64$ respectively).  The first row and the second row respectively show the median and the $90$th percentile of estimation errors in $50$ trials.}
    \label{fig:p50}
\end{figure}

% \begin{figure}[th]
% \centering
% \includegraphics[scale=0.2]{}%{initGauss.eps}
% \vspace{-0.1in}
% \caption{Phase transition of estimation error per number of observations $n$ and number of linear models $k$ in the noiseless case (The ambient dimension $d$ and mini-batch size $m$ are set to $50$ and $64$ respectively).}
% \label{fig:p50}
% % \vspace{-0.1in}
% \end{figure}

\Cref{fig:k3,fig:p50} illustrate the empirical phase transition by the three estimators through Monte Carlo simulations. 
{The median and the $90$th percentile of $50$ random trials are displayed.} 
%, where {\color{red}in the first row, the median of the normalized estimation error is displayed, while the $90$th percentile from $50$ random trials is shown in the second row, respectively.} 
In these figures, the transition occurs when the sample size $n$ becomes larger than a threshold that depends on the ambient dimension $d$ and the number of linear models $k$. 
\Cref{fig:k3} shows that the threshold for both estimators increases linearly with $d$ for fixed $k$. 
This observation is consistent with the sample complexity by \Cref{thm:main_noise} and \Cref{thm:main_mini_batch}. 
A complementary view is presented in \Cref{fig:p50} for varying $k$ and fixed $d$. 
The thresholds in \Cref{fig:p50} for GD and SGD are almost linear in $k$ when $d$ is fixed to $50$, which scales slower than the corresponding sample complexity results in \Cref{thm:main_noise} and \Cref{thm:main_mini_batch}. 
A similar discrepancy between theoretical and empirical phase transitions has been observed for AM \cite[Appendix~L]{ghosh2019max}.
%\Cref{fig:k3,fig:p50} illustrate that a mini-batch SGD outperforms AM and GD. 
We also observe that mini-batch SGD outperforms GD and AM with a lower threshold for phase transition. 
It has been shown that the inherent random noise in the gradient helps the estimator to escape saddle points or local minima \cite{jin2017escape,daneshmand2018escaping}. 
This explains why SGD recovers the parameters with fewer samples than GD. 
{We also note that the relative performance among the three estimators remain similar in both the median and the $90$th percentile. This shows that SGD for noiseless max-affine regression does not suffer from a large variance, which corroborates the result in Corollary~\ref{cor:minibatch_highprob}.}

\begin{figure}[htbp]
    \centering
    % First subfigure
    \begin{subfigure}[b]{0.45\textwidth}
        \includegraphics[scale=0.35]{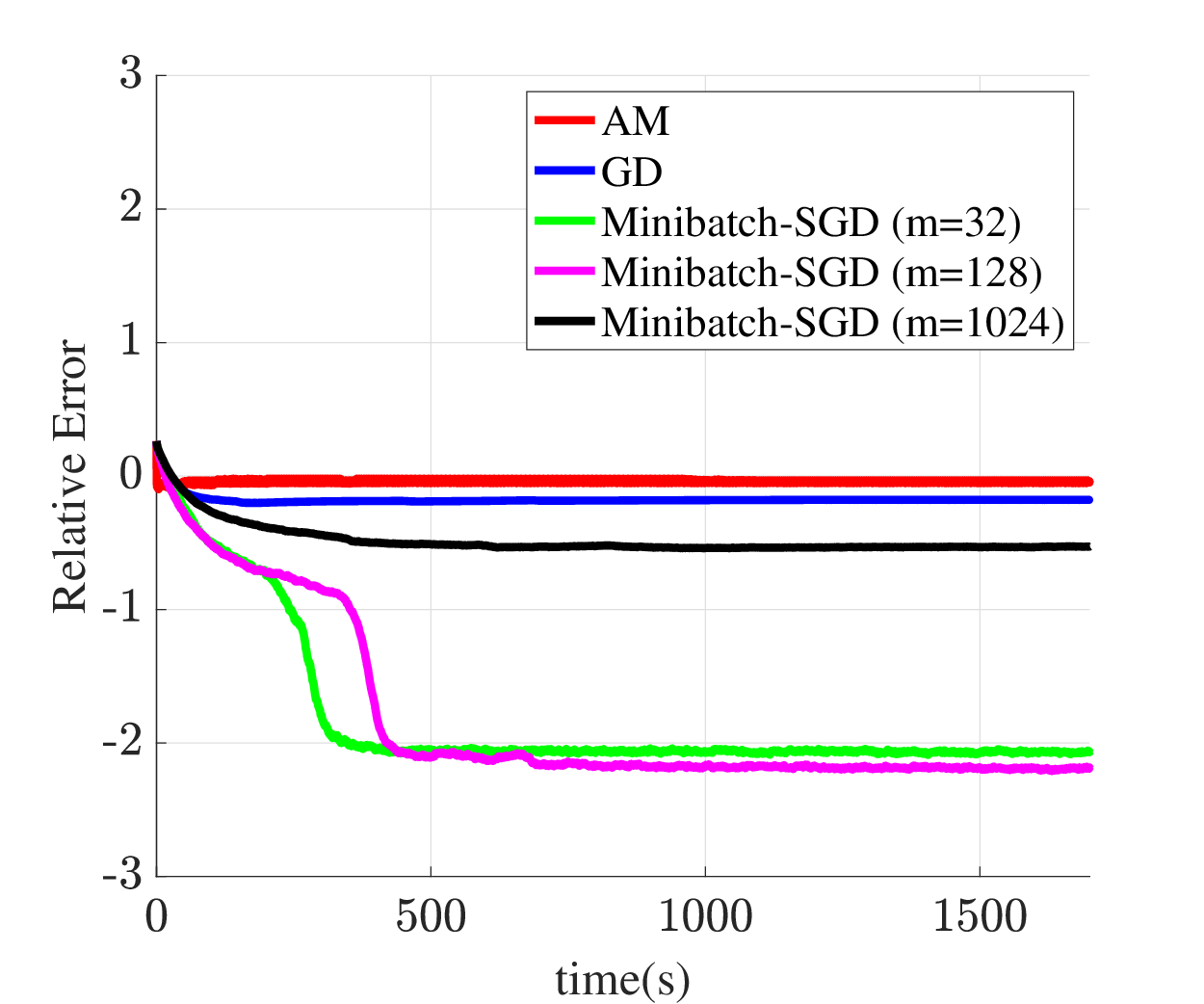}
        \caption{$n=1,500$}
        \label{fig:minibatch}
    \end{subfigure}
    \hfill % Space between the two subfigures
    % Second subfigure
    \begin{subfigure}[b]{0.45\textwidth}
    \includegraphics[scale=0.35]{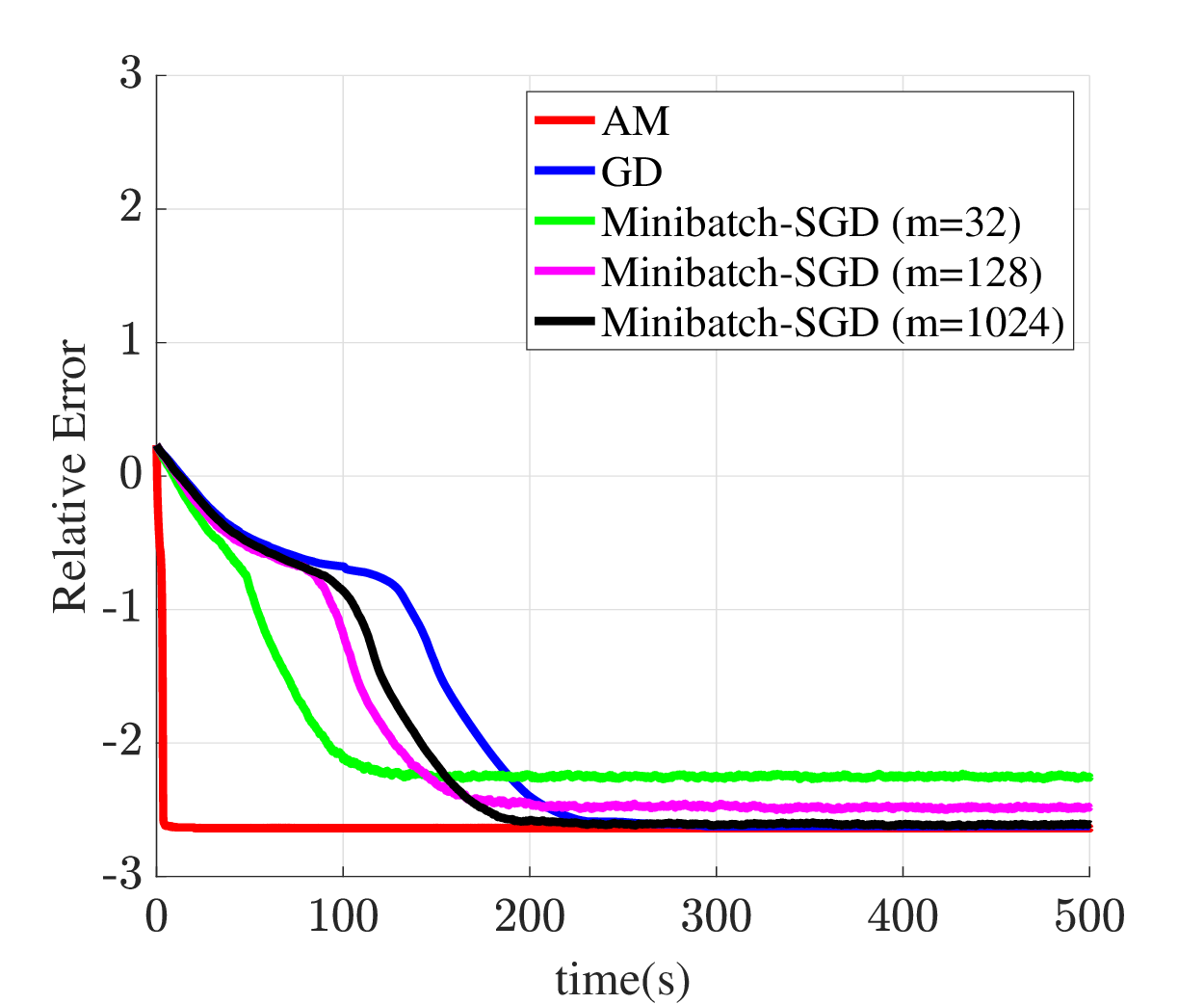}
        \caption{$n=3,000$}
        \label{fig:minibatch2}
    \end{subfigure}
    \caption{Convergence of estimators for max-affine regression under additive white Gaussian noise of variance $\sigma^2=0.01$ ($k=8$ and $d=50$).}
    \label{fig:minibatch_T}
\end{figure}

% \begin{figure}[th]
% \centering
% \includegraphics[scale=0.30]{}%{initGauss.eps}
% \vspace{-0.1in}
% \caption{Convergence of estimators for max-affine regression under additive white Gaussian noise of variance $\sigma^2=0.01$ ($k=8$, $d=50$, and $n=1,500$).
% %Convergence of mini-batch stochastic gradient descent in the noiseless case ($m=128,k=3,p=150$, and $n=3,000$).
% }
% \label{fig:minibatch}
% % \vspace{-0.1in}
% \end{figure}

% \begin{figure}[th]
% \centering
% \includegraphics[scale=0.30]{Figures/minibatchnoise_n1500.eps}%{initGauss.eps}
% \vspace{-0.1in}
% \caption{Convergence of estimators for max-affine regression under additive white Gaussian noise of variance $\sigma^2=0.01$ ($k=8$, $d=50$, and $n=3,000$).
% %Convergence of mini-batch stochastic gradient descent in the noiseless case ($m=128,k=3,p=150$, and $n=3,000$).
% }
% \label{fig:minibatch2}
% % \vspace{-0.1in}
% \end{figure}

\begin{figure}[th]
\centering
\includegraphics[scale=0.35]{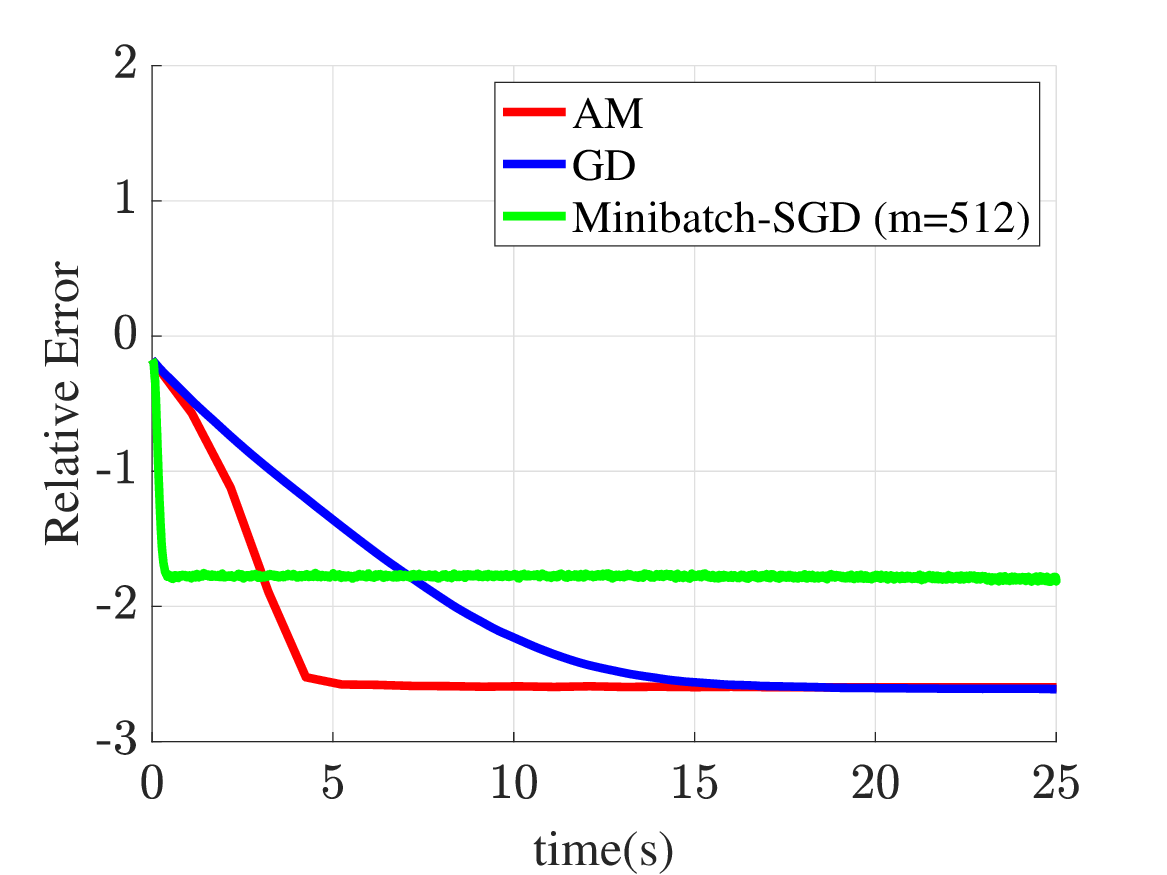}%{initGauss.eps}
\caption{Convergence of estimators for max-affine regression under additive white Gaussian noise of variance $\sigma^2=0.01$ ($k=3$, $d=500$, and $n=8,000$).
%Convergence of mini-batch stochastic gradient descent in the noiseless case ($m=128,k=3,p=150$, and $n=3,000$).
}
\label{fig:linearconvergence2}
% \vspace{-0.1in}
\end{figure}

% \Cref{fig:minibatch} reveals that as 
% $m$ increases, the final error of SGD decreases, supporting \Cref{thm:main_mini_batch}. However, a large $m$ makes the stochastic gradient resemble the full gradient, risking stagnation at local minima. Balancing 

% $m$ is essential for optimal performance. Meanwhile, \Cref{fig:stepsize} indicates that smaller step sizes yield better error estimates. Yet, if the step is too small, the SGD can get trapped in local minima due to minor trajectory adjustments.

{\Cref{fig:minibatch_T,fig:linearconvergence2} study the estimation error by mini-batch SGD under zero-mean Gaussian noise with standard deviation $\sigma=0.1$ in three different scenarios. In \Cref{fig:minibatch_T}, we focus on observing how the batch size $m$ affects the convergence speed and the estimation error. 
{\Cref{fig:minibatch} considers the scenario where the spectral method provides a poor initialization due to a small number of observations.} 
%when the number of samples is not sufficiently large compared to the model complexity. In this case, the number of samples is not sufficient for the spectral method to provide a good initial estimate and for AM and GD to recover the ground-truth parameters. 
Consequently, GD and AM fail to provide a low estimation error. In contrast, mini-batch SGD with a small batch size ($m = 32$ or $m = 128$) relative to the total number of samples ($n = 1,500$) converges to a small estimation error ($< 10^{-2}$). %As shown in \Cref{thm:main_mini_batch}, 
{In other words, there exists a trade-off between the convergence speed and the estimation error determined by the batch size $m$.} 
SGD with $m = 128$ converges slower to a smaller error than SGD with $m = 32$. 
This corroborates the theoretical result in \Cref{thm:main_mini_batch}.
However, as the batch size $m$ further increases to $m = 1,024$ close to $n = 1,500$, SGD starts to fail like GD and AM. 
Again, this phenomenon is explained by the fact that the noisy gradient in SGD avoids saddle points and local minima efficiently \cite{jin2017escape,daneshmand2018escaping}. 

\Cref{fig:minibatch2} illustrates the comparison in a high-sample regime where
the number of samples is twice larger than that for \Cref{fig:minibatch}. 
In this case, both GD and AM converge to a smaller error than SGD. 
Moreover, AM converges faster than the other algorithms in the run time, which is explained by the following two reasons.  
First, as discussed in \Cref{sec:performance}, AM converges faster than GD and SGD in the iteration count with a smaller constant for linear convergence. 
%the convergence rate of AM per iteration is larger than the others. 
{ Second, due to the small ambient dimension ($d=50$), the gain in the per-iteration cost of SGD $O(kmd)$ over that of AM $O(knd^2)$ is not significant. 
%Therefore, due to the larger convergence rate and the insignificant increase in the per-iteration computation cost of AM, AM converges faster than SGD.}

Lastly, \Cref{fig:linearconvergence2}, compares the convergence of the estimators in the presence of noise when $d$, $k$, and $n$ are set as in \Cref{fig:linearconvergence}. 
%Given that the number of samples is sufficiently large relative to the number of parameters, spectral initialization results in a small error, allowing both GD and AM to perform effectively. As a result, we observe linear convergence for AM, GD, and SGD. 
{On one hand, SGD converges faster than AM with a significantly lower per-iteration cost $O(kmd)$ than $O(knd^2)$ due to the large ambient dimension ($d = 500$) and small batch size ($m = 512$ compared to $n = 8,000$). 
On the other hand, SGD yields a larger error than the other two estimators. 
The estimation error bound of SGD by \Cref{thm:main_mini_batch} behaves similarly in this case.}
%The final estimation error of SGD depends on $m$ according to \Cref{thm:main_mini_batch}. Consequently, while SGD converges the fastest, it yields a larger error than both GD and AM. Additionally, when compared to the results in \Cref{fig:minibatch2}, AM's convergence is slower than the others. This is due to its higher per-iteration cost, especially when $d$ is large.
}

\section{Discussion}
\label{sec:discussion}

We have established local convergence analysis of GD and SGD for max-affine regression under a relaxed covariate model with $\sigma$-sub-Gaussian noise. 
The covariate distribution characterized by the sub-Gaussianity and the anti-concentration generalizes beyond the standard Gaussian model. 
It has been shown that suitably initialized GD and SGD converge linearly below a non-asymptotic error bound, which is comparable to the analogous result on AM. 
Notably, when applied to noiseless max-affine regression, SGD empirically outperforms GD and AM in both sample complexity and convergence speed. 

Under a special case of the Gaussian covariate model, the spectral method by Ghosh et al. \cite{ghosh2021max} can provide the desired initial estimate. 
It is of great interest to extend their theory on the spectral method to the relaxed covariate model. 
Moreover, the extension of the theoretical result on GD and SGD to robust regression, where a subset of samples is corrupted as outliers, is also an intriguing future direction. 

% To further improve the computational cost, the GD and SGD for max-affine regression can be further considered in various directions. 
% For example, one would consider adaptive step sizes to accelerate convergence. 
% These improvements will make the gradient descent estimator an appealing option for large-scale problems in practical applications. 

\section*{Acknowledgement}
The authors thank Sohail Bahmani for helpful discussions. 
\bibliographystyle{abbrv}
\bibliography{ref}
\section{Tools}
\label{sec:tools}

This section collects a set of standard results on concentration inequalities, which will be used in the proofs of Theorem~\ref{thm:main_noise}. 
The following lemma provides the concentration of extreme singular values of sub-Gaussian matrices. 

\begin{lemma}[{\cite[Theorem~4.6.1]{vershynin2018high}}]
\label{lem:expect0}
Let $\{\mb x_i\}_{i=1}^n$ be independent isotropic $\eta$-sub-Gaussian random vectors in $\mathbb{R}^d$. Then there exists an absolute constant $C > 0$ such that 
\[
\P\left( \norm{\frac{1}{n}\sum_{i=1}^{n} \mb x_i \mb x_i^\top - \mb I_p} > \eta^2 \max(\epsilon,\epsilon^2) \right) \leq \delta \quad \text{where} \quad \epsilon = \sqrt{\frac{C(d + \log(2/\delta))}{n}}.
\]
\end{lemma}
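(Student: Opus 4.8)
The plan is to prove this via the standard $\epsilon$-net argument combined with Bernstein's inequality, which is the classical route for covariance estimation of sub-Gaussian vectors. Write $\mb A := \frac{1}{n}\sum_{i=1}^n \mb x_i \mb x_i^\top - \mb I_d$, which is symmetric. First I would discretize the unit sphere: fix a $1/4$-net $\mc N$ of $\mathbb{S}^{d-1}$ with $|\mc N| \leq 9^d$, and use the elementary bound $\norm{\mb A} \leq 2 \sup_{\mb v \in \mc N} |\langle \mb A \mb v, \mb v\rangle|$ valid for symmetric $\mb A$. This reduces the problem to controlling the scalar quantity $\langle \mb A \mb v, \mb v\rangle$ simultaneously over the finite net.

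Second, for each fixed $\mb v \in \mathbb{S}^{d-1}$ I would expand
\[
\langle \mb A \mb v, \mb v \rangle = \frac{1}{n}\sum_{i=1}^n \left( \langle \mb x_i, \mb v\rangle^2 - 1\right).
\]
By isotropy $\E \langle \mb x_i, \mb v\rangle^2 = 1$, so each summand is centered; by the sub-Gaussianity hypothesis, $\langle \mb x_i, \mb v\rangle$ has sub-Gaussian norm at most $\eta$, whence $\langle \mb x_i, \mb v\rangle^2 - 1$ is a centered sub-exponential random variable with $\psi_1$-norm $\lesssim \eta^2$. Applying Bernstein's inequality then gives, for each fixed $\mb v$,
\[
\P\left( |\langle \mb A \mb v, \mb v \rangle| > t \right) \leq 2\exp\left( -c n \min\left( \frac{t^2}{\eta^4}, \frac{t}{\eta^2} \right)\right).
\]

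Third, I would take a union bound over $\mc N$, picking up a factor $9^d$, and then choose $t = \eta^2 \max(\epsilon, \epsilon^2)$ with $\epsilon = \sqrt{C(d + \log(2/\delta))/n}$. With this choice the two branches of the Bernstein exponent align with the two branches of $\max(\epsilon,\epsilon^2)$, so that $c\, n \min(t^2/\eta^4, t/\eta^2) \gtrsim C(d + \log(2/\delta))$; taking the absolute constant $C$ large enough absorbs the term $d\log 9$ coming from the net cardinality and leaves a surplus $\log(2/\delta)$, yielding total failure probability at most $\delta$. Combining with the factor $2$ lost in the net approximation gives the stated bound on $\norm{\mb A}$.

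The main obstacle is the bookkeeping of the two regimes of Bernstein's inequality, which is precisely why the bound features $\max(\epsilon, \epsilon^2)$ rather than a single power of $\epsilon$: the sub-exponential tail transitions from Gaussian-type ($t^2$) to exponential-type ($t$) behavior at scale $t \asymp \eta^2$, and in both regimes the net-cardinality term $d\log 9$ must be dominated by choosing $C$ appropriately. Since the statement is a verbatim restatement of \cite[Theorem~4.6.1]{vershynin2018high}, one may alternatively invoke the reference directly rather than reproduce the argument.
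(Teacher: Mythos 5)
Your proposal is correct: the paper does not prove this lemma at all but simply cites it as \cite[Theorem~4.6.1]{vershynin2018high}, and your $1/4$-net reduction, sub-exponential Bernstein bound for $\frac{1}{n}\sum_i(\langle \mb x_i,\mb v\rangle^2-1)$, and union bound with $t=\eta^2\max(\epsilon,\epsilon^2)$ is precisely the argument given in that reference. As you note yourself, invoking the citation directly (as the paper does) suffices.
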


\begin{remark}
% \label{rem:subgmtx}
It has been shown that Lemma~\ref{lem:expect0} continues to hold when $\mb x_i$ is substituted by $\mb \xi = [\mb x_i;\ 1]$ \cite{ghosh2019max}. Indeed, multiplying a random sign to the last coordinate of $\mb \xi_i$ does not modify the outer product $\mb \xi_i \mb \xi_i^\top$ whereas $\mb \xi_i$ remains a sub-Gaussian vector.     
\end{remark}

\noindent Furthermore, we also use the results from the standard Vapnik–Chervonenkis (VC) theory stated in the following lemmas. 
\begin{lemma}[{\cite[Theorem~2]{vapnik2015uniform}}]
\label{lem:uniform}
Let $\mathcal{V}$ be a collection of subsets of a set $\mathcal{X}$ and $\{\mb x_i\}_{i=1}^n$ be $n$ independent copies of a random variable $\mb x \in \mathcal{X}$. 
Then it holds for all $\epsilon>0$ and $n\geq2/\epsilon^2$ that
\[
\P\left(\sup_{V\in\mathcal{V}}\left|\frac{1}{n}\sum_{i=1}^{n}\bbone_{\{\mb x_i\in V\}}-\P(\mb x\in V)\right|\geq \epsilon\right)\leq4\Pi_{\mathcal{V}}(2n)\exp(-n\epsilon^2/16),
\]
where $\Pi_{\mathcal{V}}(n)$ denotes the growth function defined by
\begin{equation*}
\label{eq:def_growthfunction}
\Pi_{\mathcal{V}}(n):=\max_{\mb x_1,\ldots,\mb x_n\in\mathcal{X}}\left|\left\{\left(\bbone_{\{\mb x_1\in V\}},\ldots,\bbone_{\{\mb x_n\in V\}}\right):V\in \mathcal{V}\right\}\right|.
\end{equation*}
\end{lemma}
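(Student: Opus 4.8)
The plan is to give the classical three-step proof of the Vapnik--Chervonenkis uniform convergence theorem: symmetrization by a ghost sample, reduction of the supremum over $\mathcal{V}$ to a maximum over the finitely many sign patterns counted by the growth function, and a Hoeffding/union-bound tail estimate. (In the paper one simply cites \cite[Theorem~2]{vapnik2015uniform}; the sketch below records the route and where the constants $4$, $2n$, and $1/16$ come from.)

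First I would introduce a ghost sample $\{\mb x_i'\}_{i=1}^n$, an independent copy of $\{\mb x_i\}_{i=1}^n$, with empirical measure $\hat{P}_n'(V):=\tfrac1n\sum_{i=1}^n\bbone_{\{\mb x_i'\in V\}}$, and write $\hat{P}_n(V):=\tfrac1n\sum_{i=1}^n\bbone_{\{\mb x_i\in V\}}$. The symmetrization step asserts that, for $n\geq 2/\epsilon^2$,
\[
\P\left(\sup_{V\in\mathcal{V}}\left|\hat{P}_n(V)-\P(\mb x\in V)\right|\geq\epsilon\right)\leq 2\,\P\left(\sup_{V\in\mathcal{V}}\left|\hat{P}_n(V)-\hat{P}_n'(V)\right|\geq\frac{\epsilon}{2}\right).
\]
This is proved by a measurable selection of a set $V_\ast$ attaining the deviation $\geq\epsilon$ on the left-hand event and observing that, conditionally on the first sample, $\var(\hat{P}_n'(V_\ast))\leq 1/(4n)$, so Chebyshev's inequality gives $\P(|\hat{P}_n'(V_\ast)-\P(\mb x\in V_\ast)|\geq\epsilon/2)\leq 1/(n\epsilon^2)\leq 1/2$; this is precisely where the hypothesis $n\geq 2/\epsilon^2$ enters and produces the leading factor $2$.

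Next I would symmetrize the signs. Since $\mb x_i$ and $\mb x_i'$ are exchangeable, swapping the pair $(\mb x_i,\mb x_i')$ leaves the joint law invariant, so attaching independent Rademacher variables $\{\sigma_i\}_{i=1}^n$ does not change the distribution of the symmetrized process, and the right-hand probability equals
\[
\P\left(\sup_{V\in\mathcal{V}}\left|\frac1n\sum_{i=1}^{n}\sigma_i\left(\bbone_{\{\mb x_i\in V\}}-\bbone_{\{\mb x_i'\in V\}}\right)\right|\geq\frac{\epsilon}{2}\right).
\]
I would then condition on the pooled sample $\{\mb x_i,\mb x_i'\}_{i=1}^n$ of $2n$ points. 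By the definition of the growth function, the collection of distinct vectors $\left(\bbone_{\{\mb x_1\in V\}},\dots,\bbone_{\{\mb x_n'\in V\}}\right)$ realized as $V$ ranges over $\mathcal{V}$ has cardinality at most $\Pi_{\mathcal{V}}(2n)$, so the supremum is in fact a maximum over at most $\Pi_{\mathcal{V}}(2n)$ fixed coefficient vectors $(a_i)_i$ with $a_i:=\bbone_{\{\mb x_i\in V\}}-\bbone_{\{\mb x_i'\in V\}}\in\{-1,0,1\}$. For each fixed such vector, $\tfrac1n\sum_i\sigma_i a_i$ is a zero-mean sum of independent bounded terms, so Hoeffding's inequality yields a sub-Gaussian tail; tracking the $\epsilon/2$ threshold through Hoeffding's inequality gives the exponent $-n\epsilon^2/16$ reported in \cite{vapnik2015uniform}. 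A union bound over the at most $\Pi_{\mathcal{V}}(2n)$ vectors (two-sided, contributing another factor $2$), followed by taking expectation over the pooled sample and reinstating the symmetrization factor $2$, yields $4\,\Pi_{\mathcal{V}}(2n)\exp(-n\epsilon^2/16)$, as claimed.

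The main obstacle is the symmetrization inequality itself: the measurable selection of a near-maximizing $V_\ast$ together with the Chebyshev argument showing that the ghost sample concentrates around $\P(\mb x\in V_\ast)$ with probability at least $1/2$. This is exactly what converts a deviation from the unknown population probability into a deviation between two empirical measures and pins down the requirement $n\geq 2/\epsilon^2$. Once the process has been symmetrized, the combinatorial reduction to $\Pi_{\mathcal{V}}(2n)$ patterns and the Hoeffding union bound are routine.
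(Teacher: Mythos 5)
Your sketch is correct, and it is the standard argument; but note that the paper never proves this lemma at all --- it is imported verbatim as \cite[Theorem~2]{vapnik2015uniform}, so there is no in-paper proof to compare against, and what you have written is precisely the classical three-step derivation (ghost-sample symmetrization, reduction to the at most $\Pi_{\mathcal{V}}(2n)$ sign patterns on the pooled sample, Hoeffding plus a union bound) that underlies the cited theorem. Your accounting of the hypotheses and constants is sound: the Chebyshev step on the ghost sample, $\var(\hat{P}_n'(V_\ast))\leq 1/(4n)$ and hence failure probability at most $1/(n\epsilon^2)\leq 1/2$, is indeed the only place $n\geq 2/\epsilon^2$ enters, and it produces the outer factor $2$, while the two-sided Hoeffding bound and the union bound over patterns produce the remaining $2\,\Pi_{\mathcal{V}}(2n)$. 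One small correction in your constant tracking: for a fixed pattern with coefficients $a_i\in\{-1,0,1\}$, the Rademacher--Hoeffding bound gives $\P\left(\left|\sum_{i=1}^n\sigma_i a_i\right|\geq n\epsilon/2\right)\leq 2\exp\left(-\frac{(n\epsilon/2)^2}{2\sum_i a_i^2}\right)\leq 2\exp(-n\epsilon^2/8)$, i.e., a \emph{stronger} exponent than the $-n\epsilon^2/16$ you attribute to this step; the weaker constant $1/16$ in the classical statement comes from Vapnik and Chervonenkis's original conditional permutation argument rather than the Rademacher route. Since $\exp(-n\epsilon^2/8)\leq\exp(-n\epsilon^2/16)$, your argument proves the lemma as stated (indeed slightly more), so this is a bookkeeping remark, not a gap. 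The only genuine technical caveat, which you correctly flag, is the measurability of the supremum and of the selection $V_\ast$; this is handled in the standard way by outer probability or a countable subfamily reduction and does not affect the applications in the paper, where $\mathcal{V}$ consists of half-spaces and their finite intersections.
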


\begin{lemma}[{\cite[Corollary~3.18]{mohri2018foundations}}]
\label{lem:growthfunction}
Let $\mathcal{V}$ be a collection of subsets having VC dimension $d$. Then, for all $n\geq d$, the growth function of $\mathcal{V}$ is upper-bounded by
\[
\Pi_{\mathcal{V}}(n)\leq\left(\frac{en}{d}\right)^d.
\]
\end{lemma}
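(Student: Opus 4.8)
The plan is to establish the lemma in two stages: first I would prove the sharper combinatorial Sauer--Shelah inequality $\Pi_{\mathcal{V}}(n) \leq \sum_{i=0}^{d}\binom{n}{i}$, and then bound this binomial sum by $(en/d)^{d}$ for $n\geq d$. It is convenient to identify each binary pattern $(\bbone_{\{x_1\in V\}},\ldots,\bbone_{\{x_n\in V\}})$ in the definition of the growth function with the trace $V\cap S$ of $V$ on the ground set $S=\{x_1,\dots,x_n\}$, so that, writing $\mathcal{V}|_S:=\{V\cap S : V\in\mathcal{V}\}$, we have $\Pi_{\mathcal{V}}(n)=\max_{|S|=n}|\mathcal{V}|_S|$. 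It therefore suffices to bound $|\mathcal{V}|_S|$ for a fixed $S$ of size $n$.

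For the first stage I would argue by induction on $n$, allowing the VC dimension to vary. Singling out the element $x_n$, I would split the traces on $S$ into two families defined on $S':=S\setminus\{x_n\}$: the restriction $\mathcal{V}|_{S'}$, and the family $\mathcal{W}$ of those $T\subseteq S'$ for which both $T$ and $T\cup\{x_n\}$ appear in $\mathcal{V}|_S$. Projecting $\mathcal{V}|_S$ onto $S'$ by deleting $x_n$, each trace in $\mathcal{V}|_{S'}$ has a preimage of size one or two, the size-two preimages being exactly those indexed by $\mathcal{W}$; this yields the exact identity $|\mathcal{V}|_S| = |\mathcal{V}|_{S'}| + |\mathcal{W}|$.

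The key structural observation, which I expect to be the crux, is that $\mathcal{W}$ has VC dimension at most $d-1$. Indeed, if $\mathcal{W}$ shatters some $R\subseteq S'$, then for every $U\subseteq R$ there is $T\in\mathcal{W}$ with $T\cap R=U$, and by the defining property of $\mathcal{W}$ both $T$ and $T\cup\{x_n\}$ lie in $\mathcal{V}|_S$; restricting to $R\cup\{x_n\}$ produces both $U$ and $U\cup\{x_n\}$, so $\mathcal{V}$ shatters $R\cup\{x_n\}$ and hence $|R|+1\leq d$. Since $\mathcal{V}|_{S'}$ is a trace of $\mathcal{V}$ it has VC dimension at most $d$, so the inductive hypothesis applied to both families and Pascal's rule $\binom{n-1}{i}+\binom{n-1}{i-1}=\binom{n}{i}$ give
\[
|\mathcal{V}|_S| \leq \sum_{i=0}^{d}\binom{n-1}{i} + \sum_{i=0}^{d-1}\binom{n-1}{i} = \sum_{i=0}^{d}\binom{n}{i},
\]
with the base cases $n=0$ and $d=0$ (where the trace is forced to a single element) immediate.

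Finally, for $n\geq d$ I would convert the binomial sum into the stated closed form. Since $n/d\geq 1$, each factor $(n/d)^{d-i}$ is at least $1$ for $0\leq i\leq d$, so
\[
\sum_{i=0}^{d}\binom{n}{i} \leq \left(\frac{n}{d}\right)^{d}\sum_{i=0}^{d}\binom{n}{i}\left(\frac{d}{n}\right)^{i} \leq \left(\frac{n}{d}\right)^{d}\sum_{i=0}^{n}\binom{n}{i}\left(\frac{d}{n}\right)^{i} = \left(\frac{n}{d}\right)^{d}\left(1+\frac{d}{n}\right)^{n} \leq \left(\frac{en}{d}\right)^{d},
\]
using the binomial theorem and the elementary inequality $(1+d/n)^{n}\leq e^{d}$. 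This delivers the claimed bound $\Pi_{\mathcal{V}}(n)\leq(en/d)^{d}$.
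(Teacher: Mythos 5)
Your proof is correct, and it takes essentially the same route as the paper's source: the paper itself gives no proof of this lemma but imports it verbatim from \cite[Corollary~3.18]{mohri2018foundations}, and your two-stage argument --- Sauer--Shelah via the trace decomposition $|\mathcal{V}|_S| = |\mathcal{V}|_{S'}| + |\mathcal{W}|$ with $\mathrm{VCdim}(\mathcal{W}) \leq d-1$, followed by the weighting trick $(n/d)^{d-i}\geq 1$ and the binomial theorem to get $\sum_{i=0}^{d}\binom{n}{i}\leq (en/d)^{d}$ --- is precisely the standard proof in that cited reference. Every step checks out, including the shattering argument for $R\cup\{x_n\}$ (valid because $x_n\notin T$ for $T\subseteq S'$) and the Pascal-rule recombination, with only the tacit and harmless convention $d\geq 1$ needed so that $en/d$ is defined.
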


\noindent The VC dimension of the $k$-fold intersection has been known in the literature (e.g. see \cite{blumer1989learnability}). 
We will use the following lemma for the result for the intersection of size two. 
Since it was given as an exercise in \cite{mohri2018foundations}, we provide a proof for the sake of completeness. 

\begin{lemma}[{\cite[Equation~(3.53)]{mohri2018foundations}}]
\label{lem:intersection}
Let $\mathcal{V}$ and $\mathcal{W}$ be collections of subsets of a common set. Then their intersection given by $\mathcal{V}\cap\mathcal{W}:=\left\{V\cap W : V\in\mathcal{V},\,W\in\mathcal{W}\right\}$ satisfies that 
\[
\Pi_{\mathcal{V}\cap\mathcal{W}}(n)\leq \Pi_{\mathcal{V}}(n) \Pi_{\mathcal{W}}(n), \quad \forall n \in \mathbb{N}.
\]
\end{lemma}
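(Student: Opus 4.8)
The plan is to run a direct counting argument on the traces (dichotomies) induced on a fixed finite point set, exploiting the fact that membership in an intersection factorizes multiplicatively. First I would fix an arbitrary collection of $n$ points $x_1, \dots, x_n$ in the common ground set and, for any subset $U$, record its \emph{trace} on these points as the binary vector
\[
\phi_U := \left(\bbone_{\{x_1 \in U\}}, \dots, \bbone_{\{x_n \in U\}}\right) \in \{0,1\}^n.
\]
With this notation, the number of distinct traces achievable by $\mathcal{V}$ on this particular point set is $|\{\phi_V : V \in \mathcal{V}\}| \leq \Pi_{\mathcal{V}}(n)$, and analogously the number for $\mathcal{W}$ is at most $\Pi_{\mathcal{W}}(n)$, directly from the definition of the growth function.

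The key observation is the multiplicative identity
\[
\bbone_{\{x_i \in V \cap W\}} = \bbone_{\{x_i \in V\}} \cdot \bbone_{\{x_i \in W\}}, \quad \forall i \in [n],
\]
which shows that the trace $\phi_{V \cap W}$ is the entrywise product of $\phi_V$ and $\phi_W$ and is therefore completely determined by the ordered pair $(\phi_V, \phi_W)$. Consequently, the map $(\phi_V, \phi_W) \mapsto \phi_{V \cap W}$ is well-defined, so the number of distinct traces realized by $\mathcal{V} \cap \mathcal{W}$ on the fixed point set cannot exceed the number of distinct pairs $(\phi_V, \phi_W)$ as $V$ ranges over $\mathcal{V}$ and $W$ ranges over $\mathcal{W}$. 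That count is in turn bounded by $|\{\phi_V : V \in \mathcal{V}\}| \cdot |\{\phi_W : W \in \mathcal{W}\}| \leq \Pi_{\mathcal{V}}(n)\,\Pi_{\mathcal{W}}(n)$.

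Finally I would take the maximum of the number of $\mathcal{V}\cap\mathcal{W}$-traces over all choices of the points $x_1, \dots, x_n$; since the upper bound $\Pi_{\mathcal{V}}(n)\,\Pi_{\mathcal{W}}(n)$ is independent of that choice, the maximum remains bounded by it, yielding $\Pi_{\mathcal{V}\cap\mathcal{W}}(n) \leq \Pi_{\mathcal{V}}(n)\,\Pi_{\mathcal{W}}(n)$. There is no substantive obstacle here—the entire content lies in the factorization of the intersection indicator—so the only point requiring minor care is the bookkeeping: one must bound the number of pairs on a \emph{single} fixed point set before maximizing, and note that distinct intersections $V \cap W$ may collapse to the same trace, which only strengthens the inequality.
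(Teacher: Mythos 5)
Your proposal is correct and follows essentially the same argument as the paper: the key step in both is that the indicator vector (trace) of $V \cap W$ on any fixed point set is the pointwise product of the traces of $V$ and $W$, so the count of distinct traces of $\mathcal{V}\cap\mathcal{W}$ is bounded by the number of pairs of traces, hence by $\Pi_{\mathcal{V}}(n)\,\Pi_{\mathcal{W}}(n)$. Your write-up merely makes explicit the bookkeeping (fixing the point set before maximizing, and noting that distinct intersections may share a trace) that the paper leaves implicit.
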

\begin{proof}
For any $V \cap W \in \mathcal{V} \cap \mathcal{W}$, we have
\begin{align*}
\left(\bbone_{\{\mb x_1\in V\cap W\}},\ldots,\bbone_{\{\mb x_n\in V\cap W\}}\right)
=\left(\bbone_{\{\mb x_1\in V\}},\ldots,\bbone_{\{\mb x_n\in V\}}\right)\odot\left(\bbone_{\{\mb x_1\in W\}},\ldots,\bbone_{\{\mb x_n\in W\}}\right),
\end{align*} 
where $\odot$ denotes the pointwise product. 
Therefore, the claim follows from the definition of the growth function. 
\end{proof}

\begin{lemma}
\label{lem:bnd_growth_C}
Let $\mathcal{P}_k$ be the collection of all polytopes constructed by the intersection of $k$ half spaces in $\mathbb{R}^d$. 
Then the growth function of $\mathcal{P}_k$ satisfies
\begin{equation}
\label{eq:bnd_growth_intersec}
\Pi_{\mathcal{P}_k}(n)\leq\left(\frac{en}{d+1}\right)^{k(d+1)}.
\end{equation}
\end{lemma}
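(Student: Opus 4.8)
The plan is to express $\mathcal{P}_k$ as a $k$-fold intersection of the single class of half-spaces, bound the growth function of that single class via Sauer--Shelah, and then invoke the multiplicativity of growth functions under intersection established in \Cref{lem:intersection}.

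First, I would let $\mathcal{H}$ denote the collection of all half-spaces in $\mathbb{R}^d$, i.e. sets of the form $\{\mb w \in \mathbb{R}^d : \langle \mb a, \mb w\rangle \geq b\}$ for $(\mb a, b) \in \mathbb{R}^{d+1}$, so that by definition $\mathcal{P}_k = \{H_1 \cap \cdots \cap H_k : H_1, \dots, H_k \in \mathcal{H}\}$. The key ingredient is the classical fact that $\mathcal{H}$ has VC dimension exactly $d+1$. I would justify this by the standard lifting argument: writing the membership condition as $\langle [\mb a;\ -b],\ [\mb w;\ 1]\rangle \geq 0$ identifies each affine half-space in $\mathbb{R}^d$ with a homogeneous linear threshold over the lifted covariates $[\mb w;\ 1] \in \mathbb{R}^{d+1}$, whose VC dimension is $d+1$; the lower bound comes from shattering $d+1$ points in general position, and the upper bound from Radon's theorem, which prevents any $d+2$ points from being shattered.

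With the VC dimension in hand, I would apply \Cref{lem:growthfunction} with its dimension parameter set to $d+1$ to obtain $\Pi_{\mathcal{H}}(n) \leq (en/(d+1))^{d+1}$ for $n \geq d+1$, and then apply \Cref{lem:intersection} inductively over the $k$ half-spaces to conclude $\Pi_{\mathcal{P}_k}(n) \leq \Pi_{\mathcal{H}}(n)^k \leq (en/(d+1))^{k(d+1)}$, which is exactly \eqref{eq:bnd_growth_intersec}.

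The step I expect to be the main obstacle is pinning the VC dimension of the half-space class at $d+1$ rather than the homogeneous value $d$, and making sure the argument is unaffected by the mixture of strict and non-strict inequalities (open and closed half-spaces) appearing in the definition of the $\mathcal{C}_j$'s in \eqref{eq:def_calCj}; since open and closed half-spaces share the same VC dimension, this only requires a brief remark, after which the remaining two steps are routine applications of the cited lemmas.
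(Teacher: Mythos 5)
Your proposal is correct and follows essentially the same route as the paper: decompose $\mathcal{P}_k$ as the $k$-fold class intersection of half-spaces, apply \Cref{lem:intersection} inductively, and bound $\Pi_{\mathcal{H}}(n)$ via \Cref{lem:growthfunction} using the fact that affine half-spaces in $\mathbb{R}^d$ have VC dimension $d+1$ (which the paper simply cites from Mohri et al., whereas you sketch the lifting/Radon justification). The extra remarks you flag (the lifted VC dimension being $d+1$ rather than $d$, and the irrelevance of strict versus non-strict inequalities) are fine but inessential elaborations of the same argument.
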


\begin{proof}
Let $\mathcal{H}_j$ be the collection of all half spaces in $\mathbb{R}^d$ for $j \in [k]$. 
Then, by the construction of $\mathcal{P}_k$, we have $\mathcal{P}_k=\cap_{j=1}^k \mathcal{H}_j$.
Therefore, by inductive application of \Cref{lem:intersection}, the growth function of $\mathcal{P}_k$ satisfies
\begin{equation}
\label{eq:bound_growthfunction}
\Pi_{\mathcal{P}_k}(n)\leq\prod_{j=1 }^k\Pi_{\mathcal{H}_{j}}(n).
\end{equation}
Furthermore, since the VC dimensions of half spaces in $\mathbb{R}^d$ is $d+1$ (e.g. see \cite[Section~3]{mohri2018foundations}), \Cref{lem:growthfunction} implies
\begin{equation}
\label{eq:bnd_H}
\Pi_{\mathcal{H}_{j}}(n)\leq\left(\frac{en}{d+1}\right)^{d+1}, \quad \forall j \in [k].
\end{equation} 
The assertion is obtained by plugging in \eqref{eq:bnd_H} into \eqref{eq:bound_growthfunction}.
\end{proof}

\noindent Finally, the following corollary is a direct consequence of Lemmas~\ref{lem:uniform}, \ref{lem:growthfunction}, and \ref{lem:intersection}. 

\begin{corollary}
\label{cor:concent_emp_prob}
Let $\delta \in (0,1)$ and $\mathcal{P}_k$ be the collection of all polytopes constructed by the intersection of $k$ half-spaces in $\mathbb{R}^d$. 
Suppose that $\{\mb x_i\}_{i=1}^n$ are independent copies of a random vector $\mb x \in \mathbb{R}^d$. 
Then it holds with probability at least $1-\delta$ that
\begin{align}
\sup_{Z\in\mathcal{P}_{k}}\left|\frac{1}{n}\sum_{i=1}^{n}\bbone_{\{\mb x_i\in Z\}}-\P(\mb x\in Z)\right|
\leq 
4\sqrt{\frac{\log(4/\delta)+2k(d+1)\log(2en/(d+1))}{n}}. 
\end{align}
\end{corollary}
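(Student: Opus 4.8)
The plan is to combine the uniform convergence bound for set families (\Cref{lem:uniform}) with the explicit growth-function estimate for the polytope family $\mathcal{P}_k$ established in \Cref{lem:bnd_growth_C}, and then simplify the resulting tail bound into the stated square-root form. The family $\mathcal{P}_k$ of intersections of $k$ half-spaces is a fixed collection of subsets of $\mathbb{R}^d$, so the empirical process $\sup_{Z\in\mathcal{P}_k}\bigl|\frac{1}{n}\sum_{i=1}^n\bbone_{\{\mb x_i\in Z\}}-\P(\mb x\in Z)\bigr|$ is exactly of the form controlled by \Cref{lem:uniform} with $\mathcal{V}=\mathcal{P}_k$.

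First I would apply \Cref{lem:uniform} with $\mathcal{V}=\mathcal{P}_k$ to obtain, for every $\epsilon>0$ and $n\geq 2/\epsilon^2$, the tail bound $4\,\Pi_{\mathcal{P}_k}(2n)\exp(-n\epsilon^2/16)$. Next I would substitute the growth-function estimate $\Pi_{\mathcal{P}_k}(2n)\leq\bigl(2en/(d+1)\bigr)^{k(d+1)}$ from \Cref{lem:bnd_growth_C} (evaluated at $2n$) into this bound, giving a failure probability at most $4\bigl(2en/(d+1)\bigr)^{k(d+1)}\exp(-n\epsilon^2/16)$. I would then set this expression equal to $\delta$ and solve for $\epsilon$: taking logarithms yields $n\epsilon^2/16 = \log(4/\delta)+k(d+1)\log\bigl(2en/(d+1)\bigr)$, so that $\epsilon = 4\sqrt{\bigl(\log(4/\delta)+k(d+1)\log(2en/(d+1))\bigr)/n}$. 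A short algebraic comparison confirms this matches the claimed right-hand side, since $k(d+1)\log(2en/(d+1))\leq 2k(d+1)\log(2en/(d+1))$ only loosens the bound and the stated form carries the factor $2$ in front of the $k(d+1)$ term.

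The only point requiring a little care is the bookkeeping on the arguments of the growth function: \Cref{lem:uniform} evaluates $\Pi_{\mathcal{P}_k}$ at $2n$ rather than $n$, which is why the logarithmic term in the final bound reads $\log(2en/(d+1))$ rather than $\log(en/(d+1))$, and this is precisely the source of the constant $2$ appearing inside the logarithm and, after the crude doubling, the factor $2$ multiplying $k(d+1)$. I would also note that \Cref{lem:bnd_growth_C} requires $2n\geq d+1$ for the growth-function estimate to apply, which is implied whenever the bound is non-vacuous. No genuine obstacle arises here; the statement is a direct assembly of the preceding three lemmas, and the main task is simply to invert the exponential tail to express the deviation as an explicit $O(\sqrt{\cdot/n})$ quantity and to verify that the numerical constants align with the stated form.
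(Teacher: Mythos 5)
Your proof is correct and follows exactly the route the paper intends: the corollary is stated there as a direct consequence of \Cref{lem:uniform} together with the growth-function bound of \Cref{lem:bnd_growth_C}, which is precisely your assembly, and the radius you derive (with $k(d+1)$ in place of $2k(d+1)$) is in fact slightly tighter, so the stated bound follows a fortiori. The one detail worth adding is that the hypothesis $n\geq 2/\epsilon^2$ of \Cref{lem:uniform} is automatically satisfied by your choice of $\epsilon$, since $\epsilon^2 \geq 16\log(4/\delta)/n > 2/n$ for $\delta\in(0,1)$.
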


\section{Supporting lemmas}

In this section, we list lemmas to prove Theorem~\ref{thm:main_noise}. These lemmas are borrowed from \cite{tan2019phase} and \cite{ghosh2019max}. We improve on a subset of these results derived with a streamlined proof. 

\subsection{Worst-case extreme eigenvalues of partial sum of outer products of covariates}
\label{sec:worst-case}

A partial sum of the outer products of covariates, $\sum_{i \in \mathcal{I}} \mb \xi_i \mb \xi_i^\top$ appears frequently in the proof. 
The summation indices in $\mathcal{I}$ often depend on covariates. 
%In many places in the proof, we consider a partial sum of the outer products of covariates, $\sum_{i \in \mathcal{I}} \mb \xi_i \mb \xi_i^\top$, where the summation indices depend on functions of covariates. 
The following lemma by Tan and Vershynin \cite{tan2019phase} provides a tail bound on the worst-case largest eigenvalue of $\sum_{i \in \mathcal{I}} \mb \xi_i \mb \xi_i^\top$ when the cardinality of $\mathcal{I}$ is bounded from above. 
\begin{lemma}[{\cite[Theorem~5.7]{tan2019phase}}]
\label{lem:upperbound}
Let $\delta\in(0,1/e)$, $\alpha\in(0,1)$, and $\mb \xi_{i}=[\mb x_i,1]\in\mathbb{R}^{d+1}$ for $i\in[n]$.  Suppose that Assumption~\ref{main_assumption} holds. 
Then it holds with probability at least $1-\delta$ that 
\[
\sup_{\mathcal{I}:|\mathcal{I}|\leq \alpha n}\lambda_{1}\left(\sum_{i\in\mathcal{I}}\mb \xi_{i}\mb \xi_{i}^\T\right)
\leq C_4 (\eta^2\vee1)\sqrt{\alpha} n
\] 
for some absolute constant $C_4 > 0$, provided 
\begin{equation}
\label{eq:samp_tanver}
n\geq \left(d\vee\frac{\log(1/\delta)}{\alpha}\right).
\end{equation}
\end{lemma}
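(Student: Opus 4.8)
The plan is to recast the worst-case eigenvalue as a doubly-indexed supremum and then control it by a net-plus-union-bound argument driven by Bernstein's inequality. First I would dispose of the cardinality constraint: since each summand $\mb\xi_i\mb\xi_i^\T$ is positive semidefinite, enlarging $\mathcal{I}$ only increases $\lambda_1$, so the supremum over $\{|\mathcal{I}|\leq\alpha n\}$ is attained on sets of the maximal size $m:=\lfloor\alpha n\rfloor\leq\alpha n$. Writing the top eigenvalue variationally, the target becomes
\[
\sup_{|\mathcal{I}|=m}\ \sup_{\mb u\in\mathbb S^{d}}\ \sum_{i\in\mathcal{I}}\langle\mb\xi_i,\mb u\rangle^{2},
\]
i.e.\ the supremum over unit directions $\mb u\in\mathbb S^{d}$ of the sum of the $m$ largest values among $\{\langle\mb\xi_i,\mb u\rangle^{2}\}_{i=1}^{n}$.

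For a \emph{fixed} direction $\mb u$ and a \emph{fixed} index set $\mathcal{I}$, the variables $\langle\mb\xi_i,\mb u\rangle$ are independent and, by Assumption~\ref{main_assumption} together with the bounded last coordinate of $\mb\xi_i$, sub-Gaussian with norm $\lesssim\eta\vee1$; hence $\langle\mb\xi_i,\mb u\rangle^{2}$ is sub-exponential with $\|\langle\mb\xi_i,\mb u\rangle^{2}\|_{\psi_1}\lesssim\eta^{2}\vee1=:K$. Bernstein's inequality then yields, for every $s>0$,
\[
\P\left(\sum_{i\in\mathcal{I}}\langle\mb\xi_i,\mb u\rangle^{2}\geq mK+s\right)\leq\exp\left(-c\min\left(\frac{s^{2}}{mK^{2}},\ \frac{s}{K}\right)\right).
\]
To make this uniform, I would discretize the sphere with a $1/4$-net $\mathcal{N}$ of $\mathbb S^{d}$ of cardinality $|\mathcal{N}|\leq 9^{\,d+1}$ (so that $\lambda_1(M)\leq 2\max_{\mb u\in\mathcal{N}}\mb u^\T M\mb u$ for symmetric $M$), and then union-bound the display over $\mathcal{N}$ and over the $\binom{n}{m}$ choices of $\mathcal{I}$, using the monotonicity of $x\mapsto x\log(en/x)$ to get $\log\binom{n}{m}\leq\alpha n\log(e/\alpha)$.

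The union bound forces the Bernstein exponent to dominate
\[
L:=\alpha n\log(e/\alpha)+(d+1)\log 9+\log(1/\delta),
\]
and solving $c\min(s^{2}/(mK^{2}),s/K)\gtrsim L$ gives, with probability at least $1-\delta$, the uniform estimate $\sum_{i\in\mathcal{I}}\langle\mb\xi_i,\mb u\rangle^{2}\lesssim K\,(\alpha n+\sqrt{\alpha n\,L}+L)\lesssim K\,(\alpha n\log(e/\alpha)+d+\log(1/\delta))$. It then remains to check that, under the sample-size hypothesis $n\geq d\vee(\log(1/\delta)/\alpha)$, each surviving term is dominated by $\sqrt{\alpha}\,n=\sqrt{mn}$: the term $\alpha n\log(e/\alpha)$ is $\lesssim\sqrt{\alpha}\,n$ because $\sqrt{\alpha}\log(e/\alpha)$ is bounded on $(0,1)$, while $\log(1/\delta)\leq\alpha n\leq\sqrt{\alpha}\,n$ follows directly from the hypothesis.

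The main obstacle I anticipate is absorbing the dimensional term $d$ produced by the spherical net into $\sqrt{\alpha}\,n$: this is precisely the step that requires the interplay between $n\geq d$ and the cardinality fraction $\alpha$ to be exploited carefully, since controlling $(d+1)\log 9$ by $\sqrt{\alpha}\,n$ is the delicate bookkeeping in the operative regime. More conceptually, it is the combinatorial cost of the supremum over the exponentially many supports — the factor $\binom{n}{m}\leq(e/\alpha)^{\alpha n}$ — that is responsible for the $\sqrt{\alpha}$ (rather than $\alpha$) scaling of the final bound and that dictates the correct balancing of the two Bernstein regimes; getting the constant $C_4$ out cleanly hinges on matching that combinatorial entropy against the sub-Gaussian tail in the quadratic regime whenever $\alpha n\gtrsim L$ and against the sub-exponential tail otherwise.
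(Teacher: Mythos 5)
The paper itself offers no proof of this lemma: it is imported from \cite[Theorem~5.7]{tan2019phase}, and the remark following it only explains how the non-isotropic vector $\mb \xi_i=[\mb x_i;\ 1]$ is accommodated by putting a random sign on the last coordinate. So your argument must stand alone, and its architecture is sound: the reduction to supports of maximal size by positive semidefiniteness, the variational form of $\lambda_1$, the $1/4$-net bound $\lambda_1(M)\leq 2\max_{\mb u\in\mathcal{N}}\mb u^\T M\mb u$, the union bound over the $\binom{n}{m}\leq(e/\alpha)^{\alpha n}$ index sets, and Bernstein's inequality for sub-exponential variables are all legitimate (your route never needs isotropy, so the appended coordinate $1$ is harmless). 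Carried out, it yields with probability $1-\delta$ the uniform bound $C(\eta^2\vee 1)\left(\alpha n\log(e/\alpha)+d+\log(1/\delta)\right)$.

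The genuine gap is precisely the step you postpone as ``delicate bookkeeping'': under the stated hypothesis \eqref{eq:samp_tanver}, the dimensional term $d$ cannot be absorbed into $\sqrt{\alpha}\,n$, since $n\geq d$ only gives $d\leq n$, and $\sqrt{\alpha}\,n\ll n$ when $\alpha$ is small. This is not bookkeeping but an impossibility: the left-hand side always dominates $\max_{i}\|\mb \xi_i\|_2^2$ (singletons are admissible subsets, because \eqref{eq:samp_tanver} forces $\alpha n\geq\log(1/\delta)>1$), and for standard Gaussian covariates $\max_i\|\mb \xi_i\|_2^2\gtrsim d$ with overwhelming probability. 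Taking $d=n$, $\delta=0.3$, and $\alpha=1.3/n$ satisfies every hypothesis of the lemma as written, yet makes the left side of order $n$ while the claimed right side is of order $\sqrt{n}$; so the statement with \eqref{eq:samp_tanver} read literally is false, and no argument can close your gap. The resolution is that the original Tan--Vershynin condition couples $n$, $d$, and $\alpha$: what is needed (and what holds in every application of the lemma in this paper, where $n\gtrsim k^4\pi_{\min}^{-4(1+\zeta^{-1})}d$ while $\alpha^{-1}$ is only polynomial in $k$ and $\pi_{\min}^{-1}$) is $\alpha n\gtrsim d\vee\log(1/\delta)$. Under that strengthened hypothesis your proof is complete, since then $d\leq\alpha n\leq\sqrt{\alpha}\,n$ absorbs exactly as your $\log(1/\delta)$ term does; you should state that condition explicitly rather than hope the bookkeeping works out.
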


\begin{remark}
In the original result, Tan and Vershynin assumed that $\{\mb \xi_i\}_{i=1}^n$ are isotropic $\eta$-sub-Gaussian random vectors \cite[Theorem~5.7]{tan2019phase}. 
Later, Ghosh et al. \cite{ghosh2019max} showed that the result also applies to the setting in Lemma~\ref{lem:upperbound} through the following argument. 
The outer product $\mb \xi_i \mb \xi_i^\top$ remains the same as one multiplies a random sign to the last entry of $\mb \xi_i$ which makes the random vector $\tilde{\eta}$-sub-Gaussian with $\tilde{\eta} = \max(\eta,1)$. 
%Indeed, although \cite[Theorem~5.7]{tan2019phase} applies the Gaussian random matrices without the appended column of ones, \cite{ghosh2021max} has shown that this result can be directly extended to the sub-Gaussian random matrices. Then, since \cite[Lemma~13]{ghosh2021max} shows that the $\{\mb \xi\}_{i=1}^n$ follows the $\max(\eta,1)$-sub-Gaussian random variables, the result of \cite[Theorem~5.7]{tan2019phase} can be directly extended to the result in Lemma~\ref{lem:upperbound} by the homogeneity.
\end{remark}

\noindent Moreover, Ghosh et al. also derived analogous lower tail bound on the smallest eigenvalue when the index set $\mathcal{I}$ exceeds a threshold \cite[Lemma~7]{ghosh2019max}. 
Their proof strategy adopted an epsilon-net approximation and a union bound argument. 
%Their result \cite[Lemma~7]{ghosh2019max}, derived by \cite[Lemma~19]{ghosh2019max}, has been derived through an approximation via a net and the union bound arguments. 
Our lemma below, derived by using the small-ball method \cite{koltchinskii2015bounding}, provides a streamlined proof and a sharper bound. 

\begin{lemma}
\label{lem:lwb_trunc}
Let $\alpha, \delta \in(0,1)$ and $\mb \xi_{i}=[\mb x_i,1]\in\mathbb{R}^{d+1}$ for $i\in[n]$.  
Suppose that Assumption~\ref{main_assumption2} holds. 
Then there exists an absolute constant $C > 0$ such that if 
\begin{equation}
\label{eq:lwb_trunc_sample}
n\geq C\alpha^{-2}(d\log(n/d)\vee\log(1/\delta))
\end{equation}
then it holds with probability at least $1-\delta$ that 
\begin{equation}
\label{lem:sb:lb}
\inf_{\mc I \subset [n] : |\mc I|\geq\alpha n} \lambda_{d+1}\left( \sum_{i\in \mc I} \mb \xi_i \mb \xi_i^\top \right) \geq \frac{2n}{\gamma}\left(\frac{\alpha}{4}\right)^{1+\zeta^{-1}}.
\end{equation}
\end{lemma}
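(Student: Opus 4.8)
The plan is to run the small-ball (Paley--Zygmund) method of Koltchinskii and Mendelson \cite{koltchinskii2015bounding}. Writing the smallest eigenvalue variationally,
\[
\lambda_{d+1}\!\left(\sum_{i\in\mc I}\mb\xi_i\mb\xi_i^\top\right)=\inf_{\mb u\in\mathbb{S}^{d}}\sum_{i\in\mc I}\langle\mb\xi_i,\mb u\rangle^2,
\]
and exchanging the infimum over $\mb u$ with that over $\mc I$, it suffices to bound $\sum_{i\in\mc I}\langle\mb\xi_i,\mb u\rangle^2$ from below uniformly in $\mb u\in\mathbb{S}^{d}$ and in $\mc I$ with $|\mc I|\geq\alpha n$. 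First I would decouple the dependence on $\mc I$ by a truncation: for a threshold $\epsilon_0>0$ to be chosen, the elementary bound $t^2\geq\epsilon_0\bbone_{\{t^2\geq\epsilon_0\}}$ gives
\[
\sum_{i\in\mc I}\langle\mb\xi_i,\mb u\rangle^2\geq\epsilon_0\sum_{i\in\mc I}\bbone_{\{\langle\mb\xi_i,\mb u\rangle^2\geq\epsilon_0\}}\geq\epsilon_0\left(\alpha n-\sum_{i=1}^{n}\bbone_{\{\langle\mb\xi_i,\mb u\rangle^2<\epsilon_0\}}\right),
\]
where the second step enlarges the index set to all of $[n]$ and is therefore uniform in $\mc I$. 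What remains is to control the empirical count $\sum_{i=1}^{n}\bbone_{\{\langle\mb\xi_i,\mb u\rangle^2<\epsilon_0\}}$ uniformly in $\mb u$.

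The key structural observation for this second step is that, writing $\mb u=[(\mb u)_{1:d};\,u_{d+1}]$, the sublevel set $\{\mb x:(\langle\mb x,(\mb u)_{1:d}\rangle+u_{d+1})^2<\epsilon_0\}$ is a slab, i.e.\ an intersection of two half-spaces, so it belongs to the class $\mc P_2$. \Cref{cor:concent_emp_prob} with $k=2$ then converts the empirical count into its population version: with probability at least $1-\delta$,
\[
\sup_{\mb u\in\mathbb{S}^{d}}\frac{1}{n}\sum_{i=1}^{n}\bbone_{\{\langle\mb\xi_i,\mb u\rangle^2<\epsilon_0\}}\leq\sup_{\mb u\in\mathbb{S}^{d}}\P\!\left(\langle\mb\xi,\mb u\rangle^2<\epsilon_0\right)+\Delta,\qquad \Delta\lesssim\sqrt{\tfrac{d\log(n/d)+\log(1/\delta)}{n}}.
\]
The population small-ball probability is then estimated by \Cref{main_assumption2}: factoring $\|(\mb u)_{1:d}\|$ out of $\langle\mb\xi,\mb u\rangle=\langle\mb x,(\mb u)_{1:d}\rangle+u_{d+1}$ and applying anti-concentration to the unit vector $(\mb u)_{1:d}/\|(\mb u)_{1:d}\|$ with offset $u_{d+1}/\|(\mb u)_{1:d}\|$ bounds this probability by $(\gamma\epsilon_0)^\zeta$ for directions bounded away from the last coordinate axis.

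To finish, I would balance the two contributions. Taking $\epsilon_0=\gamma^{-1}(\alpha/4)^{\zeta^{-1}}$ makes the population term at most $\alpha/4$, and the hypothesized sample size \eqref{eq:lwb_trunc_sample} is precisely what forces $\Delta\leq\alpha/4$; hence the empirical count is at most $\alpha n/2$ uniformly in $\mb u$, and
\[
\sum_{i\in\mc I}\langle\mb\xi_i,\mb u\rangle^2\geq\epsilon_0\left(\alpha n-\frac{\alpha n}{2}\right)=\frac{\alpha n}{2\gamma}\left(\frac{\alpha}{4}\right)^{\zeta^{-1}}=\frac{2n}{\gamma}\left(\frac{\alpha}{4}\right)^{1+\zeta^{-1}},
\]
which is the asserted bound. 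The sharpening over \cite[Lemma~7]{ghosh2019max} by the factor $\alpha^{\zeta^{-1}}$ comes directly from this near-optimal choice of $\epsilon_0$ together with the small-ball truncation, whereas an epsilon-net plus union bound argument loses it.

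The step I expect to be the main obstacle is making the population small-ball bound genuinely uniform over the whole sphere $\mathbb{S}^{d}$. The factorization above produces a factor $\|(\mb u)_{1:d}\|^{-2\zeta}$, which blows up as $\mb u$ approaches $\pm\mb e_{d+1}$; precisely in that regime, however, $\langle\mb\xi,\mb u\rangle\approx u_{d+1}=\pm\sqrt{1-\|(\mb u)_{1:d}\|^2}$ is bounded away from zero, so $\langle\mb\xi,\mb u\rangle^2\geq\epsilon_0$ should hold with overwhelming probability once the perturbation $\langle\mb x,(\mb u)_{1:d}\rangle$ is controlled. Patching these near-degenerate directions into a single uniform estimate is the delicate point, and it is where a tail bound on $\langle\mb x,(\mb u)_{1:d}\rangle$ (i.e.\ the sub-Gaussianity of \Cref{main_assumption}) effectively enters alongside the anti-concentration before one invokes the VC bound.
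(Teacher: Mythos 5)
Your proposal is correct and is essentially the paper's own argument: the same truncation at a threshold (your $\epsilon_0$ is the paper's $T$ in \eqref{eq:condition1}), the same VC-based uniform concentration of the empirical count, the same application of Assumption~\ref{main_assumption2} to the population term, and the identical choice $\epsilon_0=T=\gamma^{-1}(\alpha/4)^{\zeta^{-1}}$, which yields exactly \eqref{lem:sb:lb}. If anything, your VC step is set up more cleanly: the event $\langle\mb \xi_i,\mb v\rangle^2>T$ used in the paper is a union of two half-spaces rather than a single one, so working with the complementary slabs in $\mathcal{P}_2$ via \Cref{cor:concent_emp_prob}, as you do, is the tidier formulation of the same estimate.

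The one substantive remark concerns the obstacle you flag at the end, and you are right to flag it: the paper does not handle it either. In \eqref{eq:smallball1} the paper applies Assumption~\ref{main_assumption2} to $\langle\mb x,\mb u\rangle+w$ with $\mb v=[\mb u;\ w]\in\mathbb{S}^{d}$, i.e.\ with $\|\mb u\|_2\leq 1$ rather than $\|\mb u\|_2=1$; after normalizing, the correct bound is $(\gamma T/\|\mb u\|_2^2)^{\zeta}$, which degenerates as $\mb v\to\pm\mb e_{d+1}$, exactly as you describe. Moreover, anti-concentration alone cannot close this gap: one can construct a one-dimensional law satisfying Assumption~\ref{main_assumption2} with constant $\gamma$ that places mass $1/2$ on a remote interval of length $2\sqrt{T}/\|\mb u\|_2$ centered at $-w/\|\mb u\|_2$, and for such a law \eqref{eq:smallball1} fails and, with suitable parameters, so does the conclusion \eqref{lem:sb:lb}. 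Hence the patch must invoke the sub-Gaussianity of Assumption~\ref{main_assumption}, as you propose: for $\|\mb u\|_2$ small, $|\langle\mb x,\mb u\rangle|$ exceeds $|w|/2$ only with exponentially small probability, so the near-axis directions contribute nothing to the small-ball count. This is harmless in context, since Theorem~\ref{thm:main_noise} assumes both conditions, but strictly speaking the lemma's hypothesis should include Assumption~\ref{main_assumption} as well; on this point your treatment is more careful than the paper's.
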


\noindent {
We compare Lemma~\ref{lem:lwb_trunc} to the previous result by Ghosh et al. \cite[Lemma~$7$]{ghosh2019max} when the parameter $\gamma$ is treated as a fixed constant. 
They demonstrated that the worst-case minimum eigenvalue in the left-hand side of \eqref{lem:sb:lb} satisfies $\Omega(n \alpha^{1+2\zeta^{-1}})$ if $n \geq \alpha^{-1} \max(4p, \zeta^{-1}(d+1))$. 
On one hand, their requirement in the sample complexity is less stringent than that in \eqref{eq:lwb_trunc_sample}. 
On the other hand, the lower bound in \eqref{lem:sb:lb} is tighter than theirs by a factor of $\alpha^{\zeta^{-1}}$. 
When these two results are applied to derive Theorem~\ref{thm:main_noise} with $\alpha$ substituted by $\pi_{\min}$, the resulting sample complexity $\tilde{O}(\pi_{\min}^{-4{(1+\zeta^{-1})}}d)$ by Lemma~\ref{lem:lwb_trunc} is smaller than $\tilde{O}(\pi_{\min}^{-4(1+2\zeta^{-1})}d)$ by \cite[Lemma~$7$]{ghosh2019max}. 
The gain due to Lemma~\ref{lem:lwb_trunc} is $\pi_{\min}^{-4\zeta^{-1}}$, which is no less than $k^{4\zeta^{-1}}$. 
For example, if the covariates are Gaussian $\zeta = 1/2$, then the gain is $k^8$. 
%In comparison with this, our sample complexity in \eqref{eq:lwb_trunc_sample}, despite being larger by a factor of $\alpha$, offers a tighter lower bound as given in \eqref{lem:sb:lb} due to a reduced dependency on $\alpha$.
%Indeed, the increase in sample complexity in \eqref{eq:lwb_trunc_sample} does not affect the overall sample complexity in any given scenario. The overall sample complexity of our main theorem is primarily determined by the four squares of the inverse of this lower bound multiplied by $d$, up to a certain constant.
%Specifically, we will substitute $\alpha$ with $\pi_{\min}$, adjusted by a small constant, in the proof of our main theorem. As a result, the sample complexity will be primarily influenced by $\tilde{O}(\pi_{\min}^{-4{(1+\zeta^{-1})}}d)$. Therefore, through this tighter lower bound, we can achieve a tighter sample complexity compared to what would have been the case if we used the result in \cite[Lemma~$7$]{ghosh2019max}.
%The resulting sample complexity, given by $\tilde{O}\left(\pi_{\min}^{-4(1+2\zeta^{-1})}d\right)$, would then be improved by a factor of $\pi_{\min}^{-8\zeta^{-1}}$. In cases of Gaussian covariates $(\zeta=1/2)$, we can obtain an even tighter sample complexity with a factor of $\pi_{\min}^{-8}$.
}

% the overall sample complexity of our main theorem largely depends on the lower bound 

% Later we apply the result with $\alpha$ fixed to $c\pi_{\min}$ for a small constant $c$. This relaxed sample complexity does not affect the overall sample complexity in the main Theorem. If this is the case, then the lower bound by Lemma~\ref{lem:lwb_trunc} is larger than that by \cite[Lemma~$7$]{ghosh2019max} by a factor of $\left(\frac{1}{\alpha}\right)^{\zeta^{-1}}$, where $\alpha, \zeta \in (0,1)$. 
% The gap increases with smaller $\zeta$, which implies a weaker anti-concentration of the covariate distribution. \textbf{We need a discussion on this paragraph. It does not read well. There are some phrases to clarify.}}

\begin{proof}
Let $T > 0$ be an arbitrarily fixed threshold. 
If 
\begin{equation}
\label{eq:condition1}
N(\mb v) :=\sum_{i=1}^{n}\bbone_{\{\langle \mb \xi_i, \mb  v\rangle^2>T\}} > n-\frac{\alpha n}{2}
\end{equation}
then it follows that
\begin{equation*}
\label{eq:mainarg2}
\frac{1}{n} \sum_{i\in \mc I} \langle\mb \xi_i,\mb v\rangle^2
\geq \frac{\alpha T}{2}, \quad \forall \mc I \subset [n] : |\mc I| \geq \alpha n.
\end{equation*}
Therefore, it suffices to show that \eqref{eq:condition1} holds for all $\mb v \in \mathbb{S}^{d}$ with probability $1-\delta$. 
Let $\mathcal{H}$ denote the collection of half-spaces in $\mathbb{R}^d$ given by $\{\mb x\in\mathbb{R}^d : \mb x^\T\mb u>\sqrt{T}-w\}$ for all $\mb v = [\mb u ;\ w] \in \mathbb{S}^d$. 
Since the VC dimension of all half-spaces in $\mathbb{R}^d$ is at most $d+1$, by Lemmas \ref{lem:uniform} and \ref{lem:growthfunction}, it holds with probability at least $1-\delta/2$ that 
\begin{equation}
\label{eq:VCarg1}
\frac{1}{n}N(\mb v)\geq \frac{1}{n}\E N(\mb v)-C' \sqrt{\frac{d\log(n/d)+\log(1/\delta)}{n}}, \quad \forall \mb v \in \mathbb{S}^{d}, 
\end{equation}
where $C' > 0$ is an absolute constant. 

Moreover, it follows from {Assumption~\ref{main_assumption2}} that 
\begin{equation}
\label{eq:smallball1}
\frac{1}{n} \E N(\mb v) = \P\left(|\langle\mb x,\mb u\rangle+w|^2>T\right)\geq 1-\left(T\gamma\right)^\zeta.   
\end{equation} 
By plugging in \eqref{eq:smallball1} into \eqref{eq:VCarg1}, we obtain that
\[
\frac{1}{n} N(\mb v)\geq 1-(T\gamma)^\zeta-C'\sqrt{\frac{d\log(n/d)+\log(1/\delta)}{n}}, \quad \forall \mb w \in \mathbb{S}^{d}.
\]
Then \eqref{eq:condition1} is satisfied for all $\mb v \in \mathbb{S}^{d}$ when $T=\frac{1}{\gamma}\left(\frac{\alpha}{4}\right)^{\zeta^{-1}}$ and $C = (4C')^2$. 
This completes the proof.
\end{proof}

\subsection{Local estimates}

In this section, we present local tail bounds which arise in the proof of the main result. 
The following lemma, obtained as a direct consequence of the triangle inequality and the definition of $\kappa$ in \eqref{eq:defkappa}, provides a basic inequality that will be used frequently throughout this section. 

\begin{lemma}
\label{lem:triangle_claim}
Suppose that $\mb \beta \in \mathcal{N}(\mb \beta^\star)$, where $\mathcal{N}(\mb \beta^\star)$ is defined as in \eqref{eq:defnbr}. Then we have
\[
\|(\mb \beta_j-\mb \beta_{j'}) - (\mb \beta_j^\star - \mb \beta_{j'}^\star)\|_2
\leq 
2\rho \|(\mb \beta_j^\star - \mb \beta_{j'}^\star)_{1:d}\|_2, \quad \forall j\neq j' \in [k].
\]
\end{lemma}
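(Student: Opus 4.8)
The plan is to reduce the claim to a direct chaining of the triangle inequality with the two defining inequalities of the neighborhood $\mathcal{N}(\mb \beta^\star)$ and of the separation parameter $\kappa$. First I would regroup the four vectors on the left-hand side as
\[
(\mb \beta_j-\mb \beta_{j'}) - (\mb \beta_j^\star - \mb \beta_{j'}^\star) = (\mb \beta_j - \mb \beta_j^\star) - (\mb \beta_{j'} - \mb \beta_{j'}^\star),
\]
so that the triangle inequality yields
\[
\|(\mb \beta_j-\mb \beta_{j'}) - (\mb \beta_j^\star - \mb \beta_{j'}^\star)\|_2 \leq \|\mb \beta_j - \mb \beta_j^\star\|_2 + \|\mb \beta_{j'} - \mb \beta_{j'}^\star\|_2.
\]
Since $\mb \beta \in \mathcal{N}(\mb \beta^\star)$, each summand on the right is at most $\kappa\rho$ by \eqref{eq:defnbr}, producing the upper bound $2\kappa\rho$.

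Next I would lower-bound the target quantity on the right-hand side of the lemma. Because restriction to the first $d$ coordinates is linear, $(\mb \beta_j^\star - \mb \beta_{j'}^\star)_{1:d} = (\mb \beta_j^\star)_{1:d} - (\mb \beta_{j'}^\star)_{1:d}$, and the separation condition \eqref{eq:defkappa} gives $\|(\mb \beta_j^\star - \mb \beta_{j'}^\star)_{1:d}\|_2 \geq \kappa$ for every $j \neq j'$. Multiplying by $2\rho$ shows that $2\rho\|(\mb \beta_j^\star - \mb \beta_{j'}^\star)_{1:d}\|_2 \geq 2\rho\kappa$, which is exactly the upper bound obtained in the previous step. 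Concatenating the two displays completes the argument.

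There is no genuine obstacle here: the statement is an elementary consequence of the triangle inequality together with the two scalar bounds $\max_{j\in[k]}\|\mb \beta_j - \mb \beta_j^\star\|_2 \le \kappa\rho$ and $\min_{j\neq j'}\|(\mb \beta_j^\star - \mb \beta_{j'}^\star)_{1:d}\|_2 \ge \kappa$. The only point worth a moment's care is to check that passing to the leading $d$ coordinates does not disrupt the chain: the upper bound on the left is stated for the full $(d+1)$-dimensional parameter blocks, whereas the right-hand side involves only the first $d$ coordinates of the ground-truth differences, and the estimate closes precisely because the common factor $\kappa$ cancels between the two bounds.
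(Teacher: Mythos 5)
Your proof is correct and follows essentially the same route as the paper's: both use the regrouping $(\mb \beta_j-\mb \beta_{j'}) - (\mb \beta_j^\star - \mb \beta_{j'}^\star) = (\mb \beta_j - \mb \beta_j^\star) - (\mb \beta_{j'} - \mb \beta_{j'}^\star)$ with the triangle inequality to get the bound $2\kappa\rho$, and then invoke the separation condition \eqref{eq:defkappa} to replace $\kappa$ by $\|(\mb \beta_j^\star - \mb \beta_{j'}^\star)_{1:d}\|_2$. Your explicit remark that coordinate restriction is linear makes precise a step the paper leaves implicit, but the argument is identical in substance.
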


\begin{proof}
Since $\mb \beta\in \mathcal{N}(\mb \beta^\star)$, by the triangle inequality, we have
\[
\|(\mb \beta_j-\mb \beta_{j'}) - (\mb \beta_j^\star - \mb \beta_{j'}^\star)\|_2
\leq 
\|\mb \beta_j-\mb \beta_j^\star\|_2+\|\mb \beta_{j'}-\mb \beta_{j'}^\star\|_2
\leq 
2\kappa\rho,\quad\forall j,j'\in [k].
\]
Furthermore, it follows from the definition of $\kappa$ in \eqref{eq:defkappa} that 
\[
\kappa \leq \| (\mb \beta_j^\star - \mb \beta_{j'}^\star)_{1:d} \|_2, \quad \forall j\neq j' \in [k].
\]
Then the assertion follows. 
\end{proof}

\noindent We also use the following lemma by Ghosh et al. \cite{ghosh2019max}, which is a consequence of Assumptions~\ref{main_assumption} and \ref{main_assumption2} respectively for the sub-Gaussianity and anti-concentration. 

\begin{lemma}[{\cite[Lemma~17]{ghosh2019max}}]
\label{lem:bnd_vol}
Suppose that $\mb x \in \mathbb{R}^d$ satisfies Assumptions~\ref{main_assumption} and \ref{main_assumption2}. 
%Let $\mb \xi = [\mb x; 1]$. 
If 
\[
\|\mb v-\mb v^\star\|_2 \leq \frac{1}{2} \|(\mb v^\star)_{1:d}\|_2,
\]
then 
\[
\P\left( 
\langle [\mb x; 1],\mb v^\star\rangle^2\leq\langle [\mb x; 1],\mb v-\mb v^\star\rangle^2
\right)
\lesssim \left( \left(\frac{\|\mb v-\mb v^\star\|_2}{\|(\mb v^\star)_{1:d}\|_2}\right)^2 \cdot \log\left(\frac{2\|(\mb v^\star)_{1:d}\|_2}{\|\mb v-\mb v^\star\|_2}\right)\right)^\zeta.
\]
\end{lemma}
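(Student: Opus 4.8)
The plan is to decompose the target event using an intermediate threshold and control the two resulting pieces with the two assumptions separately. Write $\mb \xi = [\mb x ;\ 1]$, and set $a := \langle \mb \xi, \mb v^\star\rangle$ and $b := \langle \mb \xi, \mb v - \mb v^\star\rangle$, so that the event in question is $\{a^2 \leq b^2\} = \{|a| \leq |b|\}$. Let $\Delta := \|\mb v - \mb v^\star\|_2$ and $V := \|(\mb v^\star)_{1:d}\|_2$, so the hypothesis reads $\Delta \leq V/2$. The intuition is that $|b|$ is typically small because $\Delta$ is small, whereas $|a|$ is anti-concentrated away from $0$; hence $|a|\le|b|$ should be unlikely. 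For any threshold $\tau > 0$ I would use the union bound
\[
\P(|a| \leq |b|) \leq \P(|b| > \tau) + \P(|a| \leq \tau)
\]
and optimize over $\tau$ only at the very end.

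For the anti-concentration term, observe that $a = \langle \mb x, (\mb v^\star)_{1:d}\rangle + (\mb v^\star)_{d+1}$. Factoring out $V$ and applying \Cref{main_assumption2} to the unit vector $(\mb v^\star)_{1:d}/V \in \mathbb{S}^{d-1}$ gives
\[
\P(|a| \leq \tau) = \P\left(\left(\langle \mb x, (\mb v^\star)_{1:d}/V\rangle + (\mb v^\star)_{d+1}/V\right)^2 \leq \tau^2/V^2\right) \leq \left(\gamma \tau^2/V^2\right)^\zeta.
\]
This is the source of the eventual $\zeta$-power in the bound. For the sub-Gaussian term, write $\mb v - \mb v^\star = [\mb u ;\ w]$ so that $b = \langle \mb x, \mb u\rangle + w$, where $\|\mb u\|_2 \leq \Delta$ and $|w| \leq \Delta$. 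By \Cref{main_assumption}, $\langle \mb x, \mb u\rangle$ is sub-Gaussian with $\|\langle \mb x, \mb u\rangle\|_{\psi_2} \leq \eta \Delta$. Provided $\tau \geq 2\Delta$, one has $\tau - |w| \geq \tau/2$, so the standard sub-Gaussian tail bound yields
\[
\P(|b| > \tau) \leq \P\left(|\langle \mb x, \mb u\rangle| > \tau/2\right) \leq 2\exp\left(-\tau^2/(C\eta^2\Delta^2)\right)
\]
for an absolute constant $C > 0$.

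The final step is the balancing choice $\tau^2 = 2\zeta C\eta^2 \Delta^2 \log(2V/\Delta)$. Then the sub-Gaussian term becomes $2\exp(-2\zeta\log(2V/\Delta)) = 2(\Delta/(2V))^{2\zeta} \lesssim (\Delta^2/V^2)^\zeta$, while the anti-concentration term becomes $(\gamma\tau^2/V^2)^\zeta \lesssim \big((\Delta/V)^2 \log(2V/\Delta)\big)^\zeta$ after absorbing $C,\eta,\zeta,\gamma$ into numerical constants (as permitted by the convention of \Cref{thm:main_noise}). Since $\Delta \leq V/2$ forces $\log(2V/\Delta) \geq \log 4 > 1$, the sub-Gaussian term is dominated by the anti-concentration term, which gives the claimed bound; the same inequality $\log(2V/\Delta) \geq \log 4$ also guarantees $\tau^2 \geq 4\Delta^2$ once the constant is taken large enough, validating the earlier requirement $\tau \geq 2\Delta$. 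The main point of the argument is precisely the choice $\tau \propto \Delta\sqrt{\log(V/\Delta)}$: this is what produces the logarithmic factor in the statement, since a smaller $\tau$ leaves the sub-Gaussian tail too heavy while a larger $\tau$ inflates the anti-concentration contribution.
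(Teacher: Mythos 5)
Your proof is correct: the union-bound split $\P(|a|\le|b|)\le\P(|b|>\tau)+\P(|a|\le\tau)$, the application of Assumption~\ref{main_assumption2} with offset $(\mb v^\star)_{d+1}/\|(\mb v^\star)_{1:d}\|_2$, and the balancing choice $\tau\propto\|\mb v-\mb v^\star\|_2\sqrt{\log\left(2\|(\mb v^\star)_{1:d}\|_2/\|\mb v-\mb v^\star\|_2\right)}$ all check out, including the self-consistency of the requirement $\tau\ge 2\|\mb v-\mb v^\star\|_2$ via $\log(2V/\Delta)\ge\log 4$. Note that the paper itself gives no proof of this lemma---it is imported verbatim as \cite[Lemma~17]{ghosh2019max}---and your truncation-plus-anti-concentration argument is essentially the same one used in that cited source, so there is nothing further to reconcile.
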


Intuitively, when the parameter vector $\mb \beta$ belongs to a small neighborhood of the ground-truth, the partition sets $\left(\mathcal{C}_j\right)_{j=1}^k$ by $\mb \beta$ and $\left(\mathcal{C}_j^\star\right)_{j=1}^k$ by the ground-truth $\mb \beta^\star$ will be similar. 
The next lemmas quantify the empirical measure on the event of $\mb x \in \mathcal{C}_j\cap\mathcal{C}_{j'}^\star$ for distinct indices $j$ and $j'$, and quadratic forms given as a partial summation indexed by the indicator functions on this event. 

\begin{lemma}
\label{lem:lwb_number_ball}
Let $\left(\mathcal{C}_j\right)_{j=1}^k$ and $\left(\mathcal{C}_j^\star\right)_{j=1}^k$ be defined as in \eqref{eq:def_calCj} and \eqref{eq:def_calCjstar} respectively by $\mb \beta$ and $\mb \beta^\star$. 
Furthermore, let $\pi_{\min}$ be defined as in \eqref{eq:def_pimin_pimax} by $\mb \beta^\star$. 
Suppose that $\mb x \in \mathbb{R}^d$ and $\{\mb x_i\}_{i=1}^n$ satisfy Assumptions~\ref{main_assumption} and \ref{main_assumption2}, and that the parameter $\rho$ of $\mathcal{N}(\mb \beta^\star)$ in \eqref{eq:defnbr} satisfies \eqref{eq:choice_rho1}
for some numerical constant $R >0$. 
Then there exists an absolute constant $C$ such that if 
\begin{equation}
\label{eq:sample_intersection}
n\geq C \pi_{\min}^{-2}\cdot\left(kd\log(n/d)\vee\log(1/\delta)\right)
\end{equation}
then with probability at least $1-\delta$ \begin{equation}
\label{eq:statement}
\begin{aligned}
\frac{1}{n}\sum_{i=1}^{n}\bbone_{\{\mb x_i\in \mathcal{C}_j\cap \mathcal{C}_j^\star\}}
\geq\frac{\pi_{\min}}{4}
\end{aligned}
\end{equation}
holds for all $j \in [k]$, $\mb \beta \in \mathcal{N}(\mb \beta^\star)$, and $\mb \beta^\star \in \mathbb{R}^{d+1}$. 
\end{lemma}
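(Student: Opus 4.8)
The plan is to split the proof into a population-level lower bound on $\P(\mb x\in\mathcal{C}_j\cap\mathcal{C}_j^\star)$ and a uniform empirical concentration step that transfers this bound to the empirical average; the sample-size requirement \eqref{eq:sample_intersection} will come entirely from the latter.

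First I would show that for each fixed $\mb\beta\in\mathcal{N}(\mb\beta^\star)$ and each $j\in[k]$, the population probability satisfies $\P(\mb x\in\mathcal{C}_j\cap\mathcal{C}_j^\star)\geq\pi_{\min}/2$. Since $\P(\mb x\in\mathcal{C}_j^\star)\geq\pi_{\min}$ by \eqref{eq:def_pimin_pimax}, it suffices to bound the ``misclassification'' mass $\P(\mb x\in\mathcal{C}_j^\star\setminus\mathcal{C}_j)$ by $\pi_{\min}/2$. The key geometric observation is that if $\mb x\in\mathcal{C}_j^\star$ but $\mb x\notin\mathcal{C}_j$, then for some $l\neq j$ the linear forms $\langle[\mb x;1],\mb\beta_j-\mb\beta_l\rangle$ and $\langle[\mb x;1],\mb\beta_j^\star-\mb\beta_l^\star\rangle$ carry opposite signs, which forces $\langle[\mb x;1],\mb\beta_j^\star-\mb\beta_l^\star\rangle^2\leq\langle[\mb x;1],(\mb\beta_j-\mb\beta_l)-(\mb\beta_j^\star-\mb\beta_l^\star)\rangle^2$. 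Setting $\mb v^\star=\mb\beta_j^\star-\mb\beta_l^\star$ and $\mb v=\mb\beta_j-\mb\beta_l$, this is exactly the event controlled by \Cref{lem:bnd_vol}; its hypothesis $\|\mb v-\mb v^\star\|_2\leq\frac12\|(\mb v^\star)_{1:d}\|_2$ is verified via \Cref{lem:triangle_claim}, which gives $\|\mb v-\mb v^\star\|_2\leq 2\rho\|(\mb v^\star)_{1:d}\|_2$, together with $\rho\leq 1/4$. Applying \Cref{lem:bnd_vol}, using that $t\mapsto t^2\log(2/t)$ is increasing on $(0,1/2]$ to replace $\|\mb v-\mb v^\star\|_2/\|(\mb v^\star)_{1:d}\|_2$ by its upper bound $2\rho$, and taking a union bound over the $k-1$ indices $l\neq j$, I obtain $\P(\mathcal{C}_j^\star\setminus\mathcal{C}_j)\lesssim k(\rho^2\log(1/\rho))^\zeta$. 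Substituting the choice of $\rho$ from \eqref{eq:choice_rho1} then makes the right-hand side at most $\pi_{\min}/2$, in fact with considerable slack.

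Second I would upgrade this pointwise bound to a uniform empirical statement. The set $\mathcal{C}_j\cap\mathcal{C}_j^\star$ is an intersection of at most $2(k-1)$ half-spaces ($k-1$ from $\mathcal{C}_j$ and $k-1$ from $\mathcal{C}_j^\star$), hence lies in the polytope class $\mathcal{P}_{2k}$ for every $\mb\beta$, $\mb\beta^\star$, and $j$. Invoking \Cref{cor:concent_emp_prob} with $2k$ half-spaces, the uniform deviation between the empirical measure and the population probability is at most $\lesssim\sqrt{(k(d+1)\log(n/(d+1))+\log(1/\delta))/n}$ with probability at least $1-\delta$; forcing this to be at most $\pi_{\min}/4$ yields precisely \eqref{eq:sample_intersection}. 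Combining the two steps gives $\frac1n\sum_i\bbone_{\{\mb x_i\in\mathcal{C}_j\cap\mathcal{C}_j^\star\}}\geq\pi_{\min}/2-\pi_{\min}/4=\pi_{\min}/4$ simultaneously over all admissible $\mb\beta$, $\mb\beta^\star$, and $j$, which is \eqref{eq:statement}; note that the uniformity over $\mb\beta^\star$ is supplied by the distribution-free VC bound, while the population step is applied to each $\mb\beta^\star$ with its associated $\pi_{\min}$.

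The main obstacle I anticipate is the population step, specifically the geometric reduction from the set difference $\mathcal{C}_j^\star\setminus\mathcal{C}_j$ to the pairwise sign-disagreement events and then to the quadratic form that \Cref{lem:bnd_vol} bounds; care is needed to track which pairwise comparison is violated and to handle the tie-breaking convention in \eqref{eq:def_calCj} and \eqref{eq:def_calCjstar} (harmless since ties occur with probability zero). The remaining quantitative work, namely verifying that the chosen $\rho$ absorbs the $k(\rho^2\log(1/\rho))^\zeta$ factor into $\pi_{\min}/2$, is routine given the generous slack, and the concentration step is a direct application of the VC machinery already assembled in \Cref{cor:concent_emp_prob}.
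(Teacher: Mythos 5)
Your proposal is correct and follows essentially the same route as the paper's proof: a population lower bound $\P(\mb x\in\mathcal{C}_j\cap\mathcal{C}_j^\star)\geq\pi_{\min}/2$ obtained by reducing $\mathcal{C}_j^\star\setminus\mathcal{C}_j$ to pairwise sign-disagreement events, controlling each via Lemma~\ref{lem:triangle_claim} and Lemma~\ref{lem:bnd_vol} with the choice of $\rho$ in \eqref{eq:choice_rho1}, followed by uniform VC concentration over $\mathcal{P}_{2k}$ via Corollary~\ref{cor:concent_emp_prob} to lose only $\pi_{\min}/4$. The only cosmetic difference is that you bound the unconditional set-difference mass $\P(\mathcal{C}_j^\star\setminus\mathcal{C}_j)$ where the paper bounds the conditional probability $\P(\mb x\notin\mathcal{C}_j\,|\,\mb x\in\mathcal{C}_j^\star)$; these are equivalent here.
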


\begin{proof} 
Note that the left-hand side of \eqref{eq:statement} is an empirical measure on the event $\mb x\in \mathcal{C}_{j} \cap \mathcal{C}_{j}^\star$. 
We first derive a lower bound on its expectation, which is written as
\begin{align}
\P\left(\mb x\in \mathcal{C}_{j}, \mb x \in \mathcal{C}_{j}^\star\right)
&= \P\left(\mb x\in \mathcal{C}_{j} | \mb x \in \mathcal{C}_{j}^\star\right) 
\cdot \P\left(\mb x \in \mathcal{C}_{j}^\star\right) \nonumber \\
&= \left(1 - \P\left(\mb x \not\in \mathcal{C}_{j} | \mb x \in \mathcal{C}_{j}^\star\right) \right)
\cdot \P\left(\mb x \in \mathcal{C}_{j}^\star\right).\label{eq:sbreslt1}
\end{align}
Then, by the construction of $\left(\mathcal{C}_j\right)_{j=1}^k$ in \eqref{eq:def_calCj}, we have
\begin{align*}
& \P\left(\mb x \not\in \mathcal{C}_{j} | \mb x \in \mathcal{C}_{j}^\star\right) \nonumber \\
&= \frac{\P(\mb x \not\in \mathcal{C}_{j}, \mb x \in \mathcal{C}_{j}^\star)}{\P(\mb x \in \mathcal{C}_{j}^\star)} \nonumber\\
&\leq \frac{1}{\P(\mb x \in \mathcal{C}_{j}^\star)} \sum_{j' \neq j} \P\left(\langle [\mb x;\ 1], \mb \beta_{j'} \rangle \geq \langle [\mb x;\ 1], \mb \beta_{j} \rangle, \langle [\mb x;\ 1], \mb \beta_{j}^\star \rangle \geq \langle [\mb x;\ 1], \mb \beta_{j'}^\star \rangle\right) \nonumber\\
&\leq \frac{1}{\P(\mb x \in \mathcal{C}_{j}^\star)} \sum_{j' \neq j} \P\left(\langle [\mb x;\ 1], \mb v_{j,j'} \rangle \langle [\mb x;\ 1], \mb v_{j,j'}^\star \rangle \leq 0\right) \\
&\leq \frac{1}{\P(\mb x \in \mathcal{C}_{j}^\star)} \sum_{j' \neq j}  \P\left( 
\langle [\mb x; 1],\mb v_{j,j'}^\star\rangle^2\leq\langle [\mb x; 1],\mb v_{j,j'}-\mb v_{j,j'}^\star\rangle^2
\right),
\end{align*}
where the second inequality holds since $\mb v_{j,j'} = \mb \beta_j - \mb \beta_{j'}$ and $\mb v_{j,j'}^\star = \mb \beta_j^\star - \mb \beta_{j'}^\star$, and the last inequality follows from the fact that $ab \leq 0$ implies $|b| \leq |a-b|$ for $a,b \in \mathbb{R}$. 
Recall that $\mb \beta\in\mathcal{N}(\mb \beta^\star)$ implies $\|\mb v_{j,j'} - \mb v_{j,j'}^\star\|_2 \leq 2\rho \|(\mb v_{j,j'}^\star)_{1:d}\|_2$ due to Lemma~\ref{lem:triangle_claim}. 
Furthermore, one can choose the numerical constant $R > 0$ in \eqref{eq:choice_rho1} sufficiently small (but independent of $k$ and $p$) so that $2\rho \leq 0.1$. 
Then it follows that 
\begin{align}
\P(\mb x\not\in \mathcal{C}_{j'}\,|\,\mb x\in \mathcal{C}_{j'}^\star)
&\overset{\mathrm{(i)}}{\lesssim} \frac{k}{\P(\mb x\in \mathcal{C}_j^\star)} \left(\frac{ \|\mb v_{j,j'}-\mb v_{j,j'}^\star\|_2^2}{\|(\mb v_{j,j'}^\star)_{1:d}
\|_2^2}\log\left(\frac{2\|(\mb v_{j,j'}^\star)_{1:d}\|_2}{\|\mb v_{j,j'}-\mb v_{j,j'}^\star\|_2}\right)\right)^{\zeta}\nonumber\\
&\overset{\mathrm{(ii)}}{\leq} \frac{k}{\P(\mb x\in \mathcal{C}_j^\star)} \left( (2\rho)^2 \log\left(\frac{1}{\rho}\right)\right)^{\zeta}\nonumber\\
&\overset{\mathrm{(iii)}}{\leq}\frac{k}{\pi_{\min}} {\left(\frac{ R^2 \pi_{\min}^{2\zeta^{-1}(1+\zeta^{-1})}}{k^{2\zeta^{-1}}}\right)^{\zeta}} \nonumber \\
&\leq
\frac{R^{2\zeta} \pi_{\min}^{1+2\zeta^{-1}}}{k},\label{eq:subreslt0}
\end{align}  
where (i) follows from Lemma~\ref{lem:bnd_vol}; (ii) holds since $a \log^{1/2}(2/a)$ is monotone increasing for $a \in (0,1]$; (iii) follows from the fact that $a\leq\frac{b}{2}\log^{-1/2}(1/b)$ implies $a\log^{1/2}(2/a)\leq b$ for $b\in(0,0.1]$. 
Since $\pi_{\min} \leq \frac{1}{k}$, once again $R > 0$ can be made sufficiently small so that the right-hand side of \eqref{eq:subreslt0} is at most $\frac{1}{2}$. 
Then plugging in this upper bound by \eqref{eq:subreslt0} into \eqref{eq:sbreslt1} yields 
\begin{equation}
\label{eq:lowerbound_prob_intersec}
\P(\mb x\in \mathcal{C}_{j'}\cap \mathcal{C}_{j'}^\star)\geq\frac{1}{2}\cdot\P(\mb x\in \mathcal{C}_{j'}^\star).
\end{equation} 

It remains to show the concentration of the left-hand side of \eqref{eq:statement} around the expectation. 
Recall that $\mathcal{C}_j$ and $\mathcal{C}_j^\star$ are constructed as the intersection of at most $k$ half-spaces. Then $\mathcal{C}_j \cap \mathcal{C}_j^\star$ belongs to the set $\mathcal{P}_{2k}$ defined in Lemma~\ref{lem:bnd_growth_C} and, hence, we have
\begin{align*}
\sup_{\begin{subarray}{c} j \in [k], \mb \beta \in \mathcal{N}(\mb \beta^\star) \\ \mb \beta^\star \in \mathbb{R}^{d+1} \end{subarray}}\left|\frac{1}{n}\sum_{i=1}^{n}\bbone_{\{\mb x_i\in \mathcal{C}_j\cap \mathcal{C}_j^\star\}}-\P(\mb x\in\mathcal{C}_j\cap \mathcal{C}_j^\star)\right|
\leq\sup_{\mathcal{Z}\in\mathcal{P}_{2k}}\left|\frac{1}{n}\sum_{i=1}^{n}\bbone_{\{\mb x_i\in\mathcal{Z}\}}-\P(\mb x\in \mathcal{Z})\right|.
\end{align*}
Therefore, it follows from Corollary~\ref{cor:concent_emp_prob} that with probability at least $1-\delta$
%\begin{equation}
%\label{eq:first_result}
%\begin{aligned}
%& \sup_{\begin{subarray}{c} j \in [k], \mb \beta \in \mathcal{N}(\mb \beta^\star) \\ \mb \beta^\star \in \mathbb{R}^{d+1} \end{subarray}} \left|\frac{1}{n}\sum_{i=1}^{n}\bbone_{\{\mb x_i\in \mathcal{C}_j\cap \mathcal{C}_j^\star\}}-\P(\mb x\in\mathcal{C}_j\cap \mathcal{C}_j^\star)\right| \\
%& 
%\leq 4\sqrt{\frac{\log(4/\delta)+2k(d+1)\log(2en/(d+1))}{n}} 
%\end{aligned}
%\end{equation}
%holds with probability at least $1-\delta$. 
%Then \eqref{eq:first_result} implies that
\begin{equation}
\begin{aligned}
\frac{1}{n}\sum_{i=1}^{n}\bbone_{\{\mb x_i\in \mathcal{C}_j\cap \mathcal{C}_j^\star\}}
\geq \P(\mb x\in \mathcal{C}_j\cap \mathcal{C}_j^\star)-4\sqrt{\frac{\log(4/\delta)+2k(d+1)\log(2en/(d+1))}{n}}
\end{aligned}
\label{eq:bnd_sfinal}
\end{equation} 
holds for all $j \in [k]$, $\mb \beta \in \mathcal{N}(\mb \beta^\star)$, and $\mb \beta^\star \in \mathbb{R}^{d+1}$. 
The first summand in the right-hand side of \eqref{eq:bnd_sfinal} is bounded from below as in \eqref{eq:lowerbound_prob_intersec}. 
Then choosing $C$ in \eqref{eq:sample_intersection} large enough makes the second summand less than half of the lower bound in \eqref{eq:lowerbound_prob_intersec}. 
This completes the proof. 
\end{proof}

Next, the following lemma provides a slightly improved upper bound compared to the analogous previous result \cite[Lemma~6]{ghosh2019max}. 
Moreover, Lemma~\ref{lem:bnd_partial} is derived by using the VC theory and provides a streamlined and shorter proof compared to previous work \cite{ghosh2019max}. 

\begin{lemma}
\label{lem:bnd_partial}
Suppose that Assumptions~\ref{main_assumption} and \ref{main_assumption2} hold, and that $\rho$ satisfies \eqref{eq:choice_rho1} for some numerical constant $R > 0$. Let $\delta\in(0,1/e)$. There exists an absolute constant $C$ such that if
\begin{equation}
\label{eq:samplecom_tailbounds}
n \geq C k^4\pi_{\min}^{-4(1+\zeta^{-1})} (\log(k/\delta)\vee d\log(n/d))
\end{equation} 
then with probability at least $1-\delta$
\begin{equation}
\label{eq:bnd_partial}
\begin{aligned}
\frac{1}{n}\sum_{i=1}^n\bbone_{\{\mb x_i \in \mathcal{C}_j\cap \mathcal{C}_{j'}^\star\}}\langle [\mb x_i ;\ 1], \mb v_{j,j'}^\star\rangle^2
\leq
\frac{2}{5\gamma k} \left(\frac{\pi_{\min}}{16}\right)^{1+\zeta^{-1}} \|\mb v_{j,j'}-\mb v_{j,j'}^\star\|_2^2
\end{aligned}
\end{equation}
holds for all $j \in [k]$, $\mb \beta \in \mathcal{N}(\mb \beta^\star)$, and $\mb \beta^\star \in \mathbb{R}^{d+1}$ where $\mb v_{j,j'} = \mb \beta_j - \mb \beta_{j'}$ and $\mb v_{j,j'}^\star = \mb \beta_j^\star - \mb \beta_{j'}^\star$. 
\end{lemma}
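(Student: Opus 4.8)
The plan is to recast the left-hand side of \eqref{eq:bnd_partial} as a quadratic form of a \emph{partial} sum of outer products of the lifted covariates, and then control it through a worst-case largest-eigenvalue bound together with a uniform estimate on the size of the index set. Write $\mb v := \mb v_{j,j'}$, $\mb v^\star := \mb v_{j,j'}^\star$, $\mb w := \mb v - \mb v^\star$, and let $\mc I := \{ i \in [n] : \mb x_i \in \mc C_j \cap \mc C_{j'}^\star \}$. The key deterministic observation is that on the event $\mb x_i \in \mc C_j \cap \mc C_{j'}^\star$ with $j \neq j'$, membership in $\mc C_j$ forces $\langle \mb \xi_i, \mb v \rangle \geq 0$ while membership in $\mc C_{j'}^\star$ forces $\langle \mb \xi_i, \mb v^\star \rangle \leq 0$; hence their product is nonpositive, and the elementary implication ``$ab \leq 0 \Rightarrow |b| \leq |a-b|$'' (already used above) yields
\[
\langle \mb \xi_i, \mb v^\star \rangle^2 \leq \langle \mb \xi_i, \mb w \rangle^2, \quad \forall i \in \mc I.
\]
Consequently the left-hand side of \eqref{eq:bnd_partial} is dominated by $\tfrac{1}{n}\, \mb w^\T ( \sum_{i \in \mc I} \mb \xi_i \mb \xi_i^\T ) \mb w \leq \tfrac{1}{n}\, \lambda_1 ( \sum_{i \in \mc I} \mb \xi_i \mb \xi_i^\T ) \, \| \mb w \|_2^2$. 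This replacement of $\mb v^\star$ by $\mb w$ is the crux of the argument: it produces the factor $\| \mb w \|_2^2 = \| \mb v_{j,j'} - \mb v_{j,j'}^\star \|_2^2$ on the right-hand side rather than the far larger $\| \mb v^\star \|_2^2$ that a naive count-times-magnitude bound would give.

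Next I would control the cardinality of $\mc I$ uniformly. Since $\mc C_j$ and $\mc C_{j'}^\star$ are each intersections of at most $k$ half-spaces, $\mc C_j \cap \mc C_{j'}^\star$ lies in the class $\mc P_{2k}$ of Lemma~\ref{lem:bnd_growth_C}, so Corollary~\ref{cor:concent_emp_prob} gives, with high probability and \emph{simultaneously} over all $j$, $\mb \beta \in \mc N(\mb \beta^\star)$, and $\mb \beta^\star$,
\[
\frac{|\mc I|}{n} \leq \P(\mb x \in \mc C_j \cap \mc C_{j'}^\star) + 4\sqrt{\tfrac{\log(4/\delta) + 2k(d+1)\log(2en/(d+1))}{n}}.
\]
For the expectation term, the containment $\mc C_j \cap \mc C_{j'}^\star \subseteq \{ \langle \mb \xi, \mb v^\star \rangle^2 \leq \langle \mb \xi, \mb w \rangle^2 \}$ together with Lemma~\ref{lem:bnd_vol}---valid because Lemma~\ref{lem:triangle_claim} guarantees $\| \mb w \|_2 \leq 2\rho \| (\mb v^\star)_{1:d} \|_2$ with $2\rho \leq 0.1 < 1/2$---bounds $\P(\mb x \in \mc C_j \cap \mc C_{j'}^\star)$ by $\lesssim ( (2\rho)^2 \log(1/\rho) )^\zeta$, exactly as in the derivation of \eqref{eq:subreslt0}. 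Substituting the choice of $\rho$ in \eqref{eq:choice_rho1}---whose logarithmic correction is designed precisely to absorb the $\log(1/\rho)$ factor---collapses this to $\asymp \pi_{\min}^{2(1+\zeta^{-1})}/k^2$. Taking the numerical constant $R$ small and imposing the sample-size condition \eqref{eq:samplecom_tailbounds} makes the VC fluctuation term no larger than this, so $|\mc I|/n \leq \alpha$ holds uniformly with $\alpha \asymp \pi_{\min}^{2(1+\zeta^{-1})}/k^2$.

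Finally I would invoke Lemma~\ref{lem:upperbound}: on the same high-probability event (its sample requirement \eqref{eq:samp_tanver} is implied by \eqref{eq:samplecom_tailbounds}), $\sup_{|\mc I| \leq \alpha n} \lambda_1( \sum_{i \in \mc I} \mb \xi_i \mb \xi_i^\T ) \leq C \sqrt{\alpha}\, n$, the supremum running over \emph{all} admissible index sets so that it applies no matter which $(j, \mb \beta, \mb \beta^\star)$ generated $\mc I$. Combining the three displays gives a bound of order $\sqrt{\alpha}\, \| \mb w \|_2^2 \asymp \tfrac{1}{k} \pi_{\min}^{1+\zeta^{-1}} \| \mb w \|_2^2$, which is exactly the shape of \eqref{eq:bnd_partial}; the explicit constants $\tfrac{2}{5\gamma}$ and $(1/16)^{1+\zeta^{-1}}$ are then a matter of bookkeeping the absolute constants, and are arranged so that this upper bound dovetails with the lower bound of Lemma~\ref{lem:lwb_trunc} when both are combined in the proof of Theorem~\ref{thm:main_noise}.

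I expect the main difficulty to be organizational rather than analytic: ensuring that the cardinality bound, the eigenvalue bound, and the probability estimate all hold \emph{on a single event and uniformly} over the continuum of triples $(j, \mb \beta, \mb \beta^\star)$. This is what forces the use of the index-set-uniform form of Lemma~\ref{lem:upperbound} and the VC-uniform Corollary~\ref{cor:concent_emp_prob}, rather than any fixed-direction concentration of the quadratic form (which would be unbounded and would reintroduce a spurious factor of $d$). The one genuinely delicate quantitative point is propagating the choice of $\rho$ through the $((2\rho)^2 \log(1/\rho))^\zeta$ estimate so that the resulting $\sqrt{\alpha}$ matches the target factor $\tfrac{1}{k}\pi_{\min}^{1+\zeta^{-1}}$ without losing powers of $k$ or $\pi_{\min}$, while keeping the sample complexity \eqref{eq:samplecom_tailbounds} linear in $d$.
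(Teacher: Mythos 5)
Your overall strategy matches the paper's: the inclusion $\mb x_i \in \mathcal{C}_j\cap\mathcal{C}_{j'}^\star \Rightarrow \langle\mb \xi_i,\mb v\rangle\langle\mb \xi_i,\mb v^\star\rangle\leq 0$, the replacement $\langle\mb \xi_i,\mb v^\star\rangle^2\leq\langle\mb \xi_i,\mb w\rangle^2$, a uniform bound on the fraction of active indices, and then the worst-case eigenvalue bound of Lemma~\ref{lem:upperbound} to extract $\sqrt{\alpha}\,\|\mb w\|_2^2$. There is, however, one step where your proof falls quantitatively short of the statement: the counting step. You control $|\mathcal{I}|/n$ by applying Corollary~\ref{cor:concent_emp_prob} to the class $\mathcal{P}_{2k}$, whose growth-function exponent is $2k(d+1)$; the resulting deviation term is of order $\sqrt{(kd\log(n/d)+\log(1/\delta))/n}$, and you need it to be dominated by $\alpha\asymp\pi_{\min}^{2(1+\zeta^{-1})}/k^2$. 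This forces $n\gtrsim k^4\pi_{\min}^{-4(1+\zeta^{-1})}\bigl(kd\log(n/d)\vee\log(1/\delta)\bigr)$, i.e., an extra factor of $k$ on the $d\log(n/d)$ term relative to \eqref{eq:samplecom_tailbounds}. Since the constant $C$ in \eqref{eq:samplecom_tailbounds} must be absolute (independent of $k$ and $\pi_{\min}$), your argument establishes the lemma only under a strictly stronger sample-size condition ($k^5d$ rather than $k^4d$), and this loss would propagate into the sample complexity of Theorem~\ref{thm:main_noise}.

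The fix is a small reorganization, and it is exactly what the paper does: apply your inclusion at the level of the \emph{indicator}, not only of the integrand and of the expectation. Since $\mb x_i\in\mathcal{C}_j\cap\mathcal{C}_{j'}^\star$ implies $\mb \xi_i\in\mathcal{S}_{\mb v,\mb v^\star}$, where $\mathcal{S}_{\mb v,\mb v^\star}$ is the set in \eqref{eq:def_setS}, one can bound $\bbone_{\{\mb x_i\in\mathcal{C}_j\cap\mathcal{C}_{j'}^\star\}}\leq\bbone_{\{\mb \xi_i\in\mathcal{S}_{\mb v,\mb v^\star}\}}$ and carry out the VC counting over the class $\{\mathcal{S}_{\mb v,\mb v^\star}\}$ instead of $\mathcal{P}_{2k}$. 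Each such set is a union of two wedges (each wedge an intersection of two half-spaces, since $\langle\mb \xi,\mb w\rangle^2-\langle\mb \xi,\mb v^\star\rangle^2=\langle\mb \xi,\mb w-\mb v^\star\rangle\langle\mb \xi,\mb w+\mb v^\star\rangle$), so it lies in $\mathcal{P}_2\cup\mathcal{P}_2$, whose growth function satisfies \eqref{eq:VCbound_unionintersect} with exponent $C'(d+1)$ \emph{independent of $k$}. The deviation term then scales as $\sqrt{(d\log(n/d)+\log(1/\delta))/n}$, and \eqref{eq:samplecom_tailbounds} suffices. A side benefit is that the supremum is then naturally taken over pairs $(\mb v,\mb v^\star)\in\mathcal{M}$ rather than over triples $(j,\mb \beta,\mb \beta^\star)$, which is the cleanest way to get the single-event, uniform-over-the-continuum property you rightly identify as the organizational crux. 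The remaining pieces of your argument --- the expectation bound via Lemma~\ref{lem:triangle_claim} and Lemma~\ref{lem:bnd_vol}, the worst-case eigenvalue bound via Lemma~\ref{lem:upperbound}, and the choice of $R$ to fix the constant $\tfrac{2}{5\gamma}(1/16)^{1+\zeta^{-1}}$ --- are sound as written.
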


The previous result \cite[Lemma~6]{ghosh2019max} showed that with probability at least $1-\delta$ the left-hand side of \eqref{eq:bnd_partial} is bounded from above by $\tilde{O}(({\pi_{\min}^{1+\zeta^{-1}}/}{k}) \log^{\zeta/2+1}({k}/({\pi_{\min}}^{1+\zeta^{-1}})))$ if $n \geq O(\max(p,\log(1/\delta)))$. 
In contrast, Lemma~\ref{lem:bnd_partial} provides a smaller upper bound by a logarithmic factor at the cost of increased sample complexity. 
However, the condition in \eqref{eq:samplecom_tailbounds} is implied by another sufficient condition from another step of the analysis; hence, it does not affect the main result in Theorem~\ref{thm:main_noise}. \\

%Thus, our bound is sharper than their result by the logarithmic factor. The penalty for the tight bound increases the sample complexity in \eqref{eq:samplecom_tailbounds} but this increased sample complexity will not dominate the sample complexity in the proof of the main theorem.

\begin{proof}
By the definition of $\left(\mathcal{C}_j\right)_{j=1}^k$ in \eqref{eq:def_calCj}, it holds for any $j \neq j'$ that 
\begin{equation}
\begin{aligned}
\mb x_i \in \mathcal{C}_j\cap \mathcal{C}_{j'}^\star 
&\iff \langle \mb \xi_i, \mb \beta_j \rangle \geq \langle \mb \xi_i, \mb \beta_{j'} \rangle, ~ \langle \mb \xi_i, \mb \beta_{j'}^\star \rangle \geq \langle \mb \xi_i, \mb \beta_j^\star \rangle \\
&\iff \langle \mb \xi_i, \mb v_{j,j'} \rangle \geq 0, ~ \langle \mb \xi_i, \mb v_{j,j'}^\star\rangle \leq 0 \\
&\implies \langle \mb \xi_i, \mb v_{j,j'} \rangle \langle \mb \xi_i, \mb v_{j,j'}^\star \rangle \leq 0.
\end{aligned}
\label{eq:inclusion}
\end{equation}
Furthermore, by Lemma~\ref{lem:triangle_claim}, every $\mb \beta\in\mathcal{N}(\mb \beta^\star)$ satisfies $\|\mb v_{j,j'} - \mb v_{j,j'}^\star\|_2 \leq 2\rho \|(\mb v_{j,j'}^\star)_{1:d}\|_2$. 
Therefore, it suffices to show that with probability at least $1-\delta$
\begin{equation}
\label{eq:bnd_partial2}
\begin{aligned}
\frac{1}{n}\sum_{i=1}^n\bbone_{\{\langle \mb \xi_i, \mb v \rangle \langle \mb \xi_i, \mb v^\star \rangle \leq 0\}} \langle \mb \xi_i, \mb v^\star\rangle^2
\leq 
\frac{2}{5\gamma k} \left(\frac{\pi_{\min}}{16}\right)^{1+\zeta^{-1}} \|\mb v - \mb v^\star\|_2^2
\end{aligned}
\end{equation}
holds for all $(\mb v, \mb v^\star) \in \mathcal{M}$, where 
\begin{equation*}
% \label{eq:vvstar_dist}
\mathcal{M} := \{ (\mb v, \mb v^\star) \in \mathbb{R}^{d+1} \times \mathbb{R}^{d+1} : \|\mb v - \mb v^\star\| \leq 2\rho \|(\mb v)_{1:d}\|_2 \}.    
\end{equation*}

Since $ab \leq 0$ implies $|b| \leq |a-b|$ for $a,b \in \mathbb{R}$, each summand in the left-hand side of \eqref{eq:bnd_partial2} is upper-bounded by
\begin{equation*}
% \label{eq:set_incursion_claim}
\begin{aligned}
\bbone_{\{\langle\mb \xi_i,\mb v\rangle\langle\mb \xi_i,\mb v^\star\rangle\leq0\}} \langle \mb \xi_i, \mb v^\star \rangle^2
&\leq \bbone_{\{\langle\mb \xi_i,\mb v^\star\rangle^2\leq\langle\mb \xi_i,\mb v-\mb v^\star\rangle^2\}}\langle \mb \xi_i, \mb v^\star \rangle^2 \\
&\leq \bbone_{\{\langle\mb \xi_i,\mb v^\star\rangle^2\leq\langle\mb \xi_i,\mb v-\mb v^\star\rangle^2\}}\langle \mb \xi_i, \mb v-\mb v^\star \rangle^2.
\end{aligned}    
\end{equation*}
Before we proceed to the next step, for brevity, we introduce a shorthand notation given by
\begin{equation}
\label{eq:def_setS}
\mathcal{S}_{\mb v, \mb v^\star} := \{\mb \xi\in\mathbb{R}^{d+1}:\langle\mb \xi,\mb v-\mb v^\star\rangle^2\geq\langle\mb \xi,\mb v^\star\rangle^2\}.
\end{equation}
Then the left-hand side of \eqref{eq:bnd_partial2} is bounded from above as
\[
\frac{1}{n}\sum_{i=1}^n\bbone_{\{\langle \mb \xi_i, \mb v \rangle \langle \mb \xi_i, \mb v^\star \rangle \leq 0\}} \langle \mb \xi_i, \mb v^\star\rangle^2
\leq 
\frac{1}{n}\sum_{i=1}^n \bbone_{\{\mb \xi_i \in \mathcal{S}_{\mb v, \mb v^\star}\}}\langle \mb \xi_i, \mb v-\mb v^\star \rangle^2.
\]

Next, we derive a tail bound on the empirical measure $\frac{1}{n}\sum_{i=1}^n \bbone_{\{\mb \xi_i \in \mathcal{S}_{\mb v, \mb v^\star}\}}$ on the event for $\mb \xi \in \mathcal{S}_{\mb v, \mb v^\star}$.
Let $\mathcal{P}_2$ denote the collection of all polytopes given by the intersections of two half-spaces. 
Then $\mathcal{S}_{\mb v, \mb v^\star}$ belongs to $\mathcal{P}_2 \cup \mathcal{P}_2$. 
It follows from Lemma~\ref{lem:bnd_growth_C} and \cite[Theorem~A]{csikos2018optimal} that 
\begin{equation}
\label{eq:VCbound_unionintersect}
\Pi_{\mathcal{P}_2\cup\mathcal{P}_2}(n)\leq\left(\frac{en}{C'(d+1)}\right)^{C'(d+1)}
\end{equation} 
for some absolute constant $C'$. 
Therefore, by Lemma~\ref{lem:uniform} and \eqref{eq:VCbound_unionintersect}, we obtain that
\begin{align}
\sup_{(\mb v, \mb v^\star) \in \mathcal{M}} \left|\frac{1}{n}\sum_{i=1}^{n}\bbone_{\{\mb \xi_i\in \mathcal{S}_{\mb v, \mb v^\star}\}}-\P(\mb \xi\in\mathcal{S}_{\mb v, \mb v^\star})\right|
\lesssim 
\sqrt{\frac{\log(1/\delta)+d\log(n/d)}{n}}
\label{eq:first_result2} 
\end{align} 
holds with probability at least $1-\frac{\delta}{2}$. 

Similar to \eqref{eq:subreslt0}, we obtain an upper bound on the probability by using Lemma~\ref{lem:bnd_vol} as follows:
\begin{align}
\sup_{(\mb v, \mb v^\star) \in \mathcal{M}} \P(\mb \xi\in \mathcal{S}_{\mb v, \mb v^\star}) 
&\leq C_1 \left( (2\rho)^2 \log\left(\frac{1}{\rho}\right)\right)^{\zeta}\nonumber\\
&\leq C_1 {\left(\frac{ R^2 \pi_{\min}^{2\zeta^{-1}(1+\zeta^{-1})}}{k^{2\zeta^{-1}}}\right)^{\zeta}} \nonumber \\
&\leq
\underbrace{\frac{C_1 R^{2\zeta} \pi_{\min}^{2+2\zeta^{-1}}}{k^2}}_{\alpha} \label{eq:ub_exp_Svvstar}
\end{align} 
where $C_1 > 0$ is an absolute constant. 
By choosing the numerical constant $C > 0$ in \eqref{eq:samplecom_tailbounds} sufficiently large, we obtain from \eqref{eq:first_result2} and \eqref{eq:ub_exp_Svvstar} that
\begin{equation}
\label{eq:Kvbnd}
\mathbb{P}\left( \sup_{(\mb v, \mb v^\star) \in \mathcal{M}} \frac{1}{n}\sum_{i=1}^n \bbone_{\{\mb \xi_i \in \mathcal{S}_{\mb v, \mb v^\star}\}} > { \frac{\alpha }{2}} \right) \leq \frac{\delta}{2}.
\end{equation}

Furthermore, one can choose the numerical constant $R > 0$ small enough so that $\alpha \in (0,1)$. 
Then, since \eqref{eq:samplecom_tailbounds} and \eqref{eq:choice_rho1} imply \eqref{eq:samp_tanver}, by Lemma~\ref{lem:upperbound}, it holds with probability at least $1-\delta/2$ that 
\begin{equation}
\sup_{\mathcal{I} : |\mathcal{I}| \leq \frac{\alpha n}{2}} \left\| \sum_{i \in \mathcal{I}} \mb \xi_i \mb \xi_i^\top \right\|
\lesssim (\eta^2\vee1)\sqrt{\alpha} n. 
\label{eq:lem:bnd_partial:largest_ev}
\end{equation}

Finally, by combining the results in \eqref{eq:Kvbnd} and  \eqref{eq:lem:bnd_partial:largest_ev}, we obtain that with probability at least $1-\delta$
\begin{align*}
\frac{1}{n}\sum_{i=1}^n\bbone_{\{\langle \mb \xi_i, \mb v \rangle \langle \mb \xi_i, \mb v^\star \rangle \leq 0\}} \langle \mb \xi_i, \mb v^\star\rangle^2
& \leq 
\sup_{\mathcal{I} : |\mathcal{I}| \leq \frac{\alpha n}{2}} \frac{1}{n} \sum_{i \in \mathcal{I}} \langle \mb \xi_i, \mb v - \mb v^\star \rangle^2 \\
& \leq \sup_{\mathcal{I} : |\mathcal{I}| \leq \frac{\alpha n}{2}} \left\| \frac{1}{n} \sum_{i \in \mathcal{I}} \mb \xi_i \mb \xi_i^\top \right\| \cdot \| \mb v - \mb v^\star \|_2^2 \\
& \leq C_2(\eta^2\vee1)R^{\zeta} \left({\frac{\pi_{\min}^{(1+\zeta^{-1})}}{k}}\right)\cdot\| \mb v - \mb v^\star \|_2^2
\end{align*}
holds for all $(\mb v, \mb v^\star) \in \mathcal{M}$, where $C_2$ is an absolute constant. 
By choosing $R > 0$ sufficiently small so that 
\[
C_2(\eta^2\vee1)R^{\zeta}\leq\frac{2}{5\gamma}\left(\frac{1}{16}\right)^{1+\zeta^{-1}},
\] 
we obtain the assertion in \eqref{eq:bnd_partial2}. 
\end{proof}

\section{Proof of \Cref{thm:main_noise}}
\label{sec:thmproof}

The loss function $\ell(\mb \beta)$ is decomposed as
\begin{align*}
\ell(\mb \beta)&=\frac{1}{2n}\left(\max_{j\in[k]}\langle\mb \xi_i,\mb \beta_j\rangle-\max_{j\in[k]}\langle\mb \xi_i,\mb \beta_j^\star\rangle-z_i\right)^2\\
&=\underbrace{\frac{1}{2n}\sum_{i=1}^{n}\left(\max_{j\in[k]}\langle\mb \xi_i,\mb \beta_j\rangle-\max_{j\in[k]}\langle\mb \xi_i,\mb \beta_j^\star\rangle\right)^2}_{\ell^{\mathrm{clean}}(\mb \beta)} \\
&-\underbrace{\left(\frac{1}{n}\sum_{i=1}^{n}z_i\left(\max_{j\in[k]}\langle\mb \xi_i,\mb \beta_j\rangle-\max_{j\in[k]}\langle\mb \xi_i,\mb \beta_j^\star\rangle\right)-\frac{1}{2n}\sum_{i=1}^{n}z_i^2\right)}_{\ell^{\mathrm{noise}}(\mb \beta)}.
\end{align*}
Then the partial gradient of $\ell(\mb \beta)$ with respect to $\mb \beta_l$ is written as
\begin{equation}
\label{eq:gradient_clean_def}
\begin{aligned}
    \nabla_{\mb \beta_l}\ell(\mb \beta)&=\frac{1}{n}\sum_{i=1}^{n}\bbone_{\{\mb x_i\in\mathcal{C}_l\}}\left(\max_{j\in[k]}\langle\mb \xi_i,\mb \beta_j\rangle-\max_{j\in[k]}\langle\mb \xi_i,\mb \beta_j^\star\rangle-z_i\right)\mb \xi_i\\
    &=\underbrace{\frac{1}{n}\sum_{i=1}^{n}\bbone_{\{\mb x_i\in\mathcal{C}_l\}}\left(\max_{j\in[k]}\langle\mb \xi_i,\mb \beta_j\rangle-\max_{j\in[k]}\langle\mb \xi_i,\mb \beta_j^\star\rangle\right)\mb \xi_i}_{ \nabla_{\mb \beta_l}\ell^{\mathrm{clean}}(\mb \beta)}-\underbrace{\frac{1}{n}\sum_{i=1}^{n}z_i\bbone_{\{\mb x_i\in\mathcal{C}_l\}}\mb \xi_i}_{\nabla_{\mb \beta_l}\ell^{\mathrm{noise}}(\mb \beta)}
\end{aligned}
\end{equation}
where $\mathcal{C}_1, \dots, \mathcal{C}_k$ are determined by $\mb \beta$ as in \eqref{eq:def_calCj}. 

In the remainder of the proof, we will use the following shorthand notation to denote the pairwise difference of parameter vectors and the probability measure on the largest partition by the ground-truth model: 
\begin{align*}
\mb v_{j,j'} :=\mb \beta_{j}-\mb \beta_{j'}, \quad 
\mb v_{j,j'}^\star
:=\mb \beta_j^\star-\mb \beta_{j'}^\star, \quad \text{and} \quad \pi_{\max} :=\max_{j\in[k]}\P\left(\mb x\in \mathcal{C}_j^\star\right).
\end{align*}
Below we show that the following lemmas hold under the condition in \eqref{eq:samplcomp_noise}. 
The proof is provided in Appendix~\ref{sec:proof:lem:lwb_gradient}.

\begin{lemma}
\label{lem:lwb_gradient}
Under the hypothesis of Theorem~\ref{thm:main_noise}, if \eqref{eq:samplcomp_noise} is satisfied, then with probability at least $1-\delta$ the following inequalities hold for all $j \in [k]$, $\mb \beta^\star \in \mathbb{R}^{k(d+1)}$, and $\mb \beta^t \in \mathcal{N}(\mb \beta^\star)$:  
\begin{equation}
\label{eq:res:lem:lwb_gradient}
\langle\nabla_{\mb \beta_j}\ell^{\mathrm{clean}}(\mb \beta^t),\mb \beta_j^t-\mb \beta_j^\star\rangle
\geq   
\frac{2}{\gamma}\left(\frac{\pi_{\min}}{16}\right)^{1+\zeta^{-1}}
\left( \|\mb \beta_j^t-\mb \beta_j^\star\|_2^2 - \frac{1}{10k} \sum_{j':j'\neq j}\left\|\mb v_{j,j'}^t-\mb v_{j,j'}^\star\right\|_2^2 \right),
\end{equation}
\begin{equation}
\label{eq:res:lem:upbnd_normgrad}
\|\nabla_{\mb \beta_j}\ell^{\mathrm{clean}}(\mb \beta^t)\|_2^2
\lesssim \left({\pi_{\max}}+\pi_{\min}^{2(1+\zeta^{-1})}\right)\left\|\mb \beta_j^t-\mb \beta_j^\star\right\|_2^2 + \frac{\pi_{\min}^{2(1+\zeta^{-1})}}{k^2} \sum_{j':j'\neq j}\left\|\mb v_{j,j'}^t-\mb v_{j,j'}^\star\right\|_2^2, 
\end{equation}
and
\begin{equation}
\label{eq:res:lem:upbnd_noisegrad}
\left\|
\nabla_{\mb \beta_j}\ell^{\mathrm{noise}}(\mb \beta^t)
\right\|_2
\lesssim 
\frac{\sigma \sqrt{kd\log(n/d) + \log(1/\delta)}}{\sqrt{n}}.
\end{equation}
\end{lemma}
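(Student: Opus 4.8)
The plan is to prove the three estimates separately, all built on a single decomposition of each partial gradient over the ground-truth partition. For a fixed block index $j$, I would split the current cell as $\mathcal{C}_j=\bigcup_{j'=1}^{k}(\mathcal{C}_j\cap\mathcal{C}_{j'}^\star)$ and, on $\mathcal{C}_{j'}^\star$, replace $\max_l\langle\mb \xi_i,\mb \beta_l^\star\rangle$ by $\langle\mb \xi_i,\mb \beta_{j'}^\star\rangle$. Writing $\mb u_j:=\mb \beta_j^t-\mb \beta_j^\star$, every summand of $\nabla_{\mb \beta_j}\ell^{\mathrm{clean}}$ then carries the scalar weight $w_i^{(j')}:=\langle\mb \xi_i,\mb \beta_j^t\rangle-\langle\mb \xi_i,\mb \beta_{j'}^\star\rangle=\langle\mb \xi_i,\mb u_j\rangle+\langle\mb \xi_i,\mb v_{j,j'}^\star\rangle$, which cleanly separates a diagonal ($j'=j$) contribution from the off-diagonal misclassification terms. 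This identity is the workhorse for all three bounds.

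For the lower bound \eqref{eq:res:lem:lwb_gradient}, the diagonal term is exactly $\frac{1}{n}\sum_i\bbone_{\{\mb x_i\in\mathcal{C}_j\cap\mathcal{C}_j^\star\}}\langle\mb \xi_i,\mb u_j\rangle^2$. I would bound the cardinality of $\{i:\mb x_i\in\mathcal{C}_j\cap\mathcal{C}_j^\star\}$ below by $\frac{\pi_{\min}}{4}n$ using \Cref{lem:lwb_number_ball}, then invoke \Cref{lem:lwb_trunc} with $\alpha=\pi_{\min}/4$ to get $\lambda_{d+1}\geq\frac{2n}{\gamma}(\frac{\pi_{\min}}{16})^{1+\zeta^{-1}}$, producing the leading $\frac{2}{\gamma}(\frac{\pi_{\min}}{16})^{1+\zeta^{-1}}\|\mb u_j\|_2^2$ term. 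Each off-diagonal term expands as $\frac{1}{n}\sum_i\bbone\langle\mb \xi_i,\mb u_j\rangle^2+\frac{1}{n}\sum_i\bbone\langle\mb \xi_i,\mb v_{j,j'}^\star\rangle\langle\mb \xi_i,\mb u_j\rangle$; the first piece is nonnegative and the second I would control by Young's inequality, reducing it to $-\frac{1}{2}\frac{1}{n}\sum_i\bbone\langle\mb \xi_i,\mb v_{j,j'}^\star\rangle^2$, which \Cref{lem:bnd_partial} bounds by $-\frac{1}{5\gamma k}(\frac{\pi_{\min}}{16})^{1+\zeta^{-1}}\|\mb v_{j,j'}-\mb v_{j,j'}^\star\|_2^2$. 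Summing over $j'\neq j$ reproduces \eqref{eq:res:lem:lwb_gradient} with exactly the stated $\frac{1}{10k}$ constant.

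For the gradient-norm bound \eqref{eq:res:lem:upbnd_normgrad} I would again split diagonal and off-diagonal. The diagonal part equals $\frac{1}{n}\big(\sum_{i:\mb x_i\in\mathcal{C}_j\cap\mathcal{C}_j^\star}\mb \xi_i\mb \xi_i^\top\big)\mb u_j$; since that index set has cardinality $O(\pi_{\max}n)$ (via \Cref{cor:concent_emp_prob} applied to $\mathcal{C}_j^\star\in\mathcal{P}_k$), \Cref{lem:upperbound} gives operator norm $\lesssim\sqrt{\pi_{\max}}\,n$ and hence the $\pi_{\max}\|\mb u_j\|_2^2$ contribution. For each off-diagonal block, using $w_i^{(j')}=\langle\mb \xi_i,\mb u_j\rangle+\langle\mb \xi_i,\mb v_{j,j'}^\star\rangle$, Cauchy--Schwarz pairs the empirical second moment of $w_i^{(j')}$ with the largest eigenvalue of the error-region Gram matrix; the latter has cardinality $O(\alpha n)$ with $\alpha\asymp\pi_{\min}^{2(1+\zeta^{-1})}/k^2$ (the measure bound \eqref{eq:Kvbnd} inside the proof of \Cref{lem:bnd_partial}), so \Cref{lem:upperbound} supplies the $\sqrt{\alpha}$ factor while \Cref{lem:bnd_partial} controls the $\langle\mb \xi_i,\mb v_{j,j'}^\star\rangle^2$ moment. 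Collecting the diagonal and off-diagonal contributions over the $k-1$ blocks yields the claimed polynomial dependence on $\pi_{\min}$, $\pi_{\max}$, and $k$.

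For the noise term \eqref{eq:res:lem:upbnd_noisegrad}, I would condition on the covariates so that $\nabla_{\mb \beta_j}\ell^{\mathrm{noise}}=\frac{1}{n}\sum_i z_i\bbone_{\{\mb x_i\in\mathcal{C}_j\}}\mb \xi_i$ is a $\sigma$-sub-Gaussian vector; writing $\|\cdot\|_2$ as a supremum over $\mb h\in\mathbb{S}^d$, each linear functional is sub-Gaussian with proxy $\frac{\sigma}{n}(\sum_i\bbone\langle\mb \xi_i,\mb h\rangle^2)^{1/2}\lesssim\sigma/\sqrt{n}$, since $\lambda_1(\sum_i\mb \xi_i\mb \xi_i^\top)\lesssim n$ by \Cref{lem:upperbound} with $\alpha=1$. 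The supremum over $\mb h$ and over the data-dependent cell $\mathcal{C}_j\in\mathcal{P}_k$ is then discretized: an $\epsilon$-net of $\mathbb{S}^d$ of size $e^{O(d)}$ together with the growth function $\Pi_{\mathcal{P}_k}(n)\leq(en/(d+1))^{k(d+1)}$ from \Cref{lem:bnd_growth_C} bounds the number of distinct sign patterns, so a union bound over the resulting $\exp(O(kd\log(n/d)))$ configurations combined with the sub-Gaussian tail gives the $\sigma\sqrt{kd\log(n/d)+\log(1/\delta)}/\sqrt{n}$ rate, uniformly in $\mb \beta^t\in\mathcal{N}(\mb \beta^\star)$ and $\mb \beta^\star$. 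Throughout, the sample-size requirement \eqref{eq:samplcomp_noise} is chosen to dominate the hypotheses of Lemmas~\ref{lem:upperbound}, \ref{lem:lwb_trunc}, \ref{lem:lwb_number_ball}, and \ref{lem:bnd_partial}. The main obstacle is the off-diagonal bookkeeping: tracking the interplay between the small measure of the misclassification regions $\mathcal{C}_j\cap\mathcal{C}_{j'}^\star$ and the quadratic weights so that the cross terms are genuinely lower order, and ensuring uniformity over the polytope class $\mathcal{P}_k$ in the noise term, are where the sharpened \Cref{lem:lwb_trunc} and \Cref{lem:bnd_partial} together with the VC/growth-function machinery are essential.
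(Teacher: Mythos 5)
Your proofs of \eqref{eq:res:lem:lwb_gradient} and \eqref{eq:res:lem:upbnd_noisegrad} are essentially the paper's own: the same decomposition of $\mathcal{C}_j$ over the ground-truth cells $\mathcal{C}_j\cap\mathcal{C}_{j'}^\star$, the same Young/AM--GM reduction of the cross terms to $\langle\mb \xi_i,\mb v_{j,j'}^\star\rangle^2$ followed by \Cref{lem:bnd_partial}, the same pairing of \Cref{lem:lwb_number_ball} with \Cref{lem:lwb_trunc} at $\alpha=\pi_{\min}/4$ for the diagonal term (giving the stated $1/(10k)$ constant), and, for the noise gradient, the same conditioning on covariates plus a union bound over the growth function of $\mathcal{P}_{k-1}$; your $\epsilon$-net argument merely replaces the paper's Dudley integral and yields the same $\sigma\sqrt{kd\log(n/d)+\log(1/\delta)}/\sqrt{n}$ rate.

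The gap is in \eqref{eq:res:lem:upbnd_normgrad}, precisely at the step you compress into ``collecting the diagonal and off-diagonal contributions over the $k-1$ blocks.'' Because you keep the mixed weight $w_i^{(j')}=\langle\mb \xi_i,\mb u_j\rangle+\langle\mb \xi_i,\mb v_{j,j'}^\star\rangle$ inside each block, your per-block Cauchy--Schwarz gives, writing $\mb a_{j'}$ for the $j'$th block's contribution to the gradient, $\|\mb a_{j'}\|_2\lesssim\frac{\pi_{\min}^{1+\zeta^{-1}}}{k}\left(\|\mb u_j\|_2+\|\mb v_{j,j'}-\mb v_{j,j'}^\star\|_2\right)$; the only way to assemble $\|\nabla_{\mb \beta_j}\ell^{\mathrm{clean}}\|_2^2$ from these per-block norms is the triangle inequality, and squaring $\sum_{j'\neq j}\|\mb a_{j'}\|_2$ costs a Cauchy--Schwarz factor of $k$, leaving you with $\frac{\pi_{\min}^{2(1+\zeta^{-1})}}{k}\sum_{j'\neq j}\|\mb v_{j,j'}-\mb v_{j,j'}^\star\|_2^2$, a factor $k$ larger than the stated coefficient $\frac{\pi_{\min}^{2(1+\zeta^{-1})}}{k^2}$. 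The paper avoids this by regrouping \emph{before} taking any norms: using the identity $\langle\mb \xi_i,\mb \beta_j-\mb \beta_{j'}^\star\rangle=\langle\mb \xi_i,\mb u_j\rangle+\langle\mb \xi_i,\mb v_{j,j'}^\star\rangle$, all of the $\mb u_j$-weighted summands are gathered into the single matrix--vector product $\left(\frac{1}{n}\sum_{i}\bbone_{\{\mb x_i\in\mathcal{C}_j\}}\mb \xi_i\mb \xi_i^\top\right)\mb u_j$, whose operator norm is $\lesssim\sqrt{\pi_{\max}}+\pi_{\min}^{1+\zeta^{-1}}$ and produces the first term, while the remaining cross vector carries only the $\langle\mb \xi_i,\mb v_{j,j'}^\star\rangle$ weights on the pairwise-disjoint sets $\mathcal{C}_j\cap\mathcal{C}_{j'}^\star$ and is squared in one shot using that disjointness (step (ii) of \eqref{eq:decomposed_AB}), so the moments from \Cref{lem:bnd_partial} are paired only with the small error-region Gram norms $\lesssim\pi_{\min}^{1+\zeta^{-1}}/k$, which is what yields the $1/k^2$. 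Your weaker $1/k$ coefficient would in fact still suffice for Theorem~\ref{thm:main_noise}, since in Step~1 and Step~2 of its proof the cross term is anyway bounded through $\sum_{j'\neq j}\|\mb v_{j,j'}^t-\mb v_{j,j'}^\star\|_2^2\leq 4k(\kappa\rho)^2$, but as written your route does not establish the lemma as stated; the fix is the regrouping just described, not sharper per-block estimates.
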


The remainder of the proof shows that the assertion of the theorem is obtained from \eqref{eq:res:lem:lwb_gradient}, \eqref{eq:res:lem:upbnd_normgrad} and \eqref{eq:res:lem:upbnd_noisegrad} via the following three steps. \\

\noindent\textbf{Step~1:} 
We prove by induction that all iterates remain within the neighborhood $\mathcal{N}(\mb \beta^\star)$. 
Suppose that $\mb \beta^t \in \mathcal{N}(\mb \beta^\star)$ holds for a fixed $t \in \mbb{N}$. 
By the triangle inequality, for any $j\in[k]$, the next iterate $\mb \beta^{t+1}$ satisfies
\begin{align}
\|\mb \beta_j^{t+1}-\mb \beta_j^\star\|_2
&=\|\mb \beta_j^{t}-\mu\nabla_{\mb \beta_j}\ell(\mb \beta^t)-\mb \beta_j^\star\|_2\nonumber\\
&\leq\underbrace{\|\mb \beta_j^{t}-\mu\nabla_{\mb \beta_j}\ell^{\mathrm{clean}}(\mb \beta^t)-\mb \beta_j^\star\|_2}_{A_\mathrm{clean}}
+\underbrace{\mu\|\nabla_{\mb \beta_j}\ell^{\mathrm{noise}}(\mb \beta^t)\|_2}_{A_\mathrm{noise}}\label{eq:eql_noise}.
\end{align}
Then it remains to show 
\begin{equation}
\label{eq:objective_step1}
\|\mb \beta_j^{t+1}-\mb \beta_j^\star\|_2\leq A_{\mathrm{clean}}+A_{\mathrm{noise}}\leq \kappa\rho,\quad\forall j\in[k].
\end{equation}

Note that the first summand in the right-hand side of \eqref{eq:eql_noise} satisfies
\begin{align*}
A_\mathrm{clean}^2
=\|\mb \beta_j^{t}-\mb \beta_j^\star\|_2^2-2\mu\langle\nabla_{\mb \beta_j}\ell^{\mathrm{clean}}(\mb \beta^t),\mb \beta_j^{t}-\mb \beta_j^\star\rangle+\mu^2\|\nabla_{\mb \beta_j}\ell^{\mathrm{clean}}(\mb \beta^t)\|_2^2. 
\end{align*} 
Therefore, it follows from \eqref{eq:res:lem:lwb_gradient} and \eqref{eq:res:lem:upbnd_normgrad} that
\begin{align}
A_\mathrm{clean}^2 &\leq\left\|\mb \beta_j^{t}-\mb \beta_j^\star\right\|_2^2 
- \frac{4\mu}{\gamma}\left(\frac{1}{16}\right)^{1+\zeta^{-1}} \pi_{\min}^{1+\zeta^{-1}} \left(\|\mb \beta_j^{t}-\mb \beta_j^\star\|_2^2-\frac{1}{10k}\sum_{j':j'\neq j}\left\|\mb v_{j,j'}^t-\mb v_{j,j'}^\star\right\|_2^2\right) \nonumber \\
&\quad+\mu^2 C_1\left(\left({\pi_{\max}}+\pi_{\min}^{2(1+\zeta^{-1})}\right)\left\|\mb \beta_j^{t}-\mb \beta_j^\star\right\|_2^2+\frac{\pi_{\min}^{2(1+\zeta^{-1})}}{k^2}\sum_{j':j'\neq j}\left\|\mb v_{j,j'}^t-\mb v_{j,j'}^\star\right\|_2^2\right) \nonumber \\
&=\left(1 - \frac{4}{\gamma}\left(\frac{1}{16}\right)^{1+\zeta^{-1}} \mu\pi_{\min}^{1+\zeta^{-1}} + C_1\mu^2\left({\pi_{\max}}+\pi_{\min}^{2(1+\zeta^{-1})}\right)\right)\|\mb \beta_j^t-\mb \beta_j^\star\|_2^2 \nonumber \\
&\quad+\left(\frac{\frac{2}{\gamma}\left(\frac{1}{16}\right)^{1+\zeta^{-1}}\mu \pi_{\mathrm{min}}^{1+\zeta^{-1}}}{5k}+\frac{C_1 \mu^2 \pi_{\min}^{2(1+\zeta^{-1})}}{k^2}\right)\sum_{j'^*:j'\neq j}\left\|\mb v_{j,j}^t-\mb v_{j,j'}^\star\right\|_2^2. \label{eq:proof_rec_prox_ub1}
\end{align} 
We set the step size $\mu$ to be
\begin{equation}
\label{eq:muchoose}
\mu=\frac{\omega\pi_{\min}^{1+\zeta^{-1}}}{\tau}
\end{equation}
where $\omega$ is a constant that will be specified later and $\tau$ is given by
\begin{equation}
\label{eq:choose_tau}
\tau:={\pi_{\max}} + \pi_{\min}^{2(1+\zeta^{-1})}.
\end{equation}
Putting the choices of $\mu$ and $\tau$ respectively by \eqref{eq:muchoose} and \eqref{eq:choose_tau} into \eqref{eq:proof_rec_prox_ub1} yields
\begin{equation}
\label{eq:estimate_bnd1}
\begin{aligned}
A_\mathrm{clean}^2 & \leq\left(1-\frac{  \frac{4}{\gamma}\left(\frac{1}{16}\right)^{1+\zeta^{-1}} \omega \pi_{\min}^{2(1+\zeta^{-1})}}{\tau}+\frac{C_1\omega^2\pi_{\min}^{2(1+\zeta^{-1})}\left({\pi_{\max}}+\pi_{\min}^{2(1+\zeta^{-1})}\right)}{\tau^2}\right)\|\mb \beta_j^t-\mb \beta_j^\star\|_2^2 
\\
& \quad+\left(\frac{\frac{2}{\gamma}\left(\frac{1}{16}\right)^{1+\zeta^{-1}} \omega\pi_{\min}^{2(1+\zeta^{-1})}}{5 \tau k}+\frac{C_1 \omega^2\pi_{\min}^{4(1+\zeta^{-1})}}{\tau^2 k^2}\right)\sum_{j':j'\neq j}\left\|\mb v_{j,j'}^t-\mb v_{j,j'}^\star\right\|_2^2 \\
&\leq\left(1-\frac{ \frac{4}{\gamma}\left(\frac{1}{16}\right)^{1+\zeta^{-1}} \omega \pi_{\min}^{2(1+\zeta^{-1})}}{\tau}+\frac{C_1\omega^2\pi_{\min}^{2(1+\zeta^{-1})}}{\tau}\right)\|\mb \beta_j^t-\mb \beta_j^\star\|_2^2 \\
& \quad+\left(\frac{\frac{2}{\gamma}\left(\frac{1}{16}\right)^{1+\zeta^{-1}} \omega\pi_{\min}^{2(1+\zeta^{-1})}}{5\tau }+\frac{C_1w^2\pi_{\min}^{2(1+\zeta^{-1})}}{\tau} \right) \max_{1 \leq j\neq j' \leq k}\left\|\mb v_{j,j'}^t-\mb v_{j,j'}^\star\right\|_2^2.
\end{aligned} 
\end{equation}
Next, since $\mb \beta^t \in \mathcal{N}(\mb \beta^\star)$, by the definition of $\mathcal{N}(\mb \beta^\star)$ in \eqref{eq:defnbr}, we have
\begin{equation}
\label{eq:rec_cond0}
\max_{j \in [k]} \|\mb \beta_j^t - \mb \beta_j^\star\|_2 \leq \kappa\rho.
\end{equation}
Furthermore, by Lemma~\ref{lem:triangle_claim}, we also have
\begin{equation}
\label{eq:rec_cond}
\max_{1\leq j\neq j'\leq k}{\left\|\mb v_{j,j'}^{t}-\mb v_{j,j'}^\star\right\|_2} 
\leq 
2 \kappa\rho.
\end{equation} 
Then plugging in \eqref{eq:rec_cond0} and \eqref{eq:rec_cond} into \eqref{eq:estimate_bnd1} yields
\begin{equation}
\begin{aligned}
(\kappa\rho)^{-2} A_\mathrm{clean}^2
&\leq 1-\frac{\pi_{\min}^{2(1+\zeta^{-1})} \omega}{\tau}\left(\frac{2}{\gamma}\left(\frac{1}{16}\right)^{1+\zeta^{-1}} \left(2-\frac{4}{5}\right)+C_1\omega \left(1+4\right) \right)\\
&\leq 1-\frac{\pi_{\min}^{2(1+\zeta^{-1})}}{\tau} \cdot {\omega \left(\frac{\frac{12}{\gamma}\left(\frac{1}{16}\right)^{1+\zeta^{-1}}}{5} +5\omega C_1 \right)}\\
&\leq 1-\frac{\pi_{\min}^{2(1+\zeta^{-1})}}{\tau} \cdot \omega\underbrace{\left( \frac{\frac{12}{\gamma}\left(\frac{1}{16}\right)^{1+\zeta^{-1}}}{5}\right)}_{c_0},
\label{eq:estiamte_bnd2}
\end{aligned}
\end{equation}
which is rewritten as 
\begin{equation}
A_\mathrm{clean}^2 \\
\leq (\kappa\rho)^{2} \left(1 - \frac{c_0\omega \pi_{\min}^{2(1+\zeta^{-1})}}{\tau}\right).
\label{eq:estiamte_bnd3}
\end{equation}
For fixed $\gamma$ and $\zeta$, $c_0$ is a positive numerical constant. 
Due to the choice of $\tau$ by \eqref{eq:choose_tau}, we have
\[
\frac{\pi_{\min}^{2(1+\zeta^{-1})}}{\tau}
= \frac{\pi_{\min}^{2(1+\zeta^{-1})}}{\pi_{\max} + \pi_{\min}^{2(1+\zeta^{-1})}} < 1,
\]
Furthermore, one can choose $\omega > 0$ sufficiently small so that $\omega c_0<1$. 
Then the upper bound in the right-hand side of \eqref{eq:estiamte_bnd3} is valid as a positive number. 

If $A_{\mathrm{noise}}$ is upper-bounded as
\begin{equation}
\label{eq:suffice_noise1}
A_{\mathrm{noise}}\leq\kappa\rho\frac{c_0\omega\pi_{\min}^{2(1+\zeta^{-1})}}{2\tau},
\end{equation}
then, by the elementary inequality $1-\sqrt{1-\alpha}\geq\alpha/2$ that holds for any $\alpha\in(0,1)$, we have
\begin{equation}
\label{eq:suffice_noise2}
A_{\mathrm{noise}}\leq\kappa\rho\left(1-\sqrt{1-\frac{c_0\omega\pi_{\min}^{2(1+\zeta^{-1})}}{\tau}}\right).     
\end{equation}
Then \eqref{eq:estiamte_bnd3} and \eqref{eq:suffice_noise2} yield \eqref{eq:objective_step1}. 
Therefore, it suffices to show that \eqref{eq:suffice_noise1} holds. 

{
Due to the inequality in \eqref{eq:res:lem:upbnd_noisegrad}, we have 
\[
\left\|
\nabla_{\mb \beta_j}\ell^{\mathrm{noise}}(\mb \beta^t)
\right\|_2
\lesssim \frac{\sigma \sqrt{kd\log(n/d) + \log(1/\delta)}}{\sqrt{n}}, \quad \forall j \in [k]. 
\] 
By the choice of $\mu$ in \eqref{eq:muchoose}, we obtain an upper bound on $A_{\mathrm{noise}}$ given by
\begin{equation}
\label{eq:bnd_noise_norm1}
A_{\mathrm{noise}}=\mu\left\|
\nabla_{\mb \beta_j}\ell^{\mathrm{noise}}(\mb \beta^t)
\right\|_2
\lesssim
\frac{\omega\pi_{\min}^{1+\zeta^{-1}}}{\tau} \cdot \frac{\sigma\sqrt{kd\log(n/d)+\log(1/\delta)}}{\sqrt{n}}.
\end{equation}
The condition in \eqref{eq:samplcomp_noise} implies
\begin{equation}
\label{eq:sample_noise1}
n \geq C\cdot\frac{\sigma^2  \pi_{\min}^{-2(1+\zeta^{-1})} \left( kd\log(n/d)+\log(1/\delta) \right)}{\kappa^{2} \rho^{2} }. 
\end{equation}
One can choose the absolute constant $C > 0$ in \eqref{eq:samplcomp_noise} and \eqref{eq:sample_noise1} as large enough so that \eqref{eq:sample_noise1} and \eqref{eq:bnd_noise_norm1} imply \eqref{eq:suffice_noise1}. 
This completes the induction argument in Step 1.\\

}

% Moreover, by \eqref{eq:samplcomp_noise} and \eqref{eq:res:lem:upbnd_noisegrad}, we obtain 
% \begin{equation}
% A_\mathrm{noise} \leq \frac{2 \kappa\rho c_0 \pi_{\min}^{2(1+\zeta^{-1})}}{\tau}.    
% \label{eq:estiamte_bnd4}
% \end{equation}
% Plugging in \eqref{eq:estiamte_bnd3} and \eqref{eq:estiamte_bnd4} to \eqref{eq:eql_noise} yields that $\mb \beta^{t+1} \in \mathcal{N}(\mb \beta^\star)$. This completes the induction argument. \\

\noindent\textbf{Step~2:} 
Next we show that all iterates also satisfy
\begin{align}
\label{eq:recursion_decrease}
\left\|\mb \beta^{t+1}-\mb \beta^\star\right\|_2
\leq \sqrt{1-\nu} \left\|\mb \beta^t-\mb \beta^\star\right\|_2
+C'\mu\sigma\sqrt{\frac{k \left(kd\log(n/d)+\log(1/\delta)\right)}{n}}.
\end{align}
We use the fact that $\mb \beta^t \in \mathcal{N}(\mb \beta^\star)$, which has been shown in Step 1.
By the update rule of gradient descent and the triangle inequality, the left-hand side of \eqref{eq:recursion_decrease} satisfies
\begin{align}
\|\mb \beta^{t+1}-\mb \beta^\star\|_2
&= \|\mb \beta^t-\mu\nabla_{\mb \beta}\ell(\mb \beta^t)-\mb \beta^\star\|_2 \nonumber \\
& \leq 
\|\mb \beta^t-\mu\nabla_{\mb \beta}\ell^{\mathrm{clean}}(\mb \beta^t)-\mb \beta^\star\|_2
+\mu
\|\nabla_{\mb \beta}\ell^{\mathrm{noise}}(\mb \beta^t)\|_2 \nonumber \\
&=\underbrace{\sqrt{\sum_{j=1}^{k}\|\mb \beta_j^t-\mb \beta_j^\star-\mu\nabla_{\mb \beta_j}\ell^{\mathrm{clean}}(\mb \beta^t)\|_2^2}}_{B_\mathrm{clean}}
+\underbrace{\sqrt{{\mu^2 \sum_{j=1}^{k}\|\nabla_{\mb \beta_j}\ell^{\mathrm{noise}}(\mb \beta^t)\|_2^2}}}_{B_\mathrm{noise}}.\label{eq:decomposed_noise}
\end{align}
Below we derive an upper bound on each of the summands on the right-hand side of \eqref{eq:decomposed_noise}. 
First we show that
\begin{align}
\label{eq:recursion_decrease_clean}
B_\mathrm{clean}^2
\leq (1-\nu) \sum_{j=1}^{k}\left\|\mb \beta_j^t-\mb \beta_j^\star\right\|_2^2.
\end{align}
Since $\mb \beta^t \in \mathcal{N}(\mb \beta^\star)$, the inequality in \eqref{eq:recursion_decrease_clean} holds if there exist constants $\mu, \lambda \in (0,1)$ such that
\begin{equation}
\label{eq:regulcond_maxlinear}
\sum_{j=1}^{k}\langle\nabla_{\mb \beta_j}\ell^{\mathrm{clean}}(\mb \beta^t),\mb \beta_j - \mb \beta_j^\star\rangle\geq\frac{\mu}{2}\sum_{j=1}^{k}\|\nabla_{\mb \beta_j}\ell^{\mathrm{clean}}(\mb \beta^t)\|_2^2+\frac{\lambda}{2}\sum_{j=1}^{k}\|\mb \beta_j^t - \mb \beta_j^\star\|_2^2, \quad \forall \mb \beta^t \in \mathcal{N}(\mb \beta^{\star}).
\end{equation}
Indeed, the condition in \eqref{eq:regulcond_maxlinear} and $\mb \beta^t \in \mathcal{N}(\mb \beta^{\star})$ imply 
\begin{align}
    B_\mathrm{clean}^2
    &=\sum_{j=1}^{k}\|\mb \beta_j^t-\mu\nabla_{\mb \beta_j}\ell^{\mathrm{clean}}(\mb \beta^t)-\mb \beta_j^\star\|_2^2 \nonumber \\
    &=\sum_{j=1}^{k}\|\mb \beta_j^t - \mb \beta_j^\star\|_2^2+\sum_{j=1}^{k}\mu^2 \|\nabla_{\mb \beta_j}\ell^{\mathrm{clean}}(\mb \beta^t)\|_2^2-2\mu\sum_{j=1}^{k}\langle\mb \beta_j^t - \mb \beta_j^\star,\nabla_{\mb \beta_j}\ell^{\mathrm{clean}}(\mb \beta^t)\rangle\nonumber\\
    &\leq\sum_{j=1}^{k}\|\mb \beta_j^t - \mb \beta_j^\star\|_2^2-\mu\lambda\sum_{j=1}^{k}\|\mb \beta_j^t - \mb \beta_j^\star\|_2^2\nonumber\\
    &=(1-\mu\lambda)\sum_{j=1}^{k}\|\mb \beta_j^t - \mb \beta_j^\star\|_2^2. \label{eq:noiseless_contraction}
\end{align}

Next we show that \eqref{eq:regulcond_maxlinear} holds. 
Due to \eqref{eq:res:lem:lwb_gradient} and the elementary inequality $\|\mb a+\mb b\|_2^2\leq2\|\mb a\|_2^2+2\|\mb b\|_2^2$, it holds for all $j\in[k]$ that
\begin{equation}
\label{eq:subrslt_LHS}
\begin{aligned}
& \langle\nabla_{\mb \beta_j}\ell^{\mathrm{clean}}(\mb \beta^t),\mb \beta_j^t-\mb \beta_j^\star\rangle \\
& \geq \frac{2}{\gamma}\left(\frac{1}{16}\right)^{1+\zeta^{-1}}\pi_{\min}^{1+\zeta^{-1}} \left( \|\mb \beta_j^t-\mb \beta_j^\star\|_2^2 - \frac{1}{5k} \sum_{j':j'\neq j}\left(\left\|\mb \beta_j^t-\mb \beta_j^\star\right\|_2^2+\left\|\mb \beta_{j'}^t-\mb \beta_{j'}^\star\right\|_2^2\right) \right).
\end{aligned}
\end{equation} 
By taking the summation of \eqref{eq:subrslt_LHS} over $j \in [k]$, we obtain
\begin{align}
\sum_{j=1}^k\langle\nabla_{\mb \beta_j}\ell^{\mathrm{clean}}(\mb \beta^t),\mb \beta_j^t-\mb \beta_j^\star\rangle
\geq \frac{ \frac{6}{\gamma}\left(\frac{1}{16}\right)^{1+\zeta^{-1}}\pi_{\min}^{1+\zeta^{-1}}}{5} \sum_{j=1}^{k}\|\mb \beta_j^t-\mb \beta_j^\star\|_2^2. \label{eq:lwb_result}
\end{align}
Furthermore, by using \eqref{eq:res:lem:upbnd_normgrad} and the elementary inequality $\|\mb a+\mb b\|_2^2\leq2\|\mb a\|_2^2+2\|\mb b\|_2^2$ again, we obtain
\begin{equation}
\label{eq:result_up}
\begin{aligned}
\|\nabla_{\mb \beta_j}\ell^{\mathrm{clean}}(\mb \beta^t)\|_2^2&\leq C_1\left({\pi_{\max}}+\pi_{\min}^{2(1+\zeta^{-1})}\right)\|\mb \beta_j^t-\mb \beta_j^\star\|_2^2\\
&+\frac{2 C_1 \pi_{\min}^{2(1+\zeta^{-1})}}{k^2}\sum_{j':j'\neq j}\left(\|\mb \beta_j^t-\mb \beta_j^\star\right\|_2^2+\left\|\mb \beta_{j'}^t-\mb \beta_{j'}^\star\|_2^2\right).
\end{aligned}
\end{equation}
Summing the equation in \eqref{eq:result_up} over $j \in [k]$ yields  

\begin{equation}
\label{eq:RHSbnd1}
\begin{aligned}
\sum_{j=1}^k \|\nabla_{\mb \beta_j}\ell^{\mathrm{clean}}(\mb \beta^t)\|_2^2
&\leq 
C_1\left( {\pi_{\max}}+\pi_{\min}^{2(1+\zeta^{-1})} + \frac{4 (k-1)\pi_{\min}^{2(1+\zeta^{-1})}}{k^2} \right)
\sum_{j=1}^k \left\|\mb \beta_j^t-\mb \beta_j^\star\right\|_2^2\\
&\leq C_1\left( {\pi_{\max}}+\pi_{\min}^{2(1+\zeta^{-1})} + {4\pi_{\min}^{2(1+\zeta^{-1})}} \right)
\sum_{j=1}^k \left\|\mb \beta_j^t-\mb \beta_j^\star\right\|_2^2.
\end{aligned}
\end{equation} 
By combining \eqref{eq:lwb_result} and \eqref{eq:RHSbnd1} with $\mu$ as in \eqref{eq:muchoose}, we obtain a sufficient condition for \eqref{eq:regulcond_maxlinear} given by

\begin{equation}
\label{eq:RClwup}
\frac{ \frac{6}{\gamma}\left(\frac{1}{16}\right)^{1+\zeta^{-1}} \pi_{\min}^{1+\zeta^{-1}}}{5}
\geq
\frac{\omega\pi_{\min}^{1+\zeta^{-1}} C_1 \left( {\pi_{\max}}+5\pi_{\min}^{2(1+\zeta^{-1})}\right)}{2\left( {\pi_{\max}}+\pi_{\min}^{2(1+\zeta^{-1})}\right)}
+ \frac{\lambda}{2}.
\end{equation}
By choosing $\omega > 0$ small enough, \eqref{eq:RClwup} is satisfied when $\lambda$ is chosen as
\begin{equation}
\label{eq:lambdachoose}
\lambda= \min(c_2 \pi_{\min}^{1+\zeta^{-1}}, 1)
\end{equation} 
for an absolute constant $c_2 > 0$.
Hence, we have shown that the condition in \eqref{eq:regulcond_maxlinear} holds with $\mu$ and $\lambda$ specified by \eqref{eq:muchoose} and \eqref{eq:lambdachoose}. 

Next we consider the second summand on the right-hand side of \eqref{eq:decomposed_noise}. 
The inequality in \eqref{eq:res:lem:upbnd_noisegrad} implies 
\begin{align}
\label{eq:noise_bnd_B}
B_\mathrm{noise}^2 
= 
\mu^2 \sum_{j=1}^{k}\left\|\nabla_{\mb \beta_j}\ell^{\mathrm{noise}}(\mb \beta^t)\right\|_2^2\lesssim
\frac{\mu^2\sigma^2 k(kd\log(n/d) + \log(1/\delta))}{n}.
\end{align}

Finally, plugging in \eqref{eq:noiseless_contraction} and \eqref{eq:noise_bnd_B} into \eqref{eq:decomposed_noise} provides the assertion in \eqref{eq:recursion_decrease}. This completes the proof of Step~$2$. \\

\noindent\textbf{Step~3:} 
We finish the proof of Theorem~\ref{thm:main_noise} by applying the results in Step 1 and Step 2. 
Plugging in the expression of $\nu=\mu\lambda$ with $\mu$ and $\lambda$ as in \eqref{eq:muchoose} and \eqref{eq:lambdachoose} provides 
\begin{align*}
\|\mb \beta^{t}-\mb \beta^\star\|_2
&\leq\left(1-\mu\lambda\right)^{t/2}\|\mb \beta^0-\mb \beta^\star\|_2+C_2\cdot \frac{\mu\sigma}{1-\sqrt{1-\mu\lambda}} \cdot \sqrt{\frac{k\left(kd\log(n/d)+\log(1/\delta)\right)}{n}}\\
&\overset{\mathrm{(a)}}{\leq}\left(1-\mu\lambda\right)^{t/2}\|\mb \beta^0-\mb \beta^\star\|_2+C_2 \cdot \frac{2\sigma}{\lambda}\cdot \sqrt{\frac{k\left(kd\log(n/d)+\log(1/\delta)\right)}{n}}\\
&
\overset{\mathrm{(b)}}{\leq}\left(1-\mu\lambda\right)^{t/2}\|\mb \beta^0-\mb \beta^\star\|_2+C_3 \cdot \frac{\sigma}{\pi_{\max}}\cdot\sqrt{\frac{k\left(kd\log(n/d)+\log(1/\delta)\right)}{n}}\\
&
\overset{\mathrm{(c)}}{\leq}\left(1-\mu\lambda\right)^{t/2}\|\mb \beta^0-\mb \beta^\star\|_2+C_3 \cdot \sigma k\sqrt{\frac{k\left(kd\log(n/d)+\log(1/\delta)\right)}{n}}, 
\end{align*} 
where $\mathrm{(a)}$ follows from the elementary inequality $\sqrt{1-t}<1-t/2$ for any $t\in(0,1)$; $\mathrm{(b)}$ holds by the choice of $\tau$ in \eqref{eq:choose_tau}; $\mathrm{(c)}$ holds since $\pi_{\max}^{-1}\leq k$.

\subsection{Proof of Lemma~\ref{lem:lwb_gradient}}
\label{sec:proof:lem:lwb_gradient}
%\begin{proof}[Proof of Lemma~\ref{lem:lwb_gradient}]

We show that each of \eqref{eq:res:lem:lwb_gradient}, \eqref{eq:res:lem:upbnd_normgrad}, and \eqref{eq:res:lem:upbnd_noisegrad} holds with probability at least $1-\delta/3$. { We also note that for simplicity, we proceed on the proofs using $\mb \beta$ and $\mb v_{j,j'}$. Therefore, the assertions in \eqref{eq:res:lem:lwb_gradient}, \eqref{eq:res:lem:upbnd_normgrad}, and \eqref{eq:res:lem:upbnd_noisegrad} can be completed by substituting $\mb \beta$ and $\mb v_{j,j'}$ with $\mb \beta^t$ and $\mb v_{j,j'}^t$ respectively.}

\noindent\textbf{Proof of \eqref{eq:res:lem:lwb_gradient}: }
We show that \eqref{eq:res:lem:lwb_gradient} holds with high probability under the following condition 
\begin{equation}
\label{eq:cond:lem:lwb_gradient}
n \geq C_1 \left(\log(k/\delta) \vee d\log(n/d)\right)k^4\pi_{\min}^{-4(1+\zeta^{-1})},
\end{equation} which is implied by the assumption in \eqref{eq:samplcomp_noise}. 
We proceed with the proof under the following three events, each of which holds with probability at least $1-\delta/9$. 
First, since \eqref{eq:cond:lem:lwb_gradient} implies \eqref{eq:samplecom_tailbounds}, by Lemma~\ref{lem:bnd_partial}, it holds with probability at least $1 - \delta/9$ that
\begin{equation}
\label{eq:bnd_vjj'}
\begin{aligned}
& \frac{1}{n}\sum_{j':j'\neq j}\sum_{i=1}^n \bbone_{\{\mb x_i \in \mathcal{C}_j\cap \mathcal{C}_{j'}^\star\}}\langle \mb \xi_i, \mb v_{j,j'}^\star\rangle^2 \\ 
& \leq \frac{2}{5\gamma k} \left(\frac{\pi_{\min}}{16}\right)^{1+\zeta^{-1}} \sum_{j':j'\neq j}\|\mb v_{j,j'}-\mb v_{j,j'}^\star\|_2^2, \quad \forall j \in [k], ~ \forall \mb \beta\in\mathcal{N}(\mb \beta^\star), ~ \forall \mb \beta^\star \in \mathbb{R}^{d+1}.
\end{aligned}
\end{equation}
Moreover, since \eqref{eq:cond:lem:lwb_gradient} also implies \eqref{eq:sample_intersection}, by Lemma~\ref{lem:lwb_number_ball}, it holds with probability at least $1-\delta/3$ that
\begin{equation}
\label{eq:lwb_number}
\frac{1}{n}\sum_{i=1}^{n}\bbone_{\{\mb x_i\in \mathcal{C}_j\cap \mathcal{C}_j^\star\}} \geq \frac{\pi_{\min}}{4}, \quad \forall j \in [k], ~ \forall \mb \beta \in \mathcal{N}(\mb \beta^\star), ~ \forall \mb \beta^\star \in \mathbb{R}^{d+1}. 
\end{equation} 
Lastly, since \eqref{eq:cond:lem:lwb_gradient} is a sufficient condition to invoke Lemma~\ref{lem:lwb_trunc} with $\alpha=\pi_{\min}/4$, it holds with probability at least $1-\delta/9$ that
\begin{equation}
\label{eq:bnd_smallball}
\begin{aligned}
\inf_{\mc I \subset [n] : |\mc I|\geq\frac{\pi_{\min} n}{4}} \lambda_{d+1}\left(\frac{1}{n}\sum_{i\in \mc I} \mb \xi_i \mb \xi_i^\top \right) \geq  \frac{2}{\gamma}\left(\frac{\pi_{\min}}{16}\right)^{1+\zeta^{-1}}.
\end{aligned}
\end{equation}
Therefore, we have shown that \eqref{eq:bnd_vjj'}, \eqref{eq:lwb_number}, and \eqref{eq:bnd_smallball} hold with probability at least $1-\delta/3$. 
The remainder of the proof is conditioned on the event that $\{\mb \xi_i\}_{i=1}^n$ satisfy \eqref{eq:bnd_vjj'}, \eqref{eq:lwb_number}, and \eqref{eq:bnd_smallball}. 

Let $\mb \beta^\star \in \mathbb{R}^{d+1}$, $\mb \beta \in \mathcal{N}(\mb \beta^\star)$, and $j \in [k]$ be arbitrarily fixed. 
For brevity, we will use the shorthand notation $\mb h_j := \mb \beta_j - \mb \beta_j^\star$. 
Then the left-hand side of \eqref{eq:res:lem:lwb_gradient} is rewritten as
\begin{align*}
\langle\nabla_{\mb \beta_j}\ell^{\mathrm{clean}}(\mb \beta),\mb h_j\rangle
&= \frac{1}{n}\sum_{i=1}^n\bbone_{\{\mb x_i \in \mathcal{C}_j\}} \left(\langle \mb \xi_i, \mb \beta_j \rangle-\max_{j\in[k]}\langle \mb \xi_i, \mb \beta_j^\star \rangle\right) \langle \mb \xi_i, \mb h_j \rangle \\
&= \frac{1}{n} \sum_{j'=1}^k \sum_{i=1}^n \bbone_{\{\mb x_i \in \mathcal{C}_j\cap\mathcal{C}_j^\star\}} \langle \mb \xi_i, \mb \beta_j - \mb \beta_{j'}^\star \rangle\langle\mb \xi_i,\mb h_j\rangle\\
&=\frac{1}{n} \sum_{i=1}^n \bbone_{\{\mb x_i \in \mathcal{C}_j\cap\mathcal{C}_j^\star\}} \langle \mb \xi_i, \mb h_j \rangle^2+\frac{1}{n}\sum_{j':j'\neq j}\sum_{i=1}^n\bbone_{\{\mb x_i\in\mathcal{C}_j\cap\mathcal{C}_{j'}^\star\}}\langle\mb \xi_i,\mb \beta_j-\mb \beta_{j'}^\star\rangle\langle\mb \xi_i,\mb h_j\rangle.
\end{align*}
By the inequality of arithmetic and geometric means, we have
\begin{align*}
\langle \mb \xi_i, \mb \beta_j - \mb \beta_{j'}^\star \rangle \langle \mb \xi_i, \mb h_j \rangle
& = \langle \mb \xi_i, \mb \beta_j - \mb \beta_j^\star + \mb \beta_j^\star - \mb \beta_{j'}^\star \rangle \langle \mb \xi_i, \mb h_j \rangle \\
& = \langle \mb \xi_i, \mb h_j + \mb v_{j,j'}^\star \rangle \langle \mb \xi_i, \mb h_j \rangle \\
& \geq \frac{\langle \mb \xi_i, \mb h_j \rangle^2}{2} - \frac{\langle \mb \xi_i, \mb v_{j,j'}^\star \rangle^2}{2} \geq - \frac{\langle \mb \xi_i, \mb v_{j,j'}^\star \rangle^2}{2}.
\end{align*} 
Therefore, we obtain 
\begin{align}
\langle\nabla_{\mb \beta_j}\ell^{\mathrm{clean}}(\mb \beta),\mb h_j\rangle \geq \underbrace{\frac{1}{n}\sum_{i=1}^n\bbone_{\{\mb x_i \in \mathcal{C}_j\cap \mathcal{C}_j^\star\}}\langle \mb \xi_i, \mb h_j \rangle^2}_{\mathrm{(*)}} 
- \underbrace{\frac{1}{2n}\sum_{j':j'\neq j}\sum_{i=1}^n\bbone_{\{\mb x_i \in \mathcal{C}_j\cap \mathcal{C}_{j'}^\star\}}\langle \mb \xi_i, \mb v_{j,j'}^\star \rangle^2}_{\mathrm{(**)}}.
\label{eq:midrslt0}
\end{align}
By \eqref{eq:lwb_number} and \eqref{eq:bnd_smallball}, the first summand in the right-hand side of \eqref{eq:midrslt0} is bounded from below as
\begin{equation}
\label{eq:lwb_fsterm}
\begin{aligned}
(*) 
&\geq \frac{2}{\gamma}\left(\frac{\pi_{\min}}{16}\right)^{1+\zeta^{-1}}\|\mb h_j\|_2^2.
\end{aligned}
\end{equation}
Moreover, due to \eqref{eq:bnd_vjj'}, $(**)$ is bounded from above as
\begin{equation}
\label{eq:thirdbnd}
\begin{aligned}
(**) \leq  \frac{1}{5\gamma k} \left(\frac{\pi_{\min}}{16}\right)^{1+\zeta^{-1}} \sum_{j':j'\neq j}\|\mb v_{j,j'}-\mb v_{j,j'}^\star\|_2^2.
\end{aligned}
\end{equation}
Then, plugging in \eqref{eq:lwb_fsterm} and \eqref{eq:thirdbnd} into \eqref{eq:midrslt0} provides 
\begin{equation*}
% \label{eq:final_lowerbound0}
\begin{aligned}
&\langle\nabla_{\mb \beta_j}\ell(\mb \beta),\mb h_j\rangle\\
&\geq\frac{2}{\gamma}\left(\frac{\pi_{\min}}{16}\right)^{1+\zeta^{-1}}\|\mb h_j\|_2^2-\frac{1}{5\gamma}\left(\frac{1}{16}\right)^{1+\zeta^{-1}}\left(\frac{\pi_{\min}^{1+\zeta^{-1}}}{k}\right)\sum_{j':j'\neq j}\|\mb v_{j,j'}-\mb v_{j,j'}^\star\|_2^2\\
&=\frac{2}{\gamma}\left(\frac{\pi_{\min}}{16}\right)^{1+\zeta^{-1}}\left(\|\mb h_j\|_2^2-\frac{1}{10k}\sum_{j':j'\neq j}\left\|\mb v_{j,j'}-\mb v_{j,j'}^\star\right\|_2^2\right).
\end{aligned} 
\end{equation*} This completes the proof.\\

%\end{proof}

\noindent\textbf{Proof of \eqref{eq:res:lem:upbnd_normgrad}: }
The proof is based on the condition
\begin{equation}
\label{eq:cond:lem:upbnd_normgrad}
n\geq C_2\left(\log(k/\delta)\vee d\log(n/d)\right)k^4\pi_{\min}^{-4(1+\zeta^{-1})},
\end{equation} which is implied by \eqref{eq:samplcomp_noise}. 
We will proceed under the following four events, each of which holds with probability at least $1-\delta/12$. 
First, since \eqref{eq:cond:lem:upbnd_normgrad} implies \eqref{eq:samplecom_tailbounds}, by Lemma~\ref{lem:bnd_partial}, \eqref{eq:bnd_vjj'} holds with probability at least $1 - \delta/12$.  
% Second, by Lemma~\ref{lem:expect0}, Remark~\ref{rem:subgmtx}, and \eqref{eq:cond:lem:upbnd_normgrad}, it holds with probability $1-\delta/5$ that
% \begin{equation}
% \label{eq:spectral_bnd}
% \left\|\sum_{i=1}^n\mb \xi_i\mb \xi_i^\T\right\| \leq1.1\sqrt{n}.
% \end{equation} 
Next, since $\left(\mathcal{C}_j^\star\right)_{j=1}^k$ are included in the set of intersection of $k$ half-spaces in $\mathbb{R}^d$, by Corollary~\ref{cor:concent_emp_prob} and \eqref{eq:cond:lem:upbnd_normgrad}, it holds with probability at least $1-\delta/12$ that 
\begin{equation}
\label{eq:hoff}
\frac{1}{n}\sum_{i=1}^{n}\bbone_{\{\mb x_i\in \mathcal{C}_j^\star\}}\leq 2\P\left(\mb x\in \mathcal{C}_j^\star\right),\quad\forall j\in[k]. 
\end{equation}
We also consider the event given by
\begin{equation}
\label{eq:unif_conv}
\sum_{i=1}^n \bbone_{\left\{\mb x_i\in \mathcal{C}_j\cap\mathcal{C}_j^\star\right\}} \leq 2n c\left(\frac{\pi_{\min}^{2(1+\zeta^{-1})}}{k^2}\right),\quad\forall j\neq j', ~ \forall\mb \beta\in\mathcal{N}(\mb \beta^\star)
\end{equation} 
for some numerical constant $c\in(0,1)$. Note that \eqref{eq:cond:lem:upbnd_normgrad} is a sufficient condition to invoke Lemma~\ref{lem:bnd_partial} with probability at least $1-\delta/12$. 
Therefore, all intermediate steps in the proof of Lemma~\ref{lem:bnd_partial} hold. 
In particular, due to the inclusion argument in \eqref{eq:inclusion}, $\mb x_i\in\mathcal{C}_j\cap\mathcal{C}_{j'}^\star$ implies $\mb \xi_i = [\mb x_i ; 1] \in\mathcal{S}_{\mb v_{j,j'},\mb v_{j,j'}^\star}$ for any $j\neq j'$, where $\mathcal{S}_{\mb v_{j,j'},\mb v_{j,j'}^\star}$ is defined in \eqref{eq:def_setS}. 
Then, \eqref{eq:Kvbnd} with $\alpha$ as in \eqref{eq:ub_exp_Svvstar} implies \eqref{eq:unif_conv}.
% It follows from the intermediate step in \eqref{eq:Kvbnd} in the proof of Lemma~\ref{lem:bnd_partial} together with \eqref{eq:inclusion} and the fact that $\langle \mb \xi_i, \mb v_{j,j'} \rangle \langle \mb \xi_i, \mb v_{j,j'}^\star \rangle \leq 0$ implies $\langle\mb \xi_i,\mb v_{j,j'}^\star\rangle^2\leq\langle\mb \xi_i,\mb v_{j,j'}-\mb v_{j,j'}^\star\rangle^2$. 
%Since \eqref{eq:cond:lem:upbnd_normgrad} is a sufficient condition to invoke Lemma~\ref{lem:bnd_partial}, the event holds with probability at least $1-\delta/12$.
The last event is defined by
\begin{equation}
\label{eq:tanver_bound}
\begin{aligned}
& \max_{\begin{subarray}{c} \mathcal{I} \subset [n] \\ |\mathcal{I}|\leq 2\alpha n \end{subarray}} \lambda_{\max}\left(\frac{1}{n}\sum_{i\in\mathcal{I}}\mb \xi_i\mb \xi_i^\T\right) \leq C_4 (\eta^2 \vee 1)\sqrt{\alpha},
\quad\forall \alpha \in \left\{\frac{c \pi_{\min}^{2(1+\zeta^{-1})}}{k^2}\right\} \cup \left\{\P(\mb x\in\mathcal{C}_j^\star)\right\}_{j=1}^k.
\end{aligned}
\end{equation} 
By \eqref{eq:cond:lem:upbnd_normgrad}, Lemma~\ref{lem:upperbound}, and the union bound over $j \in [k]$, \eqref{eq:tanver_bound} holds with probability at least $1-\delta/12$. 
Thus far we have shown that \eqref{eq:bnd_vjj'}, \eqref{eq:hoff}, \eqref{eq:unif_conv}, and \eqref{eq:tanver_bound} hold with probability at least $1-\delta/3$. 
We proceed conditioned on the event that $\{\mb \xi_i\}_{i=1}^n$ satisfy these conditions. 

Let $\mb \beta^\star \in \mathbb{R}^{d+1}$, $\mb \beta \in \mathcal{N}(\mb \beta^\star)$, and $j \in [k]$ be arbitrarily fixed. 
Then the partial gradient of $\ell^\mathrm{clean}(\mb \beta)$ with respect to the $j$th block $\mb \beta_j \in \mathbb{R}^{d+1}$ of $\mb \beta\in \mathbb{R}^{k(d+1)}$ is written as
\begin{align}
    \nabla_{\mb \beta_j}\ell^{\mathrm{clean}}(\mb \beta)
    &=\frac{1}{n} \sum_{i=1}^n \bbone_{\{\mb x_i\in \mathcal{C}_j\}} \left(\langle\mb \xi_i,\mb \beta_j\rangle-\max_{j\in[k]}\langle\mb \xi_i,\mb \beta_j^\star\rangle\right)\mb \xi_i\nonumber\\
    &=\frac{1}{n}\sum_{j'\in[k]} \sum_{i=1}^n \bbone_{\{\mb x_i\in \mathcal{C}_j \cap \mathcal{C}_{j'}^\star \}} \left(\langle\mb \xi_i,\mb \beta_j\rangle-\langle\mb \xi_i,\mb \beta_{j'}^\star\rangle\right)\mb \xi_i\nonumber\\
    &=\frac{1}{n} \sum_{i=1}^n \bbone_{\{\mb x_i\in \mathcal{C}_j \cap \mathcal{C}_{j'}^\star \}} \langle\mb \xi_i,\mb \beta_j-\mb \beta_j^\star\rangle\mb \xi_i+\frac{1}{n}\sum_{j':j'\neq j} \sum_{i=1}^n \bbone_{\{\mb x_i\in \mathcal{C}_j \cap \mathcal{C}_{j'}^\star \}} \langle\mb \xi_i,\mb \beta_j-\mb \beta_{j'}^\star\rangle\mb \xi_i\label{start_result}.
\end{align}
By using the identity $\langle\mb \xi_i,\mb \beta_j-\mb \beta_{j'}^\star\rangle=\langle\mb \xi_i,\mb \beta_j-\mb \beta_j^\star+\mb \beta_j^\star-\mb \beta_{j'}^\star\rangle$, \eqref{start_result} is rewritten as
\begin{equation}
\label{eq:grad_clean}
\nabla_{\mb \beta_j}\ell^{\mathrm{clean}}(\mb \beta)=\frac{1}{n} \sum_{i=1}^n\bbone_{\{\mb x_i\in\mathcal{C}_j\}}\langle\mb \xi_i,\mb \beta_j-\mb \beta_j^\star\rangle\mb \xi_i+\frac{1}{n}\sum_{j':j'\neq j} \sum_{i=1}^n \bbone_{\{\mb x_i\in \mathcal{C}_j \cap \mathcal{C}_{j'}^\star \}} \langle\mb \xi_i,\mb \beta_j^\star-\mb \beta_{j'}^\star\rangle\mb \xi_i.
\end{equation}
Then it follows from \eqref{eq:grad_clean} that
{
\begin{align}
&\left\|\nabla_{\mb \beta_j}\ell^{\mathrm{clean}}(\mb \beta)\right\|_2^2 \nonumber \\
&\overset{\mathrm{(i)}}{\leq} 2 \left\|\frac{1}{n}\sum_{i=1}^n\bbone_{\{\mb x_i\in\mathcal{C}_j\}}\langle\mb \xi_i,\mb \beta_j-\mb \beta_j^\star\rangle\mb \xi_i\right\|_2^2
+ 2 \left\|\frac{1}{n}\sum_{j':j'\neq j} \sum_{i=1}^n \bbone_{\{\mb x_i\in \mathcal{C}_j \cap \mathcal{C}_{j'}^\star \}} \langle\mb \xi_i,\mb \beta_j^\star-\mb \beta_{j'}^\star\rangle\mb \xi_i\right\|_2^2 \nonumber \\
& \overset{\mathrm{(ii)}}{\leq} 2 \cdot \norm{\frac{1}{n}\sum_{i=1}^{n} \bbone_{\{\mb x_i\in \mathcal{C}_j\}} \mb \xi_i \mb \xi_i^\top} \cdot \frac{1}{n} \sum_{i=1}^n \bbone_{\{\mb x_i\in\mathcal{C}_j\}} \langle\mb \xi_i,\mb \beta_j-\mb \beta_j^\star\rangle^2\nonumber
\\
&\quad + 2 \cdot \sum_{j':j'\neq j} \norm{\frac{1}{n} \sum_{i=1}^{n} \bbone_{\{\mb x_i\in\mathcal{C}_j\cap\mathcal{C}\}}\mb \xi_i \mb \xi_i^\top} \cdot \frac{1}{n} \sum_{i=1}^n \bbone_{\{\mb x_i\in \mathcal{C}_j \cap \mathcal{C}_{j'}^\star \}} \langle\mb \xi_i,\mb \beta_j^\star-\mb \beta_{j'}^\star\rangle^2 \nonumber\\
&\leq 2 \cdot \underbrace{\norm{\frac{1}{n}\sum_{i=1}^{n} \bbone_{\{\mb x_i\in \mathcal{C}_j\}} \mb \xi_i \mb \xi_i^\top}^2}_{\mathrm{(a)}}  \cdot\|\mb \beta_j-\mb \beta_j^\star\|_2^2\nonumber\\
&\quad + 2 \cdot  \underbrace{\max_{j':j'\neq j}\norm{\frac{1}{n}\sum_{i=1}^{n} \bbone_{\{\mb x_i\mathcal{C}_j \cap \mathcal{C}_{j'}^\star\}} \mb \xi_i \mb \xi_i^\top}}_{\mathrm{(b)}}\cdot\underbrace{\frac{1}{n}\sum_{j':j'\neq j}\sum_{i=1}^n \bbone_{\{\mb x_i\in \mathcal{C}_j \cap \mathcal{C}_{j'}^\star \}} \langle\mb \xi_i,\mb \beta_j^\star-\mb \beta_{j'}^\star\rangle^2}_{\mathrm{(c)}}, 
\label{eq:decomposed_AB}
\end{align}
where (i) holds since $\|\mb a+\mb b\|_2^2\leq2\|\mb a\|_2^2+2\|\mb b\|_2^2$ and (ii) holds since $\mathcal{C}_j\cap \mathcal{C}_{l}^\star$ and $\mathcal{C}_j\cap \mathcal{C}_{l'}^\star$ are disjoint for any $l\neq l'\in[k]$. An upper bound on (b) is provided by \eqref{eq:bnd_vjj'}. It remains to derive upper bounds on (a) and (c).

First, we derive an upper bound on (a). By the triangle inequality, we have
\begin{equation}
\label{eq:ub_a_lem:normgrad}
\sqrt{\mathrm{(a)}} 
\leq \sum_{j'=1}^k \left\| \sum_{i=1}^{n}\bbone_{\{\mb x_i\in \mathcal{C}_j\cap \mathcal{C}_{j'}^\star\}}\mb \xi_i\mb \xi_i^\T \right\|.
\end{equation}
For the summand indexed by $j'=j$, due to the set inclusion $\mathcal{C}_j\cap \mathcal{C}_j^\star \subset \mathcal{C}_j^\star$, we obtain that
\begin{align*}
\sum_{i=1}^{n}\bbone_{\{\mb x_i\in \mathcal{C}_j\cap \mathcal{C}_j^\star\}}\mb \xi_i\mb \xi_i^\T
\preceq
\sum_{i=1}^{n}\bbone_{\{\mb x_i\in \mathcal{C}_j^\star\}}\mb \xi_i\mb \xi_i^\T.
\end{align*}
Therefore, by \eqref{eq:hoff} and \eqref{eq:tanver_bound}, we have  
\begin{equation}
\label{eq:reslt_A}
\begin{aligned}
\left\| \frac{1}{n}
\sum_{i=1}^n \bbone_{\{\mb x_i \in\mathcal{C}_j^\star\}} \mb \xi_i\mb \xi_i^\T \right\|
&\leq \max_{\mathcal{I}:|\mathcal{I}|\leq2n\P(\mb x\in\mathcal{C}_j^\star)} \left\| \frac{1}{n}
\sum_{i\in\mathcal{I}}\mb 
\xi_i\mb \xi_i^\T\right\| \\
&\lesssim(\eta^2\vee1)\sqrt{\P(\mb x\in \mathcal{C}_j^\star)} \\
& \leq(\eta^2\vee 1)\sqrt{\pi_{\max}},
\end{aligned}
\end{equation} 
where the last inequality holds by the definition of $\pi_{\max}$.
Similarly, by \eqref{eq:unif_conv} and \eqref{eq:tanver_bound}, we have
\begin{equation}
\label{eq:eigenbound2}
\left\| \sum_{i=1}^{n}\bbone_{\{\mb x_i\in \mathcal{C}_j\cap \mathcal{C}_{j'}^\star\}}\mb \xi_i\mb \xi_i^\T\right\|
\lesssim
(\eta^2\vee1)\sqrt{c}\left(\frac{\pi_{\min}^{1+\zeta^{-1}}}{k}\right), \quad \forall j' \neq j.
\end{equation}
Then by plugging in \eqref{eq:reslt_A} and \eqref{eq:eigenbound2} to \eqref{eq:ub_a_lem:normgrad}, we obtain 
\begin{equation*}
\mathrm{(a)} 
\lesssim \left({\pi_{\max}}+\pi_{\min}^{2(1+\zeta^{-1})}\right) \left\|\mb \beta_j-\mb \beta_j^\star\right\|_2^2
\end{equation*} for an absolute constant $C_1$.
Finally, since an upper bound on (b) is given by \eqref{eq:eigenbound2}, plugging in the obtained upper bounds to \eqref{eq:decomposed_AB} provides the assertion. \\
}
%\end{proof}

\noindent\textbf{Proof of \eqref{eq:res:lem:upbnd_noisegrad}: }
By the variational characterization of the Euclidean norm and the triangle inequality, we have
\begin{align}
\left\|\nabla_{\mb \beta_j}\ell^{\mathrm{noise}}(\mb \beta)\right\|_2
&= \sup_{[\mb u ;\ w] \in B_2^{d+1}}\left|\frac{1}{n}\sum_{i=1}^{n}z_i \bbone_{\{\mb x_i\in \mathcal{C}_j\}} (\langle \mb x_i, \mb u \rangle+w) \right| \nonumber \\
&\leq 
\underbrace{\sup_{\mb u \in B_2^p} \left|\frac{1}{n}\sum_{i=1}^{n}z_i \bbone_{\{\mb x_i\in \mathcal{C}_j\}} \langle \mb x_i, \mb u \rangle \right|}_{(\mathrm{A})}
+ 
\underbrace{\sup_{|w| \leq 1} \left|\frac{1}{n}\sum_{i=1}^{n}z_i \bbone_{\{\mb x_i\in \mathcal{C}_j\}} w \right|}_{(\mathrm{B})},
\label{eq:bnd_noise}
\end{align} 
where $B_2^d$ denotes the unit ball in $\ell_2^d$. 
Note that (A) and (B) depend on $\mb \beta$ only through $\mathcal{C}_j$, which are determined by $\mb \beta$ according to \eqref{eq:def_calCj}.
For any $\mb \beta$ and any $j \in [k]$, the corresponding $\mathcal{C}_j$ is given as the intersection of up to $k$ affine spaces. 
Therefore, it suffices to maximize $\left\|\nabla_{\mb \beta_j}\ell^{\mathrm{noise}}(\mb \beta)\right\|_2$ over $\mathcal{C}_j \in \mathcal{P}_{k-1}$ for a fixed $j$, where $\mathcal{P}_{k-1}$ is defined in the statement of \Cref{lem:bnd_growth_C}.

We proceed under the event that the following inequalities hold:
\begin{equation}
\label{eq:bnd_noise_event1}
\left\|\frac{1}{n} \sum_{i=1}^n\mb x_i \mb x_i^\T  \right\|
\leq 1 + \epsilon
\end{equation}
and
\begin{equation}
\label{eq:bnd_noise_event2}
\left|\frac{1}{n}\sum_{i=1}^{n}\bbone_{\{\mb x_i \in \mathcal{C}_j\}}-\mathbb{P}(\mb x \in \mathcal{C}_j)\right|
\leq \epsilon, 
\quad \forall \mathcal{C}_j \in \mathcal{P}_{k-1}
\end{equation}
for some constant $\epsilon$, which we specify later. The remainder of the proof is given conditioned on $(\mb x_i)_{i=1}^n$ satisfying \eqref{eq:bnd_noise_event1} and \eqref{eq:bnd_noise_event2}.

First, we derive an upper bound on (A) in \eqref{eq:bnd_noise}. 
Note that (A) corresponds to the supremum of the random process
\[
Z_{\mb u}:=\frac{1}{n} \sum_{i=1}^n z_i \bbone_{\{\mb x_i\in \mathcal{C}_j\}} \langle \mb x_i, \mb u \rangle
\]
over $\mb u \in B_2^p$. 
The sub-Gaussian increment satisfies
\begin{align*}
\|Z_{\mb u}-Z_{\mb u'}\|_{\psi_2}
& \lesssim 
\frac{\sigma}{\sqrt{n}}\sqrt{\frac{1}{n}\sum_{i=1}^{n}\bbone_{\{\mb x_i\in \mathcal{C}_j\}} \langle \mb x_i, \mb u-\mb u' \rangle^2} \\
& \leq 
\frac{\sigma}{\sqrt{n}}
\left\| \frac{1}{n}\sum_{i=1}^{n} \bbone_{\{\mb x_i\in \mathcal{C}_j\}} \mb x_i \mb x_i^\T \right\|^{1/2} \cdot \|\mb u-\mb u'\|_2 \\
&\leq 
\frac{\sigma}{\sqrt{n}}
\left\| \frac{1}{n}\sum_{i=1}^{n} \mb x_i \mb x_i^\T \right\|^{1/2} \cdot \|\mb u-\mb u'\|_2 \\
& \leq 
\frac{\sigma \sqrt{1 + \epsilon}}{\sqrt{n}} \cdot \|\mb u-\mb u'\|_2,
\end{align*} 
where the third step follows from the inequality 
\[
\left\| \frac{1}{n}\sum_{i=1}^{n} \bbone_{\{\mb x_i\in \mathcal{C}_j\}} \mb x_i \mb x_i^\T \right\|
\leq \left\| \frac{1}{n}\sum_{i=1}^{n} \mb x_i \mb x_i^\T \right\|,
\] 
which holds deterministically, and the last step follows from \eqref{eq:bnd_noise_event1}. 
Then, by applying a version of Dudley's inequality \cite[Theorem~8.1.6]{vershynin2018high}, we obtain that
\begin{equation*}
% \label{eq:fixedbnd}
\mathbb{P}\left(
\sup_{\mb u\in {B}_2^p}|Z_{\mb u}|
>
\frac{C_1\sigma\sqrt{1 + \epsilon}}{\sqrt{n}}\left(\int_{0}^{\infty}\sqrt{\log N(B_2^p, \norm{\cdot}_2,\eta)} d\eta+\sqrt{\log(1/\delta)}\right) 
\right) \leq \delta. 
\end{equation*} 
By the elementary upper bound on the covering number $N(B_2^p, \norm{\cdot}_2,\eta)\leq\left(3/\eta\right)^p$ (e.g. see \cite[Example~8.1.11]{vershynin2018high}) and the definition of (A) in \eqref{eq:bnd_noise}, we have
\begin{equation}
\label{eq:bnd_u_dudley}
(\mathrm{A})
\lesssim
\sqrt{\frac{\sigma^2(1 + \epsilon)(d+\log(1/\delta))}{n}},
\end{equation}
holds with probability $1-\delta/3$. 
Then we apply the union bound over $\mathcal{C}_j \in \mathcal{P}_{k-1}$. 
It follows from \eqref{eq:bnd_growth_intersec} that 
\[
\sup_{\mathcal{C}_j \in \mathcal{P}_{k-1}}(\mathrm{A})
\lesssim
\sqrt{\frac{\sigma^2(1 + \epsilon)(\log(1/\delta)+kd\log(n/d))}{n}}
\] 
holds with probability $1-\delta/9$. 

Next we derive an upper bound on (B) in \eqref{eq:bnd_noise}. 
Note that (B) is rewritten as the absolute value of 
\[
\varrho = \frac{1}{n} \sum_{i=1}^n z_i \bbone_{\{\mb x_i \in \mathcal{C}_j\}}.
\]
Conditioned on $(\mb x_i)_{i=1}^n$ satisfying \eqref{eq:bnd_noise_event2}, $\varrho$ is a sub-Gaussian random variable that satisfies $\mathbb{E} \varrho = 0$ and 
\[
\mathbb{E} \varrho^2 = \frac{\sigma^2}{n} \cdot \left( \frac{1}{n} \sum_{i=1}^n \bbone_{\{\mb x_i \in \mathcal{C}_j\}} \right) 
\leq \frac{\sigma^2(\mathbb{P}(\mb x \in \mathcal{C}_j) + \epsilon)}{n}. 
\]
The standard sub-Gaussian tail bound implies
\[
\mathbb{P}\left(
|\varrho| > \sqrt{\frac{C_2 \sigma^2(\mathbb{P}(\mb x \in \mathcal{C}_j) + \epsilon) \log(1/\delta)}{n}}
\right) \leq \delta.
\]
By taking the union bound over $\mathcal{C}_j \in \mathcal{P}_{k-1}$ and utilizing the inequality in \eqref{eq:bnd_growth_intersec}, we obtain that
\begin{align}
\sup_{\mathcal{C}_j \in \mathcal{P}_{k-1}}(\mathrm{B}) &\lesssim \sqrt{\frac{\sigma^2(\mathbb{P}(\mb x \in \mathcal{C}_j) + \epsilon) \left(kd\log(n/d)+\log(1/\delta)\right)}{n}}\nonumber\\
&\leq\sqrt{\frac{\sigma^2(1 + \epsilon) \left(kd\log(n/d)+\log(1/\delta)\right)}{n}}\label{eq:supB}
\end{align}
holds with probability $1-\delta/9$. 

Finally it remains to show that \eqref{eq:bnd_noise_event1} and \eqref{eq:bnd_noise_event2} hold with probability $1-\delta/3$ for $\epsilon$ satisfying
\[
\epsilon \lesssim \sqrt{\frac{kp(\log(n/d) + \log(1/\delta))}{n}}.
\]
This is obtained as a direct consequence of Lemmas~\ref{lem:expect0} and \ref{lem:uniform}. 
One can choose the absolute constant $C$ in \eqref{eq:samplcomp_noise} large enough so that $\epsilon<1$. 
Then the parameter $\epsilon$ in \eqref{eq:bnd_u_dudley} and \eqref{eq:supB} will be dropped. 
This completes the proof.

\section{Proof of Theorem~\ref{thm:main_mini_batch}}
\label{sec:proof:minibatch}

The proof will be similar to that for Theorem~\ref{thm:main_noise}. 
We will focus on the distinction due to the modification of the algorithm with random sampling. 
The partial subgradient in the update for the mini-batch stochastic gradient descent algorithm is given by
\[
\frac{1}{m}\sum_{i\in I_t}\nabla_{\mb \beta_l} \ell_{i}(\mb \beta^t)=\frac{1}{m}\sum_{i\in I_t}\underbrace{\bbone_{\{\mb x_{i}\in\mathcal{C}_l\}}\left(\max_{j\in[k]}\langle\mb \xi_{i},\mb \beta_j^t\rangle-\max_{j\in[k]}\langle\mb \xi_{i},\mb \beta_j^\star\rangle\right)\mb \xi_{i}}_{\nabla_{\mb \beta_{l}}\ell_i^{\mathrm{clean}}(\mb \beta^t)}-\frac{1}{m}\sum_{i\in I_t}\underbrace{z_i\bbone_{\{\mb x_i\in\mathcal{C}_l\}}\mb \xi_i}_{\nabla_{\mb \beta_l}\ell_i^{\mathrm{noise}}(\mb \beta^t)},\]
where $\mathcal{C}_1, \dots, \mathcal{C}_k$ are determined by $\mb \beta^t$ as in \eqref{eq:def_calCj}.

As shown in \Cref{sec:thmproof}, \eqref{eq:samplcomp_noise} invokes Lemma~\ref{lem:lwb_gradient} and hence \eqref{eq:res:lem:lwb_gradient} holds with probability $1-\delta/3$. Next, we show that under the condition \eqref{eq:samplcomp_noise}, the statements of the following lemma hold with probability $1-2\delta/3$. The proof is provided in Appendix \ref{lem:proof:normbnd_minibatch}.

\begin{lemma}
\label{lem:upbnd_normgrad_minibatch}
Suppose that the hypothesis of Theorem~\ref{thm:main_mini_batch} holds. 
If \eqref{eq:samplcomp_noise} is satisfied, then the following statement holds with probability at least $1-2\delta/3$: For all $j \in [k]$, $\mb \beta^\star \in \mathbb{R}^{k(d+1)}$, and $\mb \beta^t \in \mathcal{N}(\mb \beta^\star)$, we have
\begin{equation}
\label{eq:res:lem:upbnd_normgrad_minibatch}
\begin{aligned}
&\E_{I_t}\left\| \frac{1}{m}\sum_{i\in I_t}\nabla_{\mb \beta_j}\ell_{i}^{\mathrm{clean}}(\mb \beta^t)\right\|_2^2
\lesssim \\
& \left(1\vee\frac{d+\log(n/\delta)}{m}\right)\left(\left(\sqrt{\pi_{\max}}+\pi_{\min}^{1+\zeta^{-1}}\right)\left\|\mb \beta_j^t-\mb \beta_j^\star\right\|_2^2 +\frac{\pi_{\min}^{1+\zeta^{-1}}}{k} \sum_{j':j'\neq j}\left\|\mb v_{j,j'}^t-\mb v_{j,j'}^\star\right\|_2^2\right),
\end{aligned}
\end{equation}
and
\begin{equation}
\label{eq:res:lem:upbnd_noisegrad_minibatch}
\begin{aligned}
{\E_{I_t}\left\|
\frac{1}{m}\sum_{i\in I_t}\nabla_{\mb \beta_j}\ell_i^{\mathrm{noise}}(\mb \beta^t)\right\|_2^2}\lesssim\sigma^2 \Bigg(\frac{d+\log(n/\delta)}{m}\vee\frac{kd\log(n/d) + \log(1/\delta)}{n}\Bigg).
\end{aligned}
\end{equation}
\end{lemma}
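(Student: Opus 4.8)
The plan is to exploit that $I_t$ comprises $m$ indices drawn independently and uniformly from $[n]$, so that $\E_{I_t}$ of any per-sample average coincides with the empirical average over $[n]$ and the second moment splits into a squared-mean term and a variance term. Writing $g_i := \nabla_{\mb \beta_j}\ell_i^{\mathrm{clean}}(\mb \beta^t)$, which is deterministic once the data are fixed, independence of the sampled indices yields
\begin{equation*}
\E_{I_t}\left\|\frac{1}{m}\sum_{i\in I_t} g_i\right\|_2^2
= \left\|\frac{1}{n}\sum_{i=1}^n g_i\right\|_2^2
+ \frac{1}{m}\left(\frac{1}{n}\sum_{i=1}^n\|g_i\|_2^2 - \left\|\frac{1}{n}\sum_{i=1}^n g_i\right\|_2^2\right)
\leq \left\|\nabla_{\mb \beta_j}\ell^{\mathrm{clean}}(\mb \beta^t)\right\|_2^2 + \frac{1}{m}\cdot\frac{1}{n}\sum_{i=1}^n\|g_i\|_2^2,
\end{equation*}
because $\frac{1}{n}\sum_i g_i$ equals the full-batch gradient $\nabla_{\mb \beta_j}\ell^{\mathrm{clean}}(\mb \beta^t)$. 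The identical decomposition applies to the noise gradient with $g_i$ replaced by $\nabla_{\mb \beta_j}\ell_i^{\mathrm{noise}}(\mb \beta^t)$. I would carry out the argument on the same high-probability event used in the proof of \eqref{eq:res:lem:upbnd_normgrad}, on which \eqref{eq:bnd_vjj'}, \eqref{eq:reslt_A}, and \eqref{eq:eigenbound2} are available, together with the additional event $\max_{i\in[n]}\|\mb \xi_i\|_2^2 \lesssim d + \log(n/\delta)$, which follows from a union bound over the $n$ sub-Gaussian samples (this union bound is the source of the $\log(n/\delta)$, as opposed to $\log(1/\delta)$, in the statement).

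For \eqref{eq:res:lem:upbnd_normgrad_minibatch}, the squared-mean term is bounded directly by \eqref{eq:res:lem:upbnd_normgrad}; since $\pi_{\max}\le\sqrt{\pi_{\max}}$ and $\pi_{\min}^{2(1+\zeta^{-1})}\le\pi_{\min}^{1+\zeta^{-1}}$, this term is absorbed by the factor $1$ in $\left(1\vee\frac{d+\log(n/\delta)}{m}\right)$. For the variance term, I would first extract $\max_i\|\mb \xi_i\|_2^2\lesssim d+\log(n/\delta)$ from $\|g_i\|_2^2 = \bbone_{\{\mb x_i\in\mathcal{C}_j\}}(\langle\mb \xi_i,\mb \beta_j^t\rangle - \max_{j'}\langle\mb \xi_i,\mb \beta_{j'}^\star\rangle)^2\|\mb \xi_i\|_2^2$, leaving the scalar quadratic form $\frac{1}{n}\sum_i\bbone_{\{\mb x_i\in\mathcal{C}_j\}}(\langle\mb \xi_i,\mb \beta_j^t\rangle-\max_{j'}\langle\mb \xi_i,\mb \beta_{j'}^\star\rangle)^2$. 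Partitioning $\mathcal{C}_j$ by the ground-truth cells $\mathcal{C}_{j'}^\star$ turns this into $\frac{1}{n}\sum_{j'}\sum_i\bbone_{\{\mb x_i\in\mathcal{C}_j\cap\mathcal{C}_{j'}^\star\}}\langle\mb \xi_i,\mb \beta_j^t-\mb \beta_{j'}^\star\rangle^2$. The diagonal term $j'=j$ is at most $\|\frac{1}{n}\sum_{i:\mb x_i\in\mathcal{C}_j^\star}\mb \xi_i\mb \xi_i^\top\|\,\|\mb \beta_j^t-\mb \beta_j^\star\|_2^2\lesssim\sqrt{\pi_{\max}}\,\|\mb \beta_j^t-\mb \beta_j^\star\|_2^2$ by \eqref{eq:reslt_A}; for $j'\neq j$ I would split $\mb \beta_j^t-\mb \beta_{j'}^\star = (\mb \beta_j^t-\mb \beta_j^\star)+\mb v_{j,j'}^\star$, bound the first piece via \eqref{eq:eigenbound2} (giving $\pi_{\min}^{1+\zeta^{-1}}\|\mb \beta_j^t-\mb \beta_j^\star\|_2^2$ after summing the $k-1$ cells) and the second via \eqref{eq:bnd_vjj'} (giving $\frac{\pi_{\min}^{1+\zeta^{-1}}}{k}\sum_{j'\neq j}\|\mb v_{j,j'}^t-\mb v_{j,j'}^\star\|_2^2$). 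Crucially, here the operator-norm bounds enter to the first power rather than squared as in the full-batch estimate, which is exactly what produces the square-root exponents $\sqrt{\pi_{\max}}$ and $\pi_{\min}^{1+\zeta^{-1}}$.

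For \eqref{eq:res:lem:upbnd_noisegrad_minibatch}, the squared-mean term is bounded by \eqref{eq:res:lem:upbnd_noisegrad}, contributing the $\frac{kd\log(n/d)+\log(1/\delta)}{n}$ branch of the maximum. For the variance term, $\|\nabla_{\mb \beta_j}\ell_i^{\mathrm{noise}}(\mb \beta^t)\|_2^2 = z_i^2\bbone_{\{\mb x_i\in\mathcal{C}_j\}}\|\mb \xi_i\|_2^2$, so extracting $\max_i\|\mb \xi_i\|_2^2\lesssim d+\log(n/\delta)$ and using $\frac{1}{n}\sum_i z_i^2\lesssim\sigma^2$ (sub-exponential/Bernstein concentration of the $\sigma$-sub-Gaussian noise) gives $\frac{1}{m}\cdot\frac{1}{n}\sum_i\|\nabla_{\mb \beta_j}\ell_i^{\mathrm{noise}}(\mb \beta^t)\|_2^2\lesssim\sigma^2\frac{d+\log(n/\delta)}{m}$, the other branch of the maximum. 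Combining the two branches yields the claim. The main obstacle I anticipate is the variance term of the clean gradient: unlike the full-batch quantity, $\|g_i\|_2^2$ carries an extra factor $\|\mb \xi_i\|_2^2$, and one must both control $\max_i\|\mb \xi_i\|_2^2$ uniformly (forcing the $\log(n/\delta)$) and track the quadratic form so that each operator norm contributes only its square root, so that the coefficients match $\sqrt{\pi_{\max}}$ and $\pi_{\min}^{1+\zeta^{-1}}$ rather than their squares. The rest of the argument reuses the deterministic estimates already established in the proof of Lemma~\ref{lem:lwb_gradient}, so no genuinely new concentration inequality beyond the elementary bound on $\max_i\|\mb \xi_i\|_2^2$ and the concentration of $\frac{1}{n}\sum_i z_i^2$ is required.
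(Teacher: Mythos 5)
Your proposal is correct and follows essentially the same route as the paper's proof: the mean-plus-variance decomposition you use is algebraically identical to the paper's splitting of $\E_{I_t}\|\frac{1}{m}\sum_{i\in I_t}g_i\|_2^2$ into $\frac{1}{m}\E_{i_1}\|g_{i_1}\|_2^2 + \frac{m-1}{m}\|\nabla_{\mb\beta_j}\ell^{\mathrm{clean}}(\mb\beta^t)\|_2^2$, the per-sample second moment is handled the same way (extracting $\max_i\|\mb\xi_i\|_2^2\lesssim d+\log(n/\delta)$, partitioning by the cells $\mathcal{C}_j\cap\mathcal{C}_{j'}^\star$, and invoking \eqref{eq:reslt_A}, \eqref{eq:eigenbound2}, and \eqref{eq:bnd_vjj'} so the operator norms enter linearly, which yields $\sqrt{\pi_{\max}}$ and $\pi_{\min}^{1+\zeta^{-1}}$), and the noise term is treated identically via \eqref{eq:res:lem:upbnd_noisegrad} plus Bernstein for $\frac{1}{n}\sum_i z_i^2$. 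The only difference is cosmetic: you split $\mb\beta_j^t-\mb\beta_{j'}^\star$ inside each cell after partitioning, whereas the paper splits the gradient vector first and then partitions, but both reduce to the same three estimates.
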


% Note that the assumption in \eqref{eq:samplcomp_noise} with $C = 3C_2$ implies that \eqref{eq:res:lem:upbnd_normgrad_minibatch} holds with probability $1-\delta/2$. 

% \begin{lemma}
% \label{lem:noisenormbnd_minibatch}
% We choose a random subset $I\subset [n],\, |I|=m\ll n$ by uniform distribution with replacement
% There exists an absolute constant $C_5$, for which it holds with probability at least $1-\delta/3$ that if $n$ satisfies \eqref{eq:samplcomp_noise},
% \begin{equation}
% \label{eq:res:lem:upbnd_noisegrad_minibatch}
% \begin{aligned}
% {\E_I\left\|
% \frac{1}{m}\sum_{i\in I}\nabla_{\mb \beta_j}\ell_i^{\mathrm{noise}}(\mb \beta)\right\|_2^2}\leq C_5^2\sigma^2 \max\Bigg(\frac{d+\log(n/\delta)}{m},\frac{kd\log(n/d) + \log(1/\delta)}{n}\Bigg),
% \end{aligned}
% \end{equation}
% holds for all $j \in [k]$ and for all $\mb \beta \in \mathbb{R}^{k(d+1)}$.
% \end{lemma} 

Then we show that the assertion of the theorem follows from \eqref{eq:res:lem:lwb_gradient}, \eqref{eq:res:lem:upbnd_normgrad_minibatch}, and \eqref{eq:res:lem:upbnd_noisegrad_minibatch} via the following three steps.\\

\noindent\textbf{Step~1:} 
We show that every iterate remains within the neighborhood $\mathcal{N}(\mb \beta^\star)$ by the induction argument. Therefore, we illustrate that if we suppose $\mb \beta^t \in \mathcal{N}(\mb \beta^\star)$ holds for a fixed $t \in \mbb{N}$, we show $\mb \beta^{t+1} \in \mathcal{N}(\mb \beta^\star)$ in expectation. By the update rule of SGD with batch size $m$, the triangle inequality gives
\begin{align}
\E_{I_t}\|\mb \beta_j^{t+1}-\mb \beta_j^\star\|_2&\leq\underbrace{\E_{I_t}\left\|\mb \beta_j^{t}-\mu\frac{1}{m}\sum_{i\in I_t}\nabla_{\mb \beta_j}\ell_i^{\mathrm{clean}}(\mb \beta^t)-\mb \beta_j^\star\right\|_2}_{A_\mathrm{clean}}
+\underbrace{\mu\E_{I_t}\left\|\frac{1}{m}\sum_{i\in I_t}\nabla_{\mb \beta_j}\ell_i^{\mathrm{noise}}(\mb \beta^t)\right\|_2}_{A_\mathrm{noise}}\label{eq:eql_noise_minibatch}.
\end{align}
We will show that
\begin{equation}
\label{eq:objective_step1_minibatch}
\E_{I_t}\|\mb \beta_j^{t+1}-\mb \beta_j^\star\|_2\leq A_{\mathrm{clean}}+A_{\mathrm{noise}}\leq \kappa\rho,\quad\forall j\in[k].
\end{equation}
By applying Jensen's inequality, we can obtain an upper-bound $A_{\mathrm{clean}}$ in \eqref{eq:eql_noise_minibatch}:
\begin{align}
A_{\mathrm{clean}}^2
& \leq{\E_{I_t}\left\|\mb \beta_j^{t}-\mu\cdot\frac{1}{m}\sum_{i\in I_t}\nabla_{\mb \beta_j}\ell_{i}^{\mathrm{clean}}(\mb \beta^t)-\mb \beta_j^\star\right\|_2^2}\nonumber\\
&=\|\mb \beta_j^{t}-\mb \beta_j^\star\|_2^2-2\mu\E_{I_t}\left\langle\frac{1}{m}\sum_{i\in I_t}\nabla_{\mb \beta_j}\ell_{i}^{\mathrm{clean}}(\mb \beta^t)
,\mb \beta_j^{t}-\mb \beta_j^\star\right\rangle+\mu^2\E_{I_t}\left\|\frac{1}{m}\sum_{i\in I_t}\nabla_{\mb \beta_j}\ell_{i}(\mb \beta^t)\right\|_2^2. \label{eq:sub_bnd1_minibatch}
\end{align}
Due to the expectation, the second term in \eqref{eq:sub_bnd1_minibatch} simplifies to
\begin{equation}
\label{eq:apply_expectation}
\E_{I_t}\left\langle\frac{1}{m}\sum_{i\in I_t}\nabla_{\mb \beta_j}\ell_{i}^{\mathrm{clean}}(\mb \beta^t)
,\mb \beta_j^{t}-\mb \beta_j^\star\right\rangle=\langle\nabla_{\mb \beta_j}\ell^{\mathrm{clean}}(\mb \beta^t),\mb \beta_j^t-\mb \beta_j^\star\rangle,
\end{equation} 
where $\nabla_{\mb \beta_j}\ell^{\mathrm{clean}}(\mb \beta^t)$ is defined in \eqref{eq:gradient_clean_def}.
Then, \eqref{eq:res:lem:lwb_gradient} gives a lower bound on \eqref{eq:apply_expectation}. Furthermore, an upper bound on the third term in \eqref{eq:sub_bnd1_minibatch} is given by \eqref{eq:res:lem:upbnd_normgrad_minibatch}. Putting the bounds \eqref{eq:res:lem:lwb_gradient} and \eqref{eq:res:lem:upbnd_normgrad_minibatch} in \eqref{eq:sub_bnd1_minibatch} provides
\begin{align}
&A_{\mathrm{clean}}^2\leq\nonumber\\
&\left(1 - \frac{4}{\gamma}\left(\frac{1}{16}\right)^{1+\zeta^{-1}} \mu\pi_{\min}^{1+\zeta^{-1}} + C_1\mu^2\left(1 \vee \frac{d+\log(n/\delta)}{m}\right)\left(\sqrt{\pi_{\max}}+\pi_{\min}^{1+\zeta^{-1}}\right)\right)\|\mb \beta_j^t-\mb \beta_j^\star\|_2^2 \nonumber \\
&\quad+\left(\frac{\frac{2}{\gamma}\left(\frac{1}{16}^{1+\zeta^{-1}}\right)\mu \pi_{\mathrm{min}}^{1+\zeta^{-1}}}{5k}+C_1\left(1 \vee \frac{d+\log(n/\delta)}{m}\right)\frac{\mu^2 \pi_{\min}^{1+\zeta^{-1}}}{k}\right)\sum_{j'^*:j'\neq j}\left\|\mb v_{j,j}^t-\mb v_{j,j'}^\star\right\|_2^2. \label{eq:proof_rec_prox_ub1_minibatch}
\end{align} 
Let us choose the step size $\mu$ following
\begin{equation}
\label{eq:muchoose_minibatch}
\mu=\frac{\omega\pi_{\min}^{1+\zeta^{-1}}}{\tau}\cdot\left(1\wedge\frac{m}{d+\log(n/\delta)}\right)
\end{equation}
for a numerical constant $\omega$, which we specify later, and $\tau$ defined as 
\begin{equation}
\label{eq:choose_tau_minibatch}
\tau:=\sqrt{\pi_{\max}}+\pi_{\min}^{1+\zeta^{-1}}.
\end{equation}
Taking $\mu$ by \eqref{eq:muchoose_minibatch} and $\tau$ by \eqref{eq:choose_tau_minibatch} in \eqref{eq:proof_rec_prox_ub1_minibatch} yields
\begin{equation}
\label{eq:estimate_bnd1_minibatch}
\begin{aligned}
& A_{\mathrm{clean}}^2\\
&\leq\Bigg(1-\left(1\wedge\frac{m}{d+\log(n/\delta)}\right)\cdot\\
&\quad\quad\left(\frac{\frac{4}{\gamma}\left(\frac{1}{16}\right)^{1+\zeta^{-1}} \omega \pi_{\min}^{2(1+\zeta^{-1})}}{\tau}-\frac{C_1\omega^2\pi_{\min}^{2(1+\zeta^{-1})}\left(\sqrt{\pi_{\max}}+\pi_{\min}^{1+\zeta^{-1}}\right)}{\tau^2}\right)\Bigg)\|\mb \beta_j^t-\mb \beta_j^\star\|_2^2
\\
&+ \left(1\wedge\frac{m}{d+\log(n/\delta)}\right)\cdot\left(\frac{\frac{2}{\gamma}\left(\frac{1}{16}\right)^{1+\zeta^{-1}} \omega\pi_{\min}^{2(1+\zeta^{-1})}}{5 \tau k}+\frac{C_1 \omega^2\pi_{\min}^{3 (1+\zeta^{-1})}}{\tau^2 k}\right)\sum_{j':j'\neq j}\left\|\mb v_{j,j'}^t-\mb v_{j,j'}^\star\right\|_2^2 \\
&\leq\left(1-\left(1\wedge\frac{m}{d+\log(n/\delta)}\right)\cdot\left(\frac{\frac{4}{\gamma}\left(\frac{1}{16}\right)^{1+\zeta^{-1}} \omega \pi_{\min}^{2(1+\zeta^{-1})}}{\tau}-\frac{C_1\omega^2\pi_{\min}^{2(1+\zeta^{-1})}}{\tau}\right)\right)\|\mb \beta_j^t-\mb \beta_j^\star\|_2^2
\\
&+ \left(1\wedge\frac{m}{d+\log(n/\delta)}\right)\cdot\left(\frac{\frac{2}{\gamma}\left(\frac{1}{16}\right)^{1+\zeta^{-1}} \omega\pi_{\min}^{2(1+\zeta^{-1})}}{5 \tau }+\frac{C_1 \omega^2\pi_{\min}^{2(1+\zeta^{-1})}}{\tau}\right)\max_{j\neq j'}\left\|\mb v_{j,j'}^t-\mb v_{j,j'}^\star\right\|_2^2.
\end{aligned} 
\end{equation}
Due to $\mb \beta^t \in \mathcal{N}(\mb \beta^\star)$ defined in \eqref{eq:defnbr}, we have \eqref{eq:rec_cond0} and \eqref{eq:rec_cond} by \Cref{lem:triangle_claim}. Inserting \eqref{eq:rec_cond0} and \eqref{eq:rec_cond} into \eqref{eq:estimate_bnd1_minibatch} gives
\begin{align}
(\kappa\rho)^{-2} A_{\mathrm{clean}}^2&\leq 1-\frac{\pi_{\min}^{2(1+\zeta^{-1})}\omega}{\tau}\left(1\wedge\frac{m}{d+\log(n/\delta)}\right)\left(\frac{4}{\gamma}\left(\frac{1}{16}\right)^{1+\zeta^{-1}} \left(1-\frac{2}{5}\right)+C_1\omega \left(1+4\right) \right)\nonumber\\
&= 1-\frac{\pi_{\min}^{2(1+\zeta^{-1})}\omega}{\tau}\left(1\wedge\frac{m}{d+\log(n/\delta)}\right){\left(\frac{\frac{12}{\gamma}\left(\frac{1}{16}\right)^{1+\zeta^{-1}}}{5} +5\omega C_1 \right)}\nonumber\\
&\leq1-\frac{c_0\omega\pi_{\min}^{2(1+\zeta^{-1})}}{\tau}\left(1\wedge\frac{m}{d+\log(n/\delta)}\right),\label{eq:minibatch_Aclean_bound}
\end{align}
where $c_0$ is the numerical constant defined in \eqref{eq:estiamte_bnd2}. We represent \eqref{eq:minibatch_Aclean_bound} as
\begin{equation}
\label{eq:estiamte_bnd3_minibatch}
A_\mathrm{clean}^2
\leq (\kappa\rho)^{2} \left(1 - \frac{c_0\omega \pi_{\min}^{2(1+\zeta^{-1})}}{\tau}\cdot\left(1\wedge\frac{m}{d+\log(n/\delta)}\right)\right).
\end{equation} We note that by \eqref{eq:estiamte_bnd2}, $c_0$ is a positive absolute constant given $\gamma$ and $\zeta$. On the other hand, the choice of $\tau$ in \eqref{eq:choose_tau_minibatch} provides a bound
\[
\frac{\pi_{\min}^{2(1+\zeta^{-1})}}{\tau}=\frac{\pi_{\min}^{2(1+\zeta^{-1})}}{\sqrt{\pi_{\max}}+\pi_{\min}^{1+\zeta^{-1}}}<1.
\] Since $\left(1\wedge m/(d+\log(n/\delta)\right)<1$, one can set $\omega>0$ such that $\omega c_0<1$, which makes the upper bound in the right-hand side of \eqref{eq:estiamte_bnd3_minibatch} a positive scalar belonging in $(0,1)$.

By following the arguments in \eqref{eq:suffice_noise1} and \eqref{eq:suffice_noise2}, if 
\begin{equation}
\label{eq:suffice_noise1_minibatch}
A_{\mathrm{noise}}\leq\kappa\rho\left(\frac{c_0\omega\pi_{\min}^{2(1+\zeta^{-1})}}{2\tau}\right)\left(1\wedge\frac{m} {d+\log(n/\delta)}\right)
\end{equation} holds, we have
\begin{equation}
\label{eq:suffice_noise1_minibatch1}
A_{\mathrm{noise}}\leq\kappa\rho\left(1-\sqrt{1-\frac{c_0\omega\pi_{\min}^{2(1+\zeta^{-1})}}{\tau}\left(1\wedge\frac{m}{d+\log(n/\delta)}\right)}\right).
\end{equation} 
Since the upper bounds in \eqref{eq:estiamte_bnd3_minibatch} and  \eqref{eq:suffice_noise1_minibatch1} satisfies \eqref{eq:objective_step1_minibatch} it suffices to show \eqref{eq:suffice_noise1_minibatch}.

%  Thus, the upper bound in \eqref{eq:minibatch_Aclean_bound} implies 
% \begin{equation}
% \label{estiamte_bnd3_minibatch}
% A_\mathrm{clean}^2 \\
% \leq (\kappa\rho)^{2} \left(1 - \frac{c_0\omega \pi_{\min}^{2(1+\zeta^{-1})}}{\tau}\cdot\left(1\wedge\frac{m}{d+\log(n/\delta)}\right)\right).
% \end{equation}
% Plugging in \eqref{estiamte_bnd3_minibatch} into \eqref{eq:objective_step1_minibatch} gives a sufficient condition for \eqref{eq:objective_step1_minibatch} given by
% \[
% A_{\mathrm{noise}}\leq\kappa\rho\left(1-\sqrt{1-\frac{c_0\omega\pi_{\min}^{2(1+\zeta^{-1})}}{\tau}\left(1\wedge\frac{m}{d+\log(n/\delta)}\right)}\right).
% \]
% By the elementary inequality $1-\sqrt{1-\alpha}\geq\alpha/2$ for any $\alpha\in(0,1)$, it is sufficient to show 
% \begin{equation}
% \label{eq:suffice_noise1_minibatch}
% A_{\mathrm{noise}}\leq\kappa\rho\left(\frac{c_0\omega\pi_{\min}^{2(1+\zeta^{-1})}}{2\tau}\right)\min\left(1,\frac{m} {d+\log(n/\delta)}\right).
% \end{equation}
By \eqref{eq:res:lem:upbnd_noisegrad_minibatch}, we have
\[
\sqrt{\E_{I_t}\left\|\frac{1}{m}\sum_{i\in I_t}\nabla_{\mb \beta_j}\ell_{i}^{\mathrm{noise}}(\mb \beta^t)\right\|_2^2}\lesssim\sigma\sqrt{\left(\frac{d+\log(n/\delta)}{m}\vee
\frac{{kd\log(n/d) + \log(1/\delta)}}{{n}}\right)}
\]  for all $j\in[k]$.
After applying Jensen's inequality, we consider the choice of $\mu$ given in \eqref{eq:muchoose_minibatch}. Then, we have
\begin{equation}
\label{eq:Anoise_minibatch_bound}
\begin{aligned}
&A_{\mathrm{noise}}=\mu\E_{I_t}\left\|\frac{1}{m}\sum_{i\in I_t}\nabla_{\mb \beta_j}\ell_i^{\mathrm{noise}}(\mb \beta^t)\right\|_2\leq\mu\sqrt{\E_{I_t}\left\|\frac{1}{m}\sum_{i\in I_t}\nabla_{\mb \beta_j}\ell_i^{\mathrm{noise}}(\mb \beta^t)\right\|_2^2}\lesssim\\ &\frac{\sigma\omega\pi_{\min}^{1+\zeta^{-1}}}{\tau}\left(1\wedge\frac{m}{d+\log(n/\delta)}\right)\sqrt{\left(\frac{d+\log(n/\delta)}{m}\vee
\frac{{kd\log(n/d) + \log(1/\delta)}}{{n}}\right)}.
\end{aligned}
\end{equation}
Since \eqref{eq:samplcomp_noise} implies  \eqref{eq:sample_noise1}, we can choose a sufficiently large absolute constant $C>0$ in \eqref{eq:sample_noise1} such that \eqref{eq:sample_noise1} and \eqref{eq:Anoise_minibatch_bound} result in \eqref{eq:suffice_noise1_minibatch}. We complete the proof of induction argument in Step 1.

\noindent\textbf{Step~2:} 
In this step, we show that every iterate obeys
\begin{equation}
\begin{aligned}
\label{eq:recursion_decrease_minibatch}
&\E_{I_t}\left\|\mb \beta^{t+1}-\mb \beta^\star\right\|_2
\leq \\
&\sqrt{1-\nu} \left\|\mb \beta^t-\mb \beta^\star\right\|_2+C'\mu\sigma \sqrt{k} \cdot\left(\sqrt{\frac{d+\log(n/\delta)}{m}} \vee \sqrt{\frac{kd\log(n/d) + \log(1/\delta)}{n}}\right).
\end{aligned}
\end{equation}
In Step 1, we showed $\mb \beta^t \in \mathcal{N}(\mb \beta^{\star})$.
By following the argument \eqref{eq:eql_noise_minibatch}, we have
{
\begin{equation}
\begin{aligned}
\label{eq:updaterule_minibatch}
&\E_{I_t}\|\mb \beta^{t+1}-\mb \beta^\star\|_2
\leq{\E_{I_t}\left\|\mb \beta^t-\mu\frac{1}{m}\sum_{i\in I_t}\nabla_{\mb \beta}\ell_{i}^{\mathrm{clean}}(\mb \beta^t)-\mb \beta^\star\right\|_2}+{{\E_{I_t}\left\|\frac{1}{m}\sum_{i\in I}\nabla_{\mb \beta}\ell_i^{\mathrm{noise}}(\mb \beta^t)\right\|_2}}\\
&\leq\underbrace{\sqrt{\E_{I_t}\left\|\mb \beta^t-\mu\frac{1}{m}\sum_{i\in I_t}\nabla_{\mb \beta}\ell_{i}^{\mathrm{clean}}(\mb \beta^t)-\mb \beta^\star\right\|_2^2}}_{B_\mathrm{clean}}+\underbrace{\sqrt{\E_{I_t}\left\|\frac{1}{m}\sum_{i\in I}\nabla_{\mb \beta}\ell_i^{\mathrm{noise}}(\mb \beta^t)\right\|_2^2}}_{B_{\mathrm{noise}}},
\end{aligned}
\end{equation} where the last inequality holds by the Jensen's inequality.}
We first show an upper bound on $B_\mathrm{clean}$ in \eqref{eq:updaterule_minibatch}:
\begin{align}
\label{eq:recursion_decrease_clean_minibatch}
B_\mathrm{clean}^2
\leq (1-\nu) \sum_{j=1}^{k}\left\|\mb \beta_j^t-\mb \beta_j^\star\right\|_2^2.
\end{align}
By following the argument in \eqref{eq:noiseless_contraction}, \eqref{eq:recursion_decrease_clean_minibatch} holds if there exist constants $\mu, \lambda \in (0,1)$ such that for all  $\mb \beta^t \in \mathcal{N}(\mb \beta^{\star})$,
\begin{equation}
\label{eq:regulcond_maxlinear_minibatch}
\sum_{j=1}^{k}\E_{I_t}\left\langle\frac{1}{m}\sum_{i\in I_t}\nabla_{\mb \beta_j}\ell_{i}^{\mathrm{clean}}(\mb \beta^t),\mb \beta_j^t - \mb \beta_j^\star\right\rangle\geq\frac{\mu}{2}\sum_{j=1}^{k}\E_{I_t}\left\|\frac{1}{m}\sum_{i\in I_t}\nabla_{\mb \beta_j}\ell_{i}^{\mathrm{clean}}(\mb \beta^t)\right\|_2^2+\frac{\lambda}{2}\sum_{j=1}^{k}\|\mb \beta_j^t - \mb \beta_j^\star\|_2^2.
\end{equation}
Hence, we show \eqref{eq:regulcond_maxlinear_minibatch}.First, since \eqref{eq:res:lem:lwb_gradient} holds, \eqref{eq:lwb_result} holds. Also, the left-hand side in \eqref{eq:regulcond_maxlinear_minibatch} can be computed as \eqref{eq:apply_expectation}. Thus, by \eqref{eq:apply_expectation} and \eqref{eq:lwb_result}, we obtain a lower bound on the left-hand side of \eqref{eq:regulcond_maxlinear_minibatch}:
\begin{align}
\sum_{j=1}^k\E_{I_t}\left\langle\frac{1}{m}\sum_{i\in I_t}\nabla_{\mb \beta_j}\ell_{i}^{\mathrm{clean}}(\mb \beta^t),\mb \beta_j^t - \mb \beta_j^\star\right\rangle
\geq \frac{ \frac{6}{\gamma}\left(\frac{1}{16}\right)^{1+\zeta^{-1}}\pi_{\min}^{1+\zeta^{-1}}}{5} \sum_{j=1}^{k}\|\mb \beta_j^t-\mb \beta_j^\star\|_2^2.\label{eq:lwb_result_minibatch}
\end{align}

Furthermore, to obtain an upper bound on first term in the right-hand side of \eqref{eq:regulcond_maxlinear_minibatch}, applying \eqref{eq:res:lem:upbnd_normgrad_minibatch} with the elementary inequality $\|\mb a+\mb b\|_2^2\leq2\|\mb a\|_2^2+2\|\mb b\|_2^2$ provides
\begin{equation}
\label{eq:result_up_minibatch}
\begin{aligned}
\E_{I_t}\left\| \frac{1}{m}\sum_{i\in I_t}\nabla_{\mb \beta_j}\ell_{i}^{\mathrm{clean}}(\mb \beta^t)\right\|_2^2&\leq C_1\left(1\vee\frac{d+\log(n/\delta)}{m}\right)\Bigg(\left(\sqrt{\pi_{\max}}+\pi_{\min}^{1+\zeta^{-1}}\right)\|\mb \beta_j^t-\mb \beta_j^\star\|_2^2\\
&+\frac{2\pi_{\min}^{1+\zeta^{-1}}}{k}\sum_{j':j'\neq j}\left(\|\mb \beta_j^t-\mb \beta_j^\star\right\|_2^2+\left\|\mb \beta_{j'}^t-\mb \beta_{j'}^\star\|_2^2\right)\Bigg).
\end{aligned}
\end{equation} 
Taking summation on \eqref{eq:result_up_minibatch} over $j \in [k]$ yields  
{
\begin{equation}
\label{eq:RHSbnd1_minibatch}
\begin{aligned}
&\sum_{j=1}^k \E_{I_t}\left\| \frac{1}{m}\sum_{i\in I_t}\nabla_{\mb \beta_j}\ell_{i}^{\mathrm{clean}}(\mb \beta^t)\right\|_2^2\\
&\leq {C_1 \left(1\vee\frac{d+\log(n/\delta)}{m}\right)}\left( \sqrt{\pi_{\max}}+\pi_{\min}^{1+\zeta^{-1}} + {4\pi_{\min}^{1+\zeta^{-1}}} \right)
\sum_{j=1}^k \left\|\mb \beta_j^t-\mb \beta_j^\star\right\|_2^2.
\end{aligned}
\end{equation} }
Putting the bounds \eqref{eq:lwb_result_minibatch} and \eqref{eq:RHSbnd1_minibatch} in \eqref{eq:regulcond_maxlinear_minibatch} with $\mu$ chosen in \eqref{eq:muchoose_minibatch}, we have a sufficient condition for \eqref{eq:regulcond_maxlinear_minibatch}:
{
\begin{equation}
\label{eq:RClwup_minibatch}
\frac{ \frac{6}{\gamma}\left(\frac{1}{16}\right)^{1+\zeta^{-1}} \pi_{\min}^{1+\zeta^{-1}}}{5}
\geq
\frac{\omega\pi_{\min}^{1+\zeta^{-1}} C_1 \left( \sqrt{\pi_{\max}}+5\pi_{\min}^{1+\zeta^{-1}}\right)}{2\left( \sqrt{\pi_{\max}}+\pi_{\min}^{1+\zeta^{-1}}\right)}
+ \frac{\lambda}{2}.
\end{equation}}
\eqref{eq:RClwup_minibatch} is satisfied when we choose $\omega > 0$ small enough and $\lambda$ as in \eqref{eq:lambdachoose}. Hence, we have shown \eqref{eq:recursion_decrease_clean_minibatch} with $\nu=\mu\lambda$ where $\mu$ and $\lambda$ are chosen by \eqref{eq:muchoose_minibatch} and \eqref{eq:lambdachoose}.

Next, we bound $B_{\mathrm{noise}}$ in \eqref{eq:updaterule_minibatch}. By \eqref{eq:res:lem:upbnd_noisegrad_minibatch}, we obtain an upper bound on $B_{\mathrm{noise}}$:
\begin{equation}
\label{eq:noise_bnd_B_minibatch}
\begin{aligned}
B_\mathrm{noise}^2&=\mu^2\sum_{j=1}^{k}\E_{I_t}\left\|\frac{1}{m}\sum_{i\in I_t}\nabla_{\mb \beta_j}\ell_i^{\mathrm{noise}}(\mb \beta^t)\right\|_2^2\\ 
&\lesssim
k\mu^2\sigma^2\left(\frac{d+\log(n/\delta)}{m}\vee\frac{kd\log(n/d) + \log(1/\delta)}{n}\right).
\end{aligned}
\end{equation}
Finally, putting \eqref{eq:recursion_decrease_clean_minibatch} and \eqref{eq:noise_bnd_B_minibatch} in \eqref{eq:updaterule_minibatch} gives  \eqref{eq:recursion_decrease_minibatch}. We complete the proof of Step 2.\\

{
\noindent\textbf{Step~3:} 
We finish the proof of \Cref{thm:main_mini_batch} using the results demonstrated in Step 1 and Step 2. By substituting the expression $\nu=\mu\lambda$ , where we choose $\mu$ and $\lambda$ according to \eqref{eq:muchoose_minibatch} and \eqref{eq:lambdachoose} respectively, into \eqref{eq:recursion_decrease_minibatch}, we obtain
\begin{align*}
&\E_{I_t}\|\mb \beta^{t}-\mb \beta^\star\|_2\\
&\left({1-\mu\lambda}\right)^{t/2}\|\mb \beta^0-\mb \beta^\star\|_2+C_2\cdot \frac{\mu\sigma}{1-\sqrt{1-\mu\lambda}} \cdot \sqrt{k\cdot\left(\frac{d+\log(n/\delta)}{m}\vee\frac{kd\log(n/d)+\log(1/\delta)}{n}\right)}\\
&\overset{\mathrm{(a)}}{\leq}\left({1-\mu\lambda}\right)^{t/2}\|\mb \beta^0-\mb \beta^\star\|_2+C_2 \cdot \frac{2\sigma}{\lambda}\cdot \sqrt{k\cdot\left(\frac{d+\log(n/\delta)}{m}\vee\frac{kd\log(n/d)+\log(1/\delta)}{n}\right)}\\&
\overset{\mathrm{(b)}}{\leq}\left({1-\mu\lambda}\right)^{t/2}\|\mb \beta^0-\mb \beta^\star\|_2+C_3 \cdot \frac{\sigma}{\pi_{\max}}\cdot\sqrt{k\cdot\left(\frac{d+\log(n/\delta)}{m}\vee\frac{kd\log(n/d)+\log(1/\delta)}{n}\right)}\\&
\overset{\mathrm{(c)}}{\leq}\left({1-\mu\lambda}\right)^{t/2}\|\mb \beta^0-\mb \beta^\star\|_2+C_3 \cdot {\sigma k}\cdot \sqrt{k\cdot\left(\frac{d+\log(n/\delta)}{m}\vee\frac{kd\log(n/d)+\log(1/\delta)}{n}\right)}, 
\end{align*} 
where i) $\mathrm{(a)}$ follows from the inequality $\sqrt{1-t}<-t/2+1$ for any $t\in(0,1)$; ii) $\mathrm{(b)}$ holds by the choice of $\tau$ in \eqref{eq:choose_tau_minibatch}; iii) $\mathrm{(c)}$ is a result of $\pi_{\max}^{-1}\leq k$.}

\subsection{Proof of Lemma \ref{lem:upbnd_normgrad_minibatch}}
\label{lem:proof:normbnd_minibatch}

We will show that both \eqref{eq:res:lem:upbnd_normgrad_minibatch} and \eqref{eq:res:lem:upbnd_noisegrad_minibatch} hold with probability at least $1-\delta/3$. Furthermore, for simplicity, we proceed on the proofs using $\mb \beta$ and $\mb v_{j,j'}$ instead of using $\mb \beta^t$ and $\mb v_{j,j'}^t$ in the statements of \Cref{lem:upbnd_normgrad_minibatch}. Thus, we complete the assertions in \eqref{eq:res:lem:upbnd_normgrad_minibatch} and \eqref{eq:res:lem:upbnd_noisegrad_minibatch} by substituting $\mb \beta$ and $\mb v_{j,j'}$ with $\mb \beta^t$ and $\mb v_{j,j'}^t$ respectively.

\noindent\textbf{Proof of \eqref{eq:res:lem:upbnd_normgrad_minibatch}: }
We show that with high probability, \eqref{eq:res:lem:upbnd_normgrad_minibatch} holds if 
\begin{equation}
\label{eq:cond:lem:upbnd_normgrad_minibatch}
n \geq C_1 \left(\log(k/\delta) \vee d\log(n/d)\right)k^4\pi_{\min}^{-4(1+\zeta^{-1})},
\end{equation}. Note that \eqref{eq:samplcomp_noise} is a sufficient condition for \eqref{eq:cond:lem:upbnd_normgrad_minibatch}. We proceed with the proof under the following six events, each of which holds with probability at least $1-\delta/18$. First, by the proof of \eqref{eq:res:lem:upbnd_normgrad} in \Cref{sec:proof:lem:lwb_gradient}, \eqref{eq:cond:lem:upbnd_normgrad_minibatch} is a sufficient condition to invoke \eqref{eq:res:lem:upbnd_normgrad} with probability at least $1-\delta/18$. 
Next, by following the argument for \eqref{eq:hoff}, \eqref{eq:cond:lem:upbnd_normgrad_minibatch} is a sufficient condition to invoke \eqref{eq:hoff} with probability at least $1-\delta/18$. 
Furthermore, \eqref{eq:cond:lem:upbnd_normgrad_minibatch} implies \eqref{eq:samplecom_tailbounds} and is a sufficient condition to invoke Lemma~\ref{lem:bnd_partial} and Lemma~\ref{lem:upperbound} with probability at least $1-\delta/18$ respectively. Hence, by following the arguments for \eqref{eq:unif_conv}, \eqref{eq:tanver_bound}, and \eqref{eq:bnd_vjj'}, \eqref{eq:unif_conv}, \eqref{eq:tanver_bound}, and \eqref{eq:bnd_vjj'} hold with probability at least $1-\delta/18$ respectively. The last event is defined as 
\begin{equation}
\label{eq:lem:expect0}
\max_{i\in[n]}\left\|\mb \xi_i \mb \xi_i^\T\right\|\lesssim d+\log(n/\delta).
\end{equation}
By Lemma~\ref{lem:expect0} and the union bound over $i\in[n]$, \eqref{eq:lem:expect0} holds with probability at least $1-\delta/18$.

Since we have shown that \eqref{eq:res:lem:upbnd_normgrad}, \eqref{eq:hoff}, \eqref{eq:unif_conv}, \eqref{eq:tanver_bound}, \eqref{eq:bnd_vjj'}, and  \eqref{eq:lem:expect0} hold with probability at least $1-\delta/3$, we will move forward with the remainder of the proof by assuming those conditions are satisfied.

Let $\mb \beta^\star \in \mathbb{R}^{d+1}$, $\mb \beta \in \mathcal{N}(\mb \beta^\star)$, and $j \in [k]$ be arbitrarily fixed. By the argument in \cite[Equation~7]{ma2017power}, we decompose
\begin{equation}
\label{eq:decompose_norm_signle_gradient}
\E_{I}\left\| \frac{1}{m}\sum_{i\in I}\nabla_{\mb \beta_j}\ell_{i}^{\mathrm{clean}}(\mb \beta)\right\|_2^2=\underbrace{\frac{1}{m}\E_{i_1}\left\|\nabla_{\mb \beta_j}\ell_{i_1}^{\mathrm{clean}}(\mb \beta)\right\|_2^2}_{\mathrm{(A)}}+\underbrace{\frac{m-1}{m}\|\nabla_{\mb \beta_j}\ell^{\mathrm{clean}}(\mb \beta)\|_2^2}_{\mathrm{(B)}},
\end{equation} where we define $I:=\{i_1,\ldots,i_m\}\subset[n]$ and $\nabla_{\mb \beta_j}\ell^{\mathrm{clean}}(\mb \beta)$ in \eqref{eq:gradient_clean_def}.

Note that \eqref{eq:res:lem:upbnd_normgrad} gives an upper bound on (B):
\begin{equation}
\label{eq:minibatch:bnd_B}
\begin{aligned}
\mathrm{(B)}\lesssim \frac{m-1}{m}\left(\left({\pi_{\max}}+\pi_{\min}^{2(1+\zeta^{-1})}\right)\left\|\mb \beta_j-\mb \beta_j^\star\right\|_2^2 + \frac{\pi_{\min}^{2(1+\zeta^{-1})}}{k^2} \sum_{j':j'\neq j}\left\|\mb v_{j,j'}-\mb v_{j,j'}^\star\right\|_2^2\right).
\end{aligned} 
\end{equation} 

It remains to show the bound on $\mathrm{(A)}$. By following arguments \eqref{eq:grad_clean}, we decompose $\nabla_{\mb \beta_j}\ell_{i}^{\mathrm{clean}}(\mb \beta)$ following
\begin{equation}
\label{eq:grad_clean_minibatch}
\nabla_{\mb \beta_j}\ell_{i}^{\mathrm{clean}}(\mb \beta)= \bbone_{\left\{\mb x_{i}\in\mathcal{C}_j\right\}}\langle\mb \xi_{i},\mb \beta_j-\mb \beta_j^\star\rangle\mb \xi_{i}+\sum_{j':j'\neq j} \bbone_{\left\{\mb x_{i}\in \mathcal{C}_j \cap \mathcal{C}_{j'}^\star \right\}} \langle\mb \xi_{i},\mb \beta_j^\star-\mb \beta_{j'}^\star\rangle\mb \xi_{i},\quad\forall i\in[n].
\end{equation} 
Then it follows from \eqref{eq:grad_clean_minibatch} that for any $i\in[n]$,
\begin{align}
&\left\|\nabla_{\mb \beta_j}\ell_{i}^{\mathrm{clean}}(\mb \beta)\right\|_2^2 \nonumber \\
&\overset{\mathrm{(i)}}{\leq} 2 \left\|\bbone_{\left\{\mb x_{i}\in\mathcal{C}_j\right\}}\langle\mb \xi_{i},\mb \beta_j-\mb \beta_j^\star\rangle\mb \xi_{i}\right\|_2^2
+ 2 \left\|\sum_{j':j'\neq j}  \bbone_{\left\{\mb x_{i}\in \mathcal{C}_j \cap \mathcal{C}_{j'}^\star \right\}} \langle\mb \xi_{i},\mb \beta_j^\star-\mb \beta_{j'}^\star\rangle\mb \xi_{i}\right\|_2^2 \nonumber \\
&\overset{\mathrm{(ii)}}{=} 2 \cdot \norm{ \mb \xi_{i} \mb \xi_{i}^\top}\bbone_{\left\{\mb x_{i}\in\mathcal{C}_j\right\}} \langle\mb \xi_{i},\mb \beta_j-\mb \beta_j^\star\rangle^2
+ 2 \cdot \norm{\mb \xi_{i} \mb \xi_{i}^\top} \cdot \sum_{j':j'\neq j}  \bbone_{\left\{\mb x_{i}\in \mathcal{C}_j \cap \mathcal{C}_{j'}^\star \right\}} \langle\mb \xi_{i},\mb \beta_j^\star-\mb \beta_{j'}^\star\rangle^2 \nonumber \\
&\overset{\mathrm{(iii)}}{\lesssim} (d+\log(n/\delta)) \cdot\left( {\bbone_{\{\mb x_{i}\in\mathcal{C}_j\}} \langle\mb \xi_{i},\mb \beta_j-\mb \beta_j^\star\rangle^2}
+  { \sum_{j':j'\neq j}  \bbone_{\{\mb x_{i}\in \mathcal{C}_j \cap \mathcal{C}_{j'}^\star \}} \langle\mb \xi_{i},\mb \beta_j^\star-\mb \beta_{j'}^\star\rangle^2}\right),
\label{eq:decomposed_minibatch}
\end{align}
where (i) holds due to $\|\mb a+\mb b\|_2^2\leq2\|\mb a\|_2^2+2\|\mb b\|_2^2$; (ii) holds since $\mathcal{C}_j\cap \mathcal{C}_{l}^\star$ and $\mathcal{C}_j\cap \mathcal{C}_{l'}^\star$ are disjoint for any $l\neq l'\in[k]$; and (iii) holds by \eqref{eq:lem:expect0}.

Applying the expectation on \eqref{eq:decomposed_minibatch} yields
\begin{equation}
\label{eq:decomposed_AB_minibatch}
\begin{aligned}
&\E_{i_1}\left\|\nabla_{\mb \beta_j}\ell_{i_1}(\mb \beta)\right\|_2^2\lesssim \\
&\left(d+\log(n/\delta)\right) \cdot\left( \underbrace{\frac{1}{n}\sum_{i=1}^n \bbone_{\{\mb x_i\in\mathcal{C}_j\}} \langle\mb \xi_i,\mb \beta_j-\mb \beta_j^\star\rangle^2}_{\mathrm{(a)}}
+ \frac{1}{n} \underbrace{ \sum_{j':j'\neq j} \sum_{i=1}^n \bbone_{\{\mb x_i\in \mathcal{C}_j \cap \mathcal{C}_{j'}^\star \}} \langle\mb \xi_i,\mb \beta_j^\star-\mb \beta_{j'}^\star\rangle^2}_{\mathrm{(b)}}\right).
\end{aligned}
\end{equation}
An upper bound on (b) is provided by \eqref{eq:bnd_vjj'}. It remains to derive an upper bound on (a). 

The triangle inequality provides
\begin{equation}
\label{eq:ub_a_lem:normgrad_minibatch}
\mathrm{(a)} 
\leq \sum_{j'=1}^k \left\| \sum_{i=1}^{n}\bbone_{\{\mb x_i\in \mathcal{C}_j\cap \mathcal{C}_{j'}^\star\}}\mb \xi_i\mb \xi_i^\T \right\| \cdot \left\|\mb \beta_j-\mb \beta_j^\star\right\|_2^2
\end{equation}
For the summand indexed by $j'=j$, the set inclusion, $\mathcal{C}_j\cap \mathcal{C}_j^\star \subseteq \mathcal{C}_j^\star$ yields 
\begin{align*}
\sum_{i=1}^{n}\bbone_{\{\mb x_i\in \mathcal{C}_j\cap \mathcal{C}_j^\star\}}\mb \xi_i\mb \xi_i^\T
\preceq
\sum_{i=1}^{n}\bbone_{\{\mb x_i\in \mathcal{C}_j^\star\}}\mb \xi_i\mb \xi_i^\T.
\end{align*}
Therefore, by \eqref{eq:hoff} and \eqref{eq:tanver_bound}, we have  
\begin{equation}
\label{eq:reslt_A_minibatch}
\begin{aligned}
\left\| \frac{1}{n}
\sum_{i=1}^n \bbone_{\{\mb x_i \in\mathcal{C}_j^\star\}} \mb \xi_i\mb \xi_i^\T \right\|
&\leq \max_{\mathcal{I}:|\mathcal{I}|\leq2n\P(\mb x\in\mathcal{C}_j^\star)} \left\| \frac{1}{n}
\sum_{i\in\mathcal{I}}\mb 
\xi_i\mb \xi_i^\T\right\| \\
&\lesssim(\eta^2\vee1)\sqrt{\P(\mb x\in \mathcal{C}_j^\star)} \\
& \leq (\eta^2\vee1)\sqrt{\pi_{\max}},
\end{aligned}
\end{equation} 
where the last inequality holds by the definition of $\pi_{\max}$ in \eqref{}.
Similarly, by \eqref{eq:unif_conv} and \eqref{eq:tanver_bound}, we have
\begin{equation}
\label{eq:eigenbound2_minibatch}
\left\| \sum_{i=1}^{n}\bbone_{\{\mb x_i\in \mathcal{C}_j\cap \mathcal{C}_{j'}^\star\}}\mb \xi_i\mb \xi_i^\T\right\|
\lesssim(\eta^2\vee1)\sqrt{c}\left(\frac{\pi_{\min}^{1+\zeta^{-1}}}{k}\right), \quad \forall j' \neq j.
\end{equation}
Then by plugging in \eqref{eq:reslt_A_minibatch} and \eqref{eq:eigenbound2_minibatch} into \eqref{eq:ub_a_lem:normgrad_minibatch}, we obtain 
\begin{equation*}
% \label{eq:resultsmalla}
\mathrm{(a)} 
\lesssim\left(\sqrt{\pi_{\max}}+\pi_{\min}^{1+\zeta^{-1}}\right) \left\|\mb \beta_j-\mb \beta_j^\star\right\|_2^2.
\end{equation*} 
Finally, applying obtained upper bounds on (a) and (b) in \eqref{eq:decomposed_AB_minibatch} gives
\begin{equation}
\label{eq:resultlargeA}
\mathrm{(A)}\lesssim\frac{\left(d+\log(n/\delta)\right)}{m}\left(\left(\sqrt{\pi_{\max}}+\pi_{\min}^{(1+\zeta^{-1})}\right)\left\|\mb \beta_j-\mb \beta_j^\star\right\|_2^2 +\frac{\pi_{\min}^{(1+\zeta^{-1})}}{k} \sum_{j':j'\neq j}\left\|\mb v_{j,j'}-\mb v_{j,j'}^\star\right\|_2^2\right).
\end{equation} 
Putting \eqref{eq:minibatch:bnd_B} and \eqref{eq:resultlargeA} in \eqref{eq:decompose_norm_signle_gradient} completes the proof.

\noindent\textbf{Proof of \eqref{eq:res:lem:upbnd_noisegrad_minibatch}: }
We proceed with the proof under the following three events, each of which holds with probability at least $1-\delta/9$. First, \eqref{eq:samplcomp_noise} invokes \eqref{eq:res:lem:upbnd_noisegrad} with probability at least $1-\delta/9$. Next, by following the same argument in the proof of \eqref{eq:res:lem:upbnd_normgrad_minibatch}, \eqref{eq:lem:expect0} holds with probability at least $1-\delta/9$. The last event is the following:
\begin{equation}
\label{eq:bound_noisesquare}
\frac{1}{n}\sum_{i=1}^{n}z_i^2\leq\sigma^2\left(1+\sqrt{\frac{C\log(1/\delta)}{n}}\right).
\end{equation} Since $\{z_i\}_{i=1}^n$ are i.i.d $\sigma$-sub-Gaussian random variables, the Bernstein's inequality yields that \eqref{eq:bound_noisesquare} holds with probability at least $1-\delta/9$.

We have shown that \eqref{eq:res:lem:upbnd_noisegrad}, \eqref{eq:lem:expect0}, and \eqref{eq:bound_noisesquare} hold with probability at least $1-\delta/3$. For the remainder of the proof, we assume that those conditions are satisfied.

Then, by the argument in \cite[Equation~7]{ma2017power}, we decompose
\begin{equation}
\label{eq:decompose_norm_noise_gradient}
\E_{I}\left\| \frac{1}{m}\sum_{i\in I}\nabla_{\mb \beta_j}\ell_{i}^{\mathrm{noise}}(\mb \beta)\right\|_2^2=\underbrace{\frac{1}{m}\E_{i_1}\left\|\nabla_{\mb \beta_j}\ell_{i_1}^{\mathrm{noise}}(\mb \beta)\right\|_2^2}_{\mathrm{(A)}}+\underbrace{\frac{m-1}{m}\|\nabla_{\mb \beta_j}\ell^{\mathrm{noise}}(\mb \beta)\|_2^2}_{\mathrm{(B)}},
\end{equation} where we define $I:=\{i_1,\ldots,i_m\}\subset[n]$ and $\nabla_{\mb \beta_j}\ell^{\mathrm{noise}}(\mb \beta)$ in \eqref{eq:gradient_clean_def}.

\eqref{eq:res:lem:upbnd_noisegrad} gives an upper bound on (B):
\begin{equation}
\label{eq:minibatch:bnd_B_noise}
\begin{aligned}
{\mathrm{(B)}}\lesssim \frac{\sigma^2 {kd\log(n/d) + \log(k/\delta)}}{n}.
\end{aligned} 
\end{equation} 
The remaining step is to obtain a bound on (A). Since we have
\begin{equation*}
\begin{aligned}
\left\|\nabla_{\mb \beta_j}\ell_{i_1}^{\mathrm{noise}}(\mb \beta)\right\|_2^2\leq\|z_{i_1}\mb \xi_{i_1}\|_2^2\leq\|\mb \xi_{i_1}\mb \xi_{i_1}^\T\|z_{i_1}^2{\lesssim}{d+\log(n/\delta)}z_{i_1}^2,
\end{aligned}
\end{equation*} where the last inequality holds by \eqref{eq:lem:expect0},
applying the expectation and  \eqref{eq:bound_noisesquare} gives an upper bound on (A):
\begin{equation}
\label{eq:minibatch:bnd_A_noise}
\begin{aligned}
{\mathrm{(A)}}\lesssim{\frac{1}{n}\sum_{i=1}^n z_i^2\left(\frac{d+\log(n/\delta)}{m}\right)}&\lesssim\sigma^2{\left(1\vee\left({\frac{\log(1/\delta)}{n}}\right)^{1/2}\right)\left(\frac{d+\log(n/\delta)}{m}\right)}\\
&\leq\sigma^2{\left(\frac{d+\log(n/\delta)}{m}\right)},
\end{aligned}
\end{equation} where the last inequality hold by \eqref{eq:samplcomp_noise}.
Putting the results \eqref{eq:minibatch:bnd_B_noise} and  \eqref{eq:minibatch:bnd_A_noise} into \eqref{eq:decompose_norm_noise_gradient} reduces to \eqref{eq:res:lem:upbnd_noisegrad_minibatch}.

\section{Discussion on the proofs of \cite[Theorem~1]{ghosh2021max} and \cite[Theorem~1]{ghosh2020max}}
\label{sec:correctness}

In the proof of \cite[Theorem~1]{ghosh2021max}, they claimed that $n\gtrsim\delta^{-2}$ implies \cite[Equation~(45)]{ghosh2021max}. 
They showed that \cite[Equation~(45)]{ghosh2021max} follows from \cite[Lemmas~10 and 11]{ghosh2021max}. 
Their \cite[Lemma~10]{ghosh2021max} presents the concentration of the supremum of an empirical measure via the VC dimension and \cite[Lemma~11]{ghosh2021max} computes an upper bound on the VC dimension of the feasible set of the maximization. 
According to their proof argument, the number of observations $n$ should be proportional to the VC dimension $d\log(n/d)$ to obtain the concentration in \cite[Equation~(45)]{ghosh2021max}.
Their sufficient condition $n\gtrsim\delta^{-2}$ for \cite[Equation~(45)]{ghosh2021max} missed the dependence on the VC dimension. 
We suspect that this is a typo. 
While it does not ruin their main result, the sample complexity in \cite[Theorem~1]{ghosh2021max} might need to be corrected accordingly. 
Specifically, between \cite[Equation~(32) and (33)]{ghosh2021max}, the parameter $\delta$ in \cite[Lemma~6]{ghosh2021max} was set to $\delta=C k^{-2} \pi_{\min}^6$ to upper-bound the second summand in the right-hand side of {\cite[Equation~(32)]{ghosh2021max}}. 
Therefore, the corrected sample complexity of \cite[Lemma~6]{ghosh2021max} increases to $\tilde{O}(k^4d \pi_{\min}^{-12})$ so that it dominates the sample complexity for part (b) in \cite[Proposition~1]{ghosh2021max} ($n\gtrsim kd \pi_{\min}^{-3}$). 
%Unlike the original proof, the sample complexity by \cite[Lemma~6]{ghosh2021max} after this correction dominates the overall sample complexity of \cite[Proposition~1]{ghosh2021max}. 
Consequently, the sample complexity in \cite[Theorem~1]{ghosh2021max} will increase by a factor $k^3 \pi_{\min}^{-9}$. 

Next, we report another mistake in their analysis under the generalized covariate model \cite[Theorem~1]{ghosh2020max}. 
They mistakenly omitted the dependence of $\sigma$ in the sample complexity. A careful examination of their proof on page 48 in \cite{ghosh2019max} will reveal that they use the same technique as in their other analysis in the Gaussian covariates case \cite{ghosh2021max}. 
%in the presence of $\sigma$-sub-Gaussian noise, 
Therefore, we expect that their sample complexity should depend on the noise variance $\sigma^2$ to ensure that the next iterate belongs to the local neighborhood of the ground truth (refer to the proof of their Theorem 1 on page 1865 in \cite{ghosh2021max}).
\end{document}